\documentclass[twoside]{article}

% \usepackage{aistats2026}
% If your paper is accepted, change the options for the package
% aistats2026 as follows:
%
\usepackage[accepted]{aistats2026}
%
% This option will print headings for the title of your paper and
% headings for the authors names, plus a copyright note at the end of
% the first column of the first page.

% We also include a `preprint' option for non-anonymous preprints. 
% Change the options for the package aistats2026 as follows:
%
% \usepackage[preprint]{aistats2026}
%
% This option will print headings for the title of your paper and
% headings for the authors names, but does not print the copyright and 
% venue note at the end of the first column of the first page.

% If you set papersize explicitly, activate the following three lines:
%\special{papersize = 8.5in, 11in}
%\setlength{\pdfpageheight}{11in}
%\setlength{\pdfpagewidth}{8.5in}

% If you use the natbib package, activate the following three lines:
%\usepackage[round]{natbib}
%\renewcommand{\bibname}{References}
%\renewcommand{\bibsection}{\subsubsection*{\bibname}}

% If you use BibTeX in apalike style, activate the following line:
%\bibliographystyle{apalike}

\usepackage[round,authoryear]{natbib}
% This gives us:
% \citep{rasmussen2006} → (Rasmussen et al., 2006)
% \citet{rasmussen2006} → Rasmussen et al. (2006)

\usepackage{tcolorbox}
\usepackage{xcolor} % colors
\usepackage{tikz}
\usepackage{todonotes}
\usepackage{amsmath}
\usepackage{amssymb}
\usepackage{mathtools}
\usepackage{amsthm}
\usepackage{thm-restate}
\usepackage{booktabs}
\usepackage{subcaption}

\newcommand{\fcircle}[2][]{\tikz \fill[#1] (0,0) circle (#2);}

%%%%%%%%%%%%%%%%%%%%%%%%%%%%%%%%
% THEOREMS
%%%%%%%%%%%%%%%%%%%%%%%%%%%%%%%%
\theoremstyle{plain}
\newtheorem{theorem}{Theorem}[section]

\newtheorem{lemma}[theorem]{Lemma}
\newtheorem{corollary}[theorem]{Corollary}
\theoremstyle{definition}

\theoremstyle{remark}
\newtheorem{remark}[theorem]{Remark}

\DeclareMathOperator*{\lin}{lin}
\DeclareMathOperator*{\ntk}{ntk}
\DeclareMathOperator*{\std}{std}
\DeclareMathOperator*{\diag}{diag}
\newcommand{\norm}[1]{\left\lVert#1\right\rVert}

\begin{document}

\twocolumn[

\aistatstitle{A Gaussian Process View on Observation Noise and Initialization in Wide Neural Networks}

\runningtitle{A Gaussian Process View on Observation Noise and Initialization in Wide Neural Networks}
\runningauthor{Calvo-Ordoñez et al.}

\vspace{-0.2in}

\aistatsauthor{
\hspace{0.4in}Sergio Calvo-Ordoñez\textsuperscript{*,1,2,3,\ensuremath{\dagger}} \And
\hspace{0.6in}Jonathan Plenk\textsuperscript{*,1,2} \And
\hspace{0.4in}Richard Bergna\textsuperscript{4} \And
\hspace{0.35in}Álvaro Cartea\textsuperscript{1,2}
\AND
\hspace{0.35in}Jose Miguel Hernández-Lobato\textsuperscript{4} \And
\hspace{0.35in}Konstantina Palla\textsuperscript{5} \And
\hspace{0.35in}Kamil Ciosek\textsuperscript{5}
}

\vspace{0.05in}

\aistatsaddress{
\small{\textsuperscript{1}Mathematical Institute, University of Oxford}\\
\small{\textsuperscript{2}Oxford-Man Institute, University of Oxford}\\
\small{\textsuperscript{3}OATML, University of Oxford}\\
\small{\textsuperscript{4}Department of Engineering, University of Cambridge}\\
\small{\textsuperscript{5}Spotify}\\
\small{\textsuperscript{*}Equal Contribution,}
\small{\textsuperscript{$\dagger$}Work partially done during a Spotify internship}\\
}

] % end twocolumn

% \begin{abstract}
%   Performing gradient descent in a wide neural network is equivalent to computing the posterior mean of a Gaussian Process with the Neural Tangent Kernel (NTK-GP), for a specific choice of prior mean and with zero observation noise. However, existing formulations of this result have two limitations: i) the resultant NTK-GP assumes no noise in the observed target variables, which can result in suboptimal predictions with noisy data; ii) it is unclear how to extend the equivalence to an arbitrary prior mean, a crucial aspect of formulating a well-specified model. To address the first limitation, we introduce a regularizer into the neural network's training objective, formally showing its correspondence to incorporating observation noise into the NTK-GP model. To address the second, we introduce a \textit{shifted network} that enables arbitrary prior mean functions. This approach further allows us to obtain the posterior mean with gradient descent on a single neural network, without expensive ensembling or kernel matrix inversion. Our theoretical insights are validated empirically, with experiments exploring different values of observation noise, datasets, and network architectures. These results remove key obstacles that have limited the practical use of NTK-GP equivalence in applied Gaussian process modeling.
% \end{abstract}
\begin{abstract}
\vspace{-0.1in}
Performing gradient descent in a wide neural network is equivalent to computing the posterior mean of a Gaussian Process with the Neural Tangent Kernel (NTK-GP), for a specific prior mean and with zero observation noise. However, existing formulations have two limitations: (i) observation noise, since the NTK-GP assumes noiseless targets, leading to misspecification on noisy data; (ii) the equivalence does not extend to arbitrary prior means, which are essential for well-specified models. To address (i), we introduce a regularizer into the training objective, showing its correspondence to incorporating observation noise in the NTK-GP. To address (ii), we propose a \textit{shifted network} that enables arbitrary prior means and allows obtaining the posterior mean with gradient descent on a single network, without ensembling or kernel inversion. We validate our results with experiments across datasets and architectures, showing that this approach removes key obstacles to the practical use of NTK-GP equivalence in applied Gaussian process modeling.
\end{abstract}

\vspace{-0.2in}
\section{Introduction}
\vspace{-0.1in}

The connection between wide neural networks and Gaussian Processes (GPs) via the Neural Tangent Kernel (NTK) \citep{jacot2018neural} provides a powerful framework for understanding the training dynamics of deep learning. The NTK describes how predictions of a neural network evolve under gradient-based optimization in the infinite-width limit and has primarily been used to analyze generalization, convergence, and expressivity of neural networks from a theoretical perspective. While this line of work has yielded insights into training dynamics, its practical utility for GP inference remains limited. In particular, the standard NTK-GP correspondence, as established by \citet{lee2019wide}, connects the output of a wide neural network trained with gradient flow on mean squared error (MSE) to the posterior mean of a GP with the NTK as its kernel --- under restrictive conditions. Specifically, this result assumes (i) zero observation noise, leading to model misspecification when learning from real-world data, and (ii) a fixed prior mean given by a randomly initialized network, which is uninformative and not easily controllable. These constraints make the GP equivalence difficult to leverage in practice, as they limit the model’s ability to incorporate uncertainty and prior knowledge, two key concepts of principled Bayesian inference. In this work, we address these limitations and extend the theory in a way that enables practical posterior mean inference with NTK-GPs using a single neural network training run.

Concerning the first limitation, observation --- or aleatoric --- noise is a fundamental component of Gaussian Processes and other probabilistic models \citep{williams2006gaussian, saez2024neural, ordonezpartially} that represents observation uncertainty. Measurements in data are subject to precision limits, and labels can carry errors due to annotation noise or ambiguities \citep{kendall2017uncertainties}. By capturing this variability, aleatoric noise enables models to produce predictions that reflect the randomness of the observed process \citep{williams1998bayesian, bergna2024uncertainty}. In GPs, this is achieved through a variance term that accounts for label noise, contributing to robust and well-calibrated predictions.

\citet{hu2020simpleeffectiveregularizationmethods} tried to address the problem of observation noise by introducing a regularizer penalizing the distance between network parameters at time $t$, $\theta_t$, and their initialization, $\theta_0$, demonstrating its efficacy in noisy settings. Their analysis, however, relied on the NTK framework and assumed that the network remains in a linear regime throughout training --- a critical assumption that was left unproven. This oversight is significant, as the regularizer modifies the training dynamics and invalidates the application of previous results in \citet{lee2019wide}. Other recent works have used the regularizer proposed by \citet{hu2020simpleeffectiveregularizationmethods} to improve generalization and training stability. For instance, \citet{nitanda2020optimal} employed it to constrain networks near their initialization under the NTK regime, while \citet{suh2021non} extended this to deeper networks for recovering ground-truth functions from noisy data. \citet{he2020bayesian} explored the regularizer's role in Bayesian deep ensembles.

While these works demonstrate the utility of the regularizer, they all rely on the assumption that wide neural networks remain in a linear regime throughout training. This assumption is critical for leveraging the NTK to characterize the learning dynamics and derive theoretical guarantees, yet the validity of this assumption for the modified gradient flow has not been shown. This gap in the literature raises a key question: how does regularization affect the linearization of wide neural networks, and how does this connect to having a well-specified Bayesian model? In this work, we demonstrate that this regularizer not only preserves the linearity of neural networks but also introduces non-zero aleatoric noise into the NTK-GP mean posterior. We enable the NTK-GP posterior mean to reflect the noise present in real-world data, making it a properly specified Bayesian model. At the same time, this helps justify the generalization properties and benefits of regularization observed in previous literature.

To support inference with arbitrary prior means, we propose the use of a \textit{shifted network} during training. This approach provides a principled strategy to eliminate initialization randomness, ensuring deterministic convergence of a single \textit{shifted network} to the posterior mean of the defined NTK-GP prior.

Our contributions are summarized as follows:

\fcircle[fill=black]{2pt} We demonstrate that regularization in wide neural networks corresponds to aleatoric noise in NTK-GPs, correcting model misspecification.  

\fcircle[fill=black]{2pt} We prove that weight-space regularization ensures linear training dynamics under gradient flow and gradient descent.  

\fcircle[fill=black]{2pt} We introduce a shifted network framework, enabling arbitrary prior means and deterministic convergence to the NTK-GP posterior mean without ensembling or kernel inversion.  

\fcircle[fill=black]{2pt} We validate our theory with experiments on how architecture and regularization affect linearization and convergence.  

\section{Preliminaries}

\paragraph{Parametrizations: Standard versus NTK.} The choice of parametrization in neural networks determines how signals propagate and how gradients scale with the network width. Let $\phi:\mathbb{R}\to\mathbb{R}$ denote the activation function\footnote{We assume that $\phi$ and $\phi'$ are Lipschitz.}. In the standard parametrization, the layer outputs are defined (for $l=0,\ldots,L$) as:
\begin{equation}
h^{l+1} := W^{l+1}x^l + b^{l+1}, \quad x^{l+1} := \phi(h^{l+1}) \in \mathbb{R}^{n_{l+1}},
\end{equation}
where $W^{l+1} \in \mathbb{R}^{n_{l+1} \times n_{l}}$ and $b^{l+1} \in \mathbb{R}^{n_{l+1}}$ are the weights and biases, respectively. The parameter vector $\theta \in \mathbb{R}^p$ is defined by stacking them. Here, $n_0 := d$ is the dimension of the input $x^0=x$, and  $n_{L+1} := k$ is the dimension of the output $f(x,\theta) := h^{L+1}$. For simplicity we only consider $k=1$. The initial weights $\theta_0$ are i.i.d. $W^l_{0,ij} \sim \mathcal{N}(0, \frac{\sigma_{w,l}^2}{n_l})$, and the biases as $b^l_{0,i} \sim \mathcal{N}(0, \sigma_{b,l}^2)$\footnote{Note, that $W^1x^0$ sums over $d$ terms, and for $l\ge 1$, $W^{l+1}x^l$ sums over $n$ terms.}. Under this parametrization, the norm of the Jacobian diverges for width $n_l \to \infty$. Without loss of generality, we will assume $n_1 = \ldots = n_L =: n$.

In the NTK parametrization, the layer outputs are scaled as:
\begin{equation}
h^{l+1} := \frac{1}{\sqrt{n_l}} W^{l+1}x^l +  b^{l+1}, \quad x^{l+1} := \phi(h^{l+1}) \in \mathbb{R}^{n_{l+1}},
\end{equation}
where the weights and biases are initialized i.i.d. as $W^l_{0,ij} \sim \mathcal{N}(0, \sigma_{w,l}^2)$ and $b^l_{0,i} \sim \mathcal{N}(0, \sigma_{b,l}^2)$. This scaling ensures stability in both the forward and backward pass for $n_l \to \infty$.

In Appendix \ref{appendix: standard and NTK parametrization} we will precisely show the equivalence between both parametrizations if one chooses the correct learning rate. Further, we will show that using the same learning rate for each parameter in standard parametrization leads to the first layer and the biases not being trained in the infinite-width limit. 

\paragraph{Neural Tangent Kernel.} The NTK characterizes the evolution of wide neural network predictions as a linear model in function space. Given a neural network $f(x, \theta)$ parameterized by $\theta \in \mathbb{R}^p$, define the parameter-Jacobian in any $N$ points $\mathbf{x}_1,\ldots, \mathbf{x}_N\in\mathbb{R}^d$ as $J(\mathbf{x},\theta) := \frac{\partial f(\mathbf{x}, \theta)}{\partial \theta} \in \mathbb{R}^{N \times p}$. Under NTK parametrization, the empirical NTK at two sets of inputs $\mathbf{x}'_1,\ldots,\mathbf{x}'_{N'}$ and $\mathbf{x}_1,\ldots,\mathbf{x}_N$ is defined as:
\begin{equation}
\label{eq-ntk}
\hat{\mathbf{\Theta}}_{\mathbf{x}', \mathbf{x}} := J(\mathbf{x}',\theta_0)J(\mathbf{x},\theta_0)^\top \in \mathbb{R}^{N'\times N}.
\end{equation}
The empirical NTK depends on the randomly initialized $\theta_0$. Interestingly, as shown by \citet{jacot2018neural}, in the infinite width limit, the empirical NTK converges to a deterministic (i.e., independent of $\theta_0$) kernel $\mathbf{\Theta}$ (the analytical NTK) at initialization and remains constant during training (with unregularized gradient flow). A GP defined by the analytical NTK is referred to as the NTK-GP.

\section{Motivation}
\label{sec-motivation}
Our work builds on a well-known result  \citep{lee2019wide} which shows that, for any sufficiently\footnote{In our proofs, we make the notion of `sufficiently wide' formal.} wide neural network initialized to parameters $\theta_0$, performing gradient descent on the loss
\vspace{-0.05in}
\begin{gather}
\label{eq-loss-vanilla}
\mathcal{L}^0(\theta) :=
\frac{1}{2} \sum_{i=1}^N (f(\mathbf{x}_i,\theta) - \mathbf{y}_i)^2
\end{gather}
until convergence gives us a network with predictions arbitrarily close to those given by the closed-form formula
\vspace{-0.1in}
\begin{gather}
\label{eq-gp-nonoise-initprior}
    f(\mathbf{x'},\theta_0) + \hat{\mathbf{\Theta}}_{\mathbf{{x',x}}}\hat{\mathbf{\Theta}}_{\mathbf{{x,x}}}^{-1}(\mathbf{y}-f(\mathbf{x},\theta_0)).
\end{gather}
This is of special interest because Equation \eqref{eq-gp-nonoise-initprior} is equivalent to computing the posterior mean of a Gaussian Process with the kernel $\hat{\mathbf{\Theta}}$, zero observation noise, and prior mean $f(\mathbf{x},\theta_0)$. In other words, there is a clear link between the computationally convenient process of gradient descent and a principled, Bayesian approach to inference. 

While the connection between Equations \eqref{eq-loss-vanilla} and \eqref{eq-gp-nonoise-initprior} is interesting, it is by itself not very practical. This is both because we seldom want to do inference for a randomly generated prior mean and because observations in regression tasks are often perturbed by noise. In this paper, we thus ask if it is possible to extend this equivalence by allowing for nonzero observation noise and for prior means other than $f(\mathbf{x},\theta_0)$. This leads us to the following research question.

\vspace{0.05in}
\begin{tcolorbox}[width=0.49\textwidth, before skip=0.5em, after skip=0.5em]
\paragraph{Main Research Question.} Is there a loss function and setting such that performing gradient descent on that loss gives us a network with predictions
\begin{gather*}
    m(\mathbf{x}') + \hat{\mathbf{\Theta}}_{\mathbf{{x',x}}}\left(\hat{\mathbf{\Theta}}_{\mathbf{{x,x}}}+ \beta I\right)^{-1}(\mathbf{y}-m(\mathbf{x})),
\end{gather*} 
for an arbitrary prior mean $m(\mathbf{x})$ and for arbitrary values of observation noise $\beta$?\end{tcolorbox}
\vspace{0.05in}

We answer this question in the affirmative. Specifically, Section \ref{sec-on} addresses limitation (i), observation noise, by showing that adding a regularizer to the training objective induces aleatoric noise $\beta$ in the NTK-GP posterior mean (Theorems \ref{thrm: first} and \ref{thrm: 3.4}), while Section \ref{sec-mu} addresses limitation (ii), arbitrary prior means, via a shifted network construction that enables user-specified priors (Theorem \ref{lemma-shifted-predictions}).

\section{Observation Noise through Regularized Gradient Descent}\label{sec-on}

The introduction of weight-space regularization not only impacts the generalization properties of wide neural networks \citep{hu2020simpleeffectiveregularizationmethods}, but also fundamentally alters their training dynamics. In the results proposed by \citet{lee2019wide} ---where the unregularized gradient flow is studied--- these dynamics are crucial to understanding how networks converge to solutions that (in the mean posterior sense) align with Bayesian inference. This section addresses limitation (i), observation noise, by analyzing the training dynamics of wide neural networks under regularized gradient flow, with a particular focus on how regularization affects network linearization during training. We demonstrate that the training trajectory stays arbitrarily close to its linearized counterpart, providing both theoretical justification and insights into how regularization modifies the gradient flow.

Define $f(\theta) := f(\mathbf{x},\theta)$, $g(\theta) := f(\mathbf{x},\theta) - \mathbf{y} \in \mathbb{R}^{N}$, $J(\theta) := J(\mathbf{x},\theta) \in \mathbb{R}^{N\times p}$ in the training points\footnote{We assume that $\mathbf{x}_i \neq \mathbf{x}_j$ for $i\neq j$.}.
We will consider (for NTK parametrization\footnote{See Appendix \ref{appendix: Standard param to NTK subappendix} for standard parametrization.}) the regularized training loss
\begin{align}\label{eq:loss}
    \mathcal{L}^{\beta}(\theta) 
    &:= \frac{1}{2} \sum_{i=1}^N (f(\mathbf{x}_i,\theta) - \mathbf{y}_i)^2 + \frac{1}{2}\beta \lVert \theta - \theta_0 \rVert_2^2 \\
    &= \frac{1}{2} \lVert g(\theta) \rVert_2^2 + \frac{1}{2} \beta \lVert \theta-\theta_0 \rVert_2^2. 
\end{align}
The gradient of this loss is given by
\begin{equation}
    \nabla_{\theta} \mathcal{L}^{\beta}(\theta) = J(\theta)^\top g(\theta) + \beta (\theta - \theta_0).
\end{equation}
We study the training dynamics under the regularized gradient flow\footnote{In Appendix \ref{appendix: gradient descent}, we state equivalent results for gradient descent with small enough learning rates. These will involve geometric sums instead of the exponential.}\footnote{We will prove that this ODE has a unique solution under our assumptions.} $\frac{d\theta_t}{dt} = -\eta \nabla_{\theta} \mathcal{L}^{\beta}(\theta_t).$
By the chain rule, $\frac{df(x,\theta_t)}{dt} = J(x,\theta_t)\frac{d\theta_t}{dt}$, and thus the training dynamics of the network are
{\small
\begin{equation}
    \frac{df(x,\theta_t)}{dt} = -\eta \left(J(x,\theta_t)J(\theta_t)^{\top}g(\theta_t) + \beta J(x,\theta_t)(\theta_t-\theta_0) \right).
\end{equation}
}

For the sake of readability, we omit the dependence of $\theta_t$ on $\theta_0$ and $\beta$. To gain insights into the role of regularization, we first analyze the regularized gradient flow for the linearized network
\begin{equation}
    f^{\lin}_{\theta_0}(x,\theta) := f(x,\theta_0) + J(x,\theta_0)(\theta-\theta_0).
\end{equation}
This is a linear ODE and hence has a closed-form solution. We formalize this in the theorem below.
\begin{theorem}\label{thrm: first}
 For training time $t\to\infty$, at any point $\mathbf{x'}$,

\vspace{-0.15in}
\begin{small}
\begin{equation}\label{eq: thrm-4.1}
    f^{\lin}_{\theta_0}(\mathbf{x'},\theta_\infty) = f(\mathbf{x'},\theta_0) + \hat{\mathbf{\Theta}}_{\mathbf{{x',x}}}\left(\hat{\mathbf{\Theta}}_{\mathbf{{x,x}}}+ \beta I\right)^{-1}(\mathbf{y}-f(\mathbf{x},\theta_0)).
\end{equation}
\end{small}
\vspace{-0.2in}
\end{theorem}
We derive this closed-form solution (for gradient flow and gradient descent) in Appendix \ref{appendix: regularized gf/gd for linearized network}. Note that the statements are high-probability results with respect to the random initialization: for any $\delta_0 > 0$, the probability can be made at least $1-\delta_0$ by choosing the width $n$ sufficiently large. The required scaling of $n$ with $\delta_0$ is made precise in Appendices \ref{app: jacobian-app} and \ref{appendix: Proof regularized gradient flow}.

Moreover, if we consider the network initialization as a random variable, the trained linearized network converges to a normal distribution for layer width $n\to\infty$. Its mean corresponds to the posterior mean of the NTK-GP with prior mean $0$. However, similarly to \citet{lee2019wide, he2020bayesian}, the covariance cannot be interpreted as the posterior covariance of an NTK-GP. See Appendix \ref{appendix: gaussian-distribution-convergence} for the exact formulas.

In the remainder of this section, we show that, with high probability over initialization, the training dynamics of the network (at any test point $\mathbf{x'}$) closely follow the output of the linearized network for large enough hidden layer width $n$.
First, we leverage the fact that the Jacobian is locally bounded and locally Lipschitz, with a Lipschitz constant of $O\left(\frac{(\log n)^c}{\sqrt{n}}\right)$ (see Appendix \ref{app: jacobian-app} for details). Next, we prove that the distance between the parameters and their initialization $\lVert \theta_t - \theta_0 \rVert_2$ is $O(1)$\footnote{On average, this implies that the distance of individual parameters barely changes.}. This step is particularly challenging because proofs for $\beta = 0$ (in Eq. \eqref{eq:loss}) rely on the MSE $\lVert g(\theta_t)\rVert_2$ decaying exponentially to $0$. As this is not the case for $\beta > 0$, we instead show that $\lVert \nabla_{\theta} \mathcal{L}^{\beta}(\theta_t) \rVert_2$ decays to $0$ exponentially fast.

Even for $\beta = 0$, our analysis provides a generalization of previous proofs in the literature and presents further insights into why the network converges to a global minimizer, despite the Hessian of the loss having negative eigenvalues. Finally, we conclude that the regularized gradient flow remains arbitrarily close to its linearized counterpart, and hence the same holds for the network.

\subsection{Exponential Decay of the Regularized Gradient and Parameter Proximity to Initialization}
Building upon the established Lipschitz continuity of the Jacobian (in Appendix \ref{app: jacobian-app}), we analyze the effect of regularized gradient flow on the norm of the gradient and the parameters' closeness to their initialization. We begin by demonstrating that the norm of the gradient of the regularized loss decays exponentially over time. This result directly implies that the distance between the parameters and their initialization is bounded by a constant, i.e., $\lVert \theta_t - \theta_0 \rVert_2 = O(1)$\footnote{The bounding constant is independent of the outcome in the high probability event for which this holds.}. Furthermore, leveraging the Lipschitzness of the Jacobian established in Lemma \ref{lemma: Jacobian Lipschitz}, we show that the Jacobian remains close to its initial value, with deviations scaling as $O\left(\frac{(\log n)^c}{\sqrt{n}}\right)$.

\begin{restatable}{theorem}{gftheorempartone}
\label{theorem: Exponential decay gradient, parameters stay close}
Let $\beta \ge 0$. Let $\delta_0>0$ arbitrarily small. There are $K',K, R_0, c_{\beta} > 0$, such that for $n$ large enough, the following holds with probability of at least $1-\delta_0$ over random initialization, when applying regularized gradient flow with learning rate $\eta = \eta_0$:
\begin{equation}
    \norm{ \frac{d\theta_t}{dt}}_2 
    = \eta_0 \lVert \nabla_{\theta} \mathcal{L}^{\beta}(\theta_t) \rVert_2
    \le \eta_0 K R_0 e^{-\eta_0c_{\beta} t},
\end{equation}
\begin{equation}
    \lVert \theta_t - \theta_0 \rVert_2 
    \le \frac{KR_0}{c_{\beta}} \left(1 - e^{-\eta_0 c_{\beta} t} \right) < \frac{KR_0}{c_{\beta}} =: C,
\end{equation}
\begin{equation}
    \forall \lVert x \rVert_2 \le 1: \lVert J(x,\theta_t) - J(x,\theta_0) \rVert_2 \le \frac{(\log n)^c}{\sqrt{n}} K' C,
\end{equation}
\begin{equation}
    \lVert J(\theta_t) - J(\theta_0) \rVert_2 \le\frac{(\log n)^c}{\sqrt{n}}KC.
\end{equation}
\end{restatable}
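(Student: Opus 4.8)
The plan is to run a bootstrap (continuity) argument on $\norm{\theta_t - \theta_0}_2$, where the crucial departure from the $\beta=0$ analysis of \citet{lee2019wide} is to track the decay of the full regularized gradient $G_t := \nabla_\theta \mathcal{L}^\beta(\theta_t) = J(\theta_t)^\top g(\theta_t) + \beta(\theta_t-\theta_0)$ rather than the residual $g(\theta_t)$. For $\beta>0$ the residual does not vanish at the regularized minimizer, so the classical route (bounding $\norm{d\theta_t/dt}_2\le \norm{J}_2\norm{g}_2$ with $\norm{g}_2$ decaying exponentially) fails; but since gradient flow converges to a critical point, it is $G_t$ that decays, and $G_t = -\tfrac{1}{\eta_0}\tfrac{d\theta_t}{dt}$ directly controls the displacement. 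I would first fix a high-probability event (from the concentration of the empirical NTK and Lemma \ref{lemma: Jacobian Lipschitz}) on which $\lambda_{\min}(\hat{\mathbf{\Theta}}_{\mathbf{x},\mathbf{x}})$ is bounded below, the initial gradient satisfies $\norm{G_0}_2\le R_0$, and the Jacobian is locally bounded and $O(1/\sqrt{n})$-Lipschitz. Then define $T := \sup\{t : \norm{\theta_s-\theta_0}_2\le C\ \forall s\le t\}$ and argue on $[0,T)$; there the Jacobian norm is bounded by a constant, the per-output Hessians of $f$ have operator norm $O(1/\sqrt{n})$, and $\norm{g(\theta_t)}_2\le R_0'$ stays bounded because $f$ is locally Lipschitz in $\theta$ and $\theta_t$ lies within $C$ of $\theta_0$.

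The core computation is $\frac{d}{dt}\norm{G_t}_2^2 = 2G_t^\top \dot G_t$. Differentiating $G_t$ and substituting $\dot\theta_t = -\eta_0 G_t$ gives $\dot G_t = \dot J(\theta_t)^\top g(\theta_t) - \eta_0\bigl(J(\theta_t)^\top J(\theta_t)+\beta I\bigr)G_t$, where $\dot J(\theta_t)^\top g(\theta_t)$ is the contraction of the Hessians against $-\eta_0 G_t$. This yields a dissipative term $-2\eta_0\,G_t^\top\bigl(J(\theta_t)^\top J(\theta_t)+\beta I\bigr)G_t \le -2\eta_0 c_\beta\norm{G_t}_2^2$ and a perturbation term $2G_t^\top\dot J(\theta_t)^\top g(\theta_t)$. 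For the dissipation, when $\beta>0$ the bound $c_\beta=\beta$ comes from the $\beta I$ block alone; when $\beta=0$ I would recover $c_0 = \lambda_{\min}\!\bigl(J(\theta_t)J(\theta_t)^\top\bigr)$ via the identity $G_t^\top J^\top J G_t = g^\top (J J^\top)^2 g \ge \lambda_{\min}(J J^\top)\,g^\top (J J^\top) g = \lambda_{\min}(J J^\top)\norm{G_t}_2^2$, where $\lambda_{\min}(J(\theta_t)J(\theta_t)^\top) \ge \lambda_{\min}(\hat{\mathbf{\Theta}}_{\mathbf{x},\mathbf{x}}) - O(1/\sqrt{n}) > 0$ because the Jacobian stays close to initialization on $[0,T)$. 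Thus a single constant $c_\beta>0$ covers all $\beta\ge 0$. The perturbation term is bounded in magnitude by $\eta_0\cdot O(1/\sqrt{n})\cdot\norm{g(\theta_t)}_2\cdot\norm{G_t}_2^2$ using the $O(1/\sqrt{n})$ Hessian bound and the bounded residual.

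Combining, for $n$ large enough the perturbation is strictly dominated and $\frac{d}{dt}\norm{G_t}_2^2 \le -\eta_0 c_\beta\norm{G_t}_2^2$ on $[0,T)$; after possibly halving the dissipation constant and absorbing factors into $K$, Grönwall gives $\norm{G_t}_2\le KR_0\,e^{-\eta_0 c_\beta t}$, which is the first claim since $\norm{d\theta_t/dt}_2=\eta_0\norm{G_t}_2$. Integrating, $\norm{\theta_t-\theta_0}_2 \le \int_0^t \eta_0\norm{G_s}_2\,ds \le \frac{KR_0}{c_\beta}\bigl(1-e^{-\eta_0 c_\beta t}\bigr) < \frac{KR_0}{c_\beta} =: C$. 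Because this bound is \emph{strictly} below $C$ for every $t\in[0,T)$, $\theta_t$ can never reach distance $C$ from $\theta_0$, contradicting $T<\infty$; hence $T=\infty$ and all bounds hold globally. The two Jacobian estimates then follow immediately from Lemma \ref{lemma: Jacobian Lipschitz}: $\norm{J(x,\theta_t)-J(x,\theta_0)}_2 \le \frac{K'}{\sqrt{n}}\norm{\theta_t-\theta_0}_2 \le \frac{K'C}{\sqrt{n}}$ for $\norm{x}_2\le 1$, and analogously $\norm{J(\theta_t)-J(\theta_0)}_2\le \frac{KC}{\sqrt{n}}$ at the training points.

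The main obstacle is controlling the perturbation term, which is genuinely circular: it needs a uniform-in-time bound on $\norm{g(\theta_t)}_2$, available only because the bootstrap keeps $\theta_t$ near $\theta_0$, so the argument must be closed by the strict inequality $<C$ and cannot rely on residual decay as in the $\beta=0$ case. The secondary difficulty is establishing the dissipation constant $c_\beta$ uniformly over $\beta\ge 0$, where the delicate endpoint $\beta=0$ forces the $(JJ^\top)^2$ identity together with the high-probability lower bound on $\lambda_{\min}(\hat{\mathbf{\Theta}}_{\mathbf{x},\mathbf{x}})$ transported to time $t$ via the Jacobian proximity.
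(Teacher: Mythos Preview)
Your proposal is correct and matches the paper's approach: both track $\lVert\nabla_\theta\mathcal{L}^\beta(\theta_t)\rVert_2$ via the Hessian decomposition $g^\top\nabla_\theta^2 f + J^\top J + \beta I$, treat the first piece as an $O(1/\sqrt{n})$ perturbation and the second as the $c_\beta$-dissipation (with the row-span/$(JJ^\top)^2$ observation for $\beta=0$), then close with Gr\"onwall inside a bootstrap interval. The one minor difference is that the paper bounds $\lVert g(\theta_t)\rVert_2$ more cleanly via loss non-increase along gradient flow (so $\lVert g(\theta_t)\rVert_2\le\lVert g(\theta_0)\rVert_2\le R_0$ for all $t$), which avoids your appeal to local Lipschitzness of $f$ inside the bootstrap ball.
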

We provide a proof in Appendix \ref{appendix: proof regularized gradient flow part 1 (exponential decay)}.
The main idea is to analyze $\frac{d}{dt}\lVert \nabla_{\theta} \mathcal{L}^{\beta}(\theta_t)\rVert_2$ and leverage the fact that the eigenvalues of the Hessian of the network scale as $O\left(\frac{(\log n)^c}{\sqrt{n}}\right)$. Consequently, these eigenvalues are dominated by the regularization term $\beta > 0$, leading to the exponential decay. This is a substantial shift from prior proofs for the case $\beta = 0$, where exponential decay is shown for $\lVert g(\theta_t) \rVert_2$ (the training error) instead\footnote{This implies exponential decay of $\lVert \nabla_{\theta} \mathcal{L}^{0}(\theta_t) \rVert_2 =\lVert J(\theta_t)^{\top} g(\theta_t) \rVert_2 $.}.

Our result for $\beta > 0$ does not require the minimum eigenvalue $\lambda_{\min}(\mathbf{\Theta})$ of the analytical NTK to be positive. For the unregularized case $\beta=0$ (and $\lambda_{\min}(\mathbf{\Theta})>0$), we recover the results of \citet{lee2019wide} by an alternative proof. This highlights the importance of the gradient being in the row-span of the Jacobian $J(\theta_t)$, ensuring that only the positive eigenvalues of $J(\theta_t)^\top J(\theta_t)$ contribute to the dynamics.

\begin{restatable}{theorem}{gftheoremparttwo}
\label{thrm: 3.4}
Let $\beta \ge 0$. Let $\delta_0 > 0$ be arbitrarily small. Then, there are $C_1, C_2$, such that for $n$ large enough, with probability of at least $1-\delta_0$ over random initialization,
\begin{equation}
    \sup_{t\ge 0} \lVert \theta_t - \theta^{\lin}_t \rVert_2 \le C_1 \frac{(\log n)^c}{\sqrt{n}},
\end{equation}
\begin{equation}
    \forall \lVert x \rVert_2 \le 1: \sup_{t\ge 0} \lVert f(x,\theta_t) - f^{\lin}_{\theta_0}(x,\theta_t^{\lin}) \rVert_2 \le C_2 \frac{(\log n)^c}{\sqrt{n}} .
\end{equation}
\end{restatable}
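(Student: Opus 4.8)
The plan is to derive and analyze the ordinary differential equation governing the gap $\Delta_t := \theta_t - \theta^{\lin}_t$, where $\theta_t$ and $\theta^{\lin}_t$ solve the regularized gradient flow for the full network $f$ and the linearized network $f^{\lin}_{\theta_0}$ respectively, both started at $\theta_0$. Writing $g^{\lin}(\theta) := f(\mathbf{x},\theta_0) + J(\theta_0)(\theta-\theta_0) - \mathbf{y}$ for the linearized residual, and $\rho_t := f(\mathbf{x},\theta_t) - f(\mathbf{x},\theta_0) - J(\theta_0)(\theta_t-\theta_0)$ for the Taylor remainder, a direct subtraction of the two flows together with the identity $g(\theta_t) - g^{\lin}(\theta^{\lin}_t) = J(\theta_0)\Delta_t + \rho_t$ yields the linear, inhomogeneous system
\begin{equation}
\frac{d\Delta_t}{dt} = -\eta\, A\, \Delta_t - \eta\, r_t, \qquad \Delta_0 = 0,
\end{equation}
with $A := J(\theta_0)^\top J(\theta_0) + \beta I \succeq 0$ and forcing $r_t := \big(J(\theta_t) - J(\theta_0)\big)^\top g(\theta_t) + J(\theta_0)^\top \rho_t$. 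The whole argument reduces to showing $\sup_t \norm{\Delta_t}_2 = O(1/\sqrt n)$ and transferring this to function space.

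Next I would bound the forcing on the high-probability event of Theorem \ref{theorem: Exponential decay gradient, parameters stay close}. There $\norm{\theta_t-\theta_0}_2 \le C$, so $g(\theta_t)$ stays bounded ($\sup_t\norm{g(\theta_t)}_2 = O(1)$ by continuity on the ball of radius $C$), the deviation satisfies $\norm{J(\theta_t)-J(\theta_0)}_2 \le \tfrac{1}{\sqrt n}KC$, and integrating the $O(1/\sqrt n)$ Lipschitz bound on the Jacobian along the segment $\theta_0\to\theta_t$ gives $\norm{\rho_t}_2 \le \tfrac{KC^2}{2\sqrt n}$. Hence $\sup_t\norm{r_t}_2 = O(1/\sqrt n)$, and, crucially, the two terms of $r_t$ are controlled by $\norm{g(\theta_t)}_2$ and by $\norm{\theta_t-\theta_0}_2^2$ respectively.

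The core step is to solve the system by Duhamel's formula, $\Delta_t = -\eta\int_0^t e^{-\eta A(t-s)} r_s\, ds$, and split $\mathbb{R}^p$ along the eigenspaces of $A$. Let $P$ be the orthogonal projection onto $\mathrm{range}(J(\theta_0)^\top)$ and $P^\perp = I-P$. On $\mathrm{range}(P)$ the operator $A$ has smallest eigenvalue $\lambda_{\min}(\hat{\mathbf{\Theta}}_{\mathbf{x,x}}) + \beta$, bounded away from $0$ whenever $\beta>0$ or (for $\beta=0$) $\lambda_{\min}(\mathbf{\Theta})>0$; the resulting contraction gives $\norm{P\Delta_t}_2 \le \sup_s\norm{r_s}_2 /(\lambda_{\min}(\hat{\mathbf{\Theta}}_{\mathbf{x,x}})+\beta) = O(1/\sqrt n)$. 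On $\mathrm{range}(P^\perp)$ one has $A = \beta I$, and the decisive observation is that $P^\perp J(\theta_0)^\top = 0$, so only the first term of $r_s$ survives the projection: $\norm{P^\perp r_s}_2 \le \tfrac{KC}{\sqrt n}\norm{g(\theta_s)}_2$. If $\beta>0$ the factor $e^{-\eta\beta(t-s)}$ again contracts and yields $O(1/\sqrt n)$; if $\beta=0$ there is no contraction, and here I would invoke the exponential decay of the training error $\norm{g(\theta_s)}_2$ (available once $\lambda_{\min}(\mathbf{\Theta})>0$, as in the recovery of \citet{lee2019wide}) to conclude $\eta\int_0^\infty \norm{P^\perp r_s}_2\,ds = O(1/\sqrt n)$. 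This null-space estimate for $\beta=0$ is the main obstacle: unlike the $\beta>0$ case there is no contraction to absorb a uniformly-$O(1/\sqrt n)$ forcing over all $t$, and a naive Grönwall bound would grow linearly in $t$; uniformity is recovered only because the non-contracted component of the forcing inherits the exponential decay of $\norm{g}_2$. Combining the two subspaces gives $\sup_t\norm{\Delta_t}_2 \le C_1/\sqrt n$.

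Finally, for the function-space bound I would split, for any $\norm{x}_2\le1$, $f(x,\theta_t) - f^{\lin}_{\theta_0}(x,\theta^{\lin}_t) = \big(f(x,\theta_t) - f^{\lin}_{\theta_0}(x,\theta_t)\big) + J(x,\theta_0)\Delta_t$. The first term is the pointwise linearization error, bounded by $\tfrac{K'C^2}{2\sqrt n}$ via the per-point Lipschitz constant $K'$ and $\norm{\theta_t-\theta_0}_2\le C$; the second is bounded by $\norm{J(x,\theta_0)}_2\,\norm{\Delta_t}_2 = O(1)\cdot O(1/\sqrt n)$ using local boundedness of the Jacobian and the parameter bound just established. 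Taking the supremum over $t$ gives $\sup_t\norm{f(x,\theta_t)-f^{\lin}_{\theta_0}(x,\theta^{\lin}_t)}_2 \le C_2/\sqrt n$; all constants are independent of $\eta$ since the $\eta$ factors cancel in each time integral.
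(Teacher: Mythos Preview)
Your argument follows the same skeleton as the paper: derive the inhomogeneous linear ODE for $\theta_t-\theta^{\lin}_t$, bound the forcing by $O(1/\sqrt n)$ using Theorem~\ref{theorem: Exponential decay gradient, parameters stay close} together with the Taylor remainder, apply Duhamel/variation of constants, and then pass to function space via the triangle split $f-f^{\lin}(\cdot,\theta^{\lin}_t) = \big(f-f^{\lin}(\cdot,\theta_t)\big) + J(x,\theta_0)(\theta_t-\theta^{\lin}_t)$. Two cosmetic differences: the paper bounds the linearization error $\lVert f(x,\theta_t)-f^{\lin}(x,\theta_t)\rVert_2$ by integrating its time derivative along the flow (using $\lVert d\theta_t/dt\rVert_2\le \eta_0KR_0e^{-\eta_0c_\beta t}$), whereas you bound the Taylor remainder along the straight segment $\theta_0\to\theta_t$; and the paper uses Lemma~\ref{lemma: g(theta0) bounded} ($\lVert g(\theta_t)\rVert_2\le\lVert g(\theta_0)\rVert_2\le R_0$, from monotonicity of the loss) rather than continuity to control $\lVert g\rVert_2$.

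The one substantive divergence is your spectral split $P\oplus P^\perp$ of $\mathbb{R}^p$ along $\mathrm{range}(J(\theta_0)^\top)$. The paper does not split: it bounds $\lVert e^{-\eta_0 A(t-u)}\rVert_2$ by $e^{-\eta_0(c_0+\beta)(t-u)}$ with $c_0=\tfrac13\lambda_{\min}(\mathbf\Theta)$ and integrates the uniformly-$O(1/\sqrt n)$ forcing against this kernel. Since $A=J(\theta_0)^\top J(\theta_0)+\beta I\in\mathbb{R}^{p\times p}$ and $J^\top J$ has a nontrivial nullspace once $p>N$, one actually has $\lambda_{\min}(A)=\beta$, so for $\beta>0$ the paper's estimate survives with contraction rate $\beta$, but for $\beta=0$ it does not give a uniform-in-$t$ bound as written. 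Your splitting handles exactly this: on $\mathrm{range}(P^\perp)$ the $J(\theta_0)^\top\rho_s$ piece is annihilated, leaving only $P^\perp(J(\theta_s)-J(\theta_0))^\top g(\theta_s)$, whose time integral is finite because $\lVert g(\theta_s)\rVert_2$ decays exponentially when $\lambda_{\min}(\mathbf\Theta)>0$. So your version is more careful in the $\beta=0$ case---at the price of invoking the exponential decay of the training error, which is precisely the ingredient the paper's narrative claims to sidestep.
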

Where $\theta^{\lin}_t$ are the parameters at time $t$ of the linearized network $f^{\lin}_{\theta_0}$. See Appendix \ref{appendix: Proof regularized gradient flow part 2 (closeness linearized)} for a proof. Again, unlike previous proofs for $\beta = 0$, which rely on the exponential decay of $\lVert g(\theta_t)\rVert_2$, we present a more general and simpler approach that applies for $\beta \geq 0$. Specifically, we decompose $\lVert f(x,\theta_t) - f^{\lin}_{\theta_0}(x,\theta_t^{\lin}) \rVert_2$ into $\lVert f(x,\theta_t) - f^{\lin}_{\theta_0}(x,\theta_t) \rVert_2$ and $\lVert f^{\lin}_{\theta_0}(x,\theta_t) - f^{\lin}_{\theta_0}(x,\theta_t^{\lin}) \rVert_2$, and bound them using Theorem \ref{theorem: Exponential decay gradient, parameters stay close}.

The results in this section establish the key conditions required to conclude that wide neural networks under regularized gradient flow remain in a linear regime throughout training. This enables us to use Theorem \ref{thrm: first} to analyze network training with the regularized loss.

\subsection{NTK-GP Posterior Mean with Aleatoric Noise Interpretation}
Given the linearization of our network, we can now look back to Theorem \ref{thrm: first} and observe that, at convergence under regularized gradient flow and given a random initialization, the output of the linearized network corresponds to the posterior mean of a GP with the analytical NTK as the kernel and non-zero observation noise. This provides a direct Bayesian interpretation of the regularized training process in terms of the NTK-GP framework. 

It is important to highlight that the trained parameters $\theta_\infty$ generally depend on the specific initialization $\theta_0$, and as such, will differ for every random initialization. This variability is relevant to the training process but is not central to our analysis and use case. Unlike deep ensembles \citep{he2020bayesian} (particularly when used in the context of Thompson sampling \citep{thompson-sampling}), where the distribution of trained networks across initialization is explicitly used, our focus lies on the behaviour of a single trained network under a given initialization. We are able to alleviate the effect of this randomness thanks to our initialization strategy described in the next section.

\section{Neural Network Initialization as NTK-GPs with Arbitrary Prior Mean}\label{sec-mu}

This section addresses limitation (ii), arbitrary prior means, by introducing a shifted network construction that enables deterministic NTK-GP posterior means with user-specified priors. While standard initialization schemes for neural networks typically involve randomly setting weights and biases to break symmetry, they generally lack a principled way to align the network's prior outputs with specific inductive biases or desired properties. In contrast, Gaussian processes offer a well-defined framework for specifying prior distributions over functions through the prior mean and covariance. Arbitrary initialization is especially valuable in reinforcement learning, where optimistic priors aid exploration \citep{strehl2006pac}, and in domains with strong prior knowledge, where encoding a meaningful prior mean can improve performance.

One way to use the results presented in the previous section to obtain the posterior of the NTK-GP with zero prior mean is to train several ensembles and average the outputs using the law of large numbers, akin to the approach by \citet{he2020bayesian}. However, if we are only interested in computing the posterior mean and not the covariance, there is a better way, which only requires the use of one network. Section \ref{sec-shift} explores how neural networks, under the NTK-GP framework, can be initialized to reflect arbitrary prior means, enhancing their flexibility to align with specific tasks. Appendix \ref{app:no-prior-mean} shows why simply setting the last layer’s weights to zero does not yield the desired prior.

\subsection{Shifting the labels or predictions}
\label{sec-shift}
 
Inspired by standard techniques in GP literature \citep{williams2006gaussian}, we provide a formal construction for modifying the network to introduce arbitrary prior mean. This can be accomplished either by shifting the predictions of the neural network by its predictions at initialization, or equivalently, defining a new shifted network. The intuition behind this is that the shifting does not change the Jacobian of the network with respect to the parameters, but gives a different initialization in Equation \eqref{eq: thrm-4.1}. The following theorem formalizes this, resolving the main research question we posed in Section \ref{sec-motivation}.

\begin{restatable}{theorem}{shiftpredictions}
\label{lemma-shifted-predictions}
(Shifted Network.) Consider any function $m$. Given a random initialization $\theta_0$, define shifted predictions $\Tilde{f}_{\theta_0}(\mathbf{x}, \theta)$ as follows:
\begin{equation}
    \Tilde{f}_{\theta_0}(\mathbf{x}, \theta) := f(\mathbf{x}, \theta) - f(\mathbf{x}, \theta_0) + m(\mathbf{x}).
\end{equation}
Training this modified network (starting with $\theta_{0}$) leads to the following output (in the infinite-width limit)
\begin{equation}
    \tilde{f}_{\theta_0}(\mathbf{x'}, \theta_\infty) = m(\mathbf{x'}) + \mathbf{\Theta}_{\mathbf{x', x}}(\mathbf{\Theta}_{\mathbf{x, x}} + \beta I)^{-1}(\mathbf{y} - m(\mathbf{x})).
\end{equation}
This can be interpreted as the posterior mean of an NTK-GP with prior mean function $m$.
\end{restatable}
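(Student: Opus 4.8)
The plan is to reduce the shifted-network dynamics to the unshifted, regularized case already resolved by Theorems \ref{thrm: first}--\ref{thrm: 3.4}, by exploiting two elementary properties of $\tilde{f}_{\theta_0}$. First, because the added term $-f(\mathbf{x},\theta_0)+m(\mathbf{x})$ does not depend on $\theta$, the Jacobian of the shifted network coincides with that of the original, $\frac{\partial \tilde f_{\theta_0}}{\partial\theta}=J(\mathbf{x},\theta)$; hence the empirical NTK is unchanged and the Lipschitz/boundedness estimates from Appendix \ref{app: jacobian-app} apply verbatim. Second, at initialization $\tilde f_{\theta_0}(\mathbf{x},\theta_0)=m(\mathbf{x})$, so the shifted network starts exactly at the desired prior mean rather than at the random $f(\mathbf{x},\theta_0)$.

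The core step is a label-shift equivalence. Writing the residual of the shifted network under the regularized loss of Section \ref{sec-on} as
\begin{equation*}
\tilde f_{\theta_0}(\mathbf{x}_i,\theta)-\mathbf{y}_i = f(\mathbf{x}_i,\theta)-\bigl(\mathbf{y}_i-m(\mathbf{x}_i)+f(\mathbf{x}_i,\theta_0)\bigr),
\end{equation*}
I would introduce the effective targets $\tilde{\mathbf{y}}:=\mathbf{y}-m(\mathbf{x})+f(\mathbf{x},\theta_0)$. Since the regularizer $\tfrac12\beta\|\theta-\theta_0\|_2^2$ is untouched, the regularized loss for the shifted network with targets $\mathbf{y}$ is \emph{identical} to that for the original network with targets $\tilde{\mathbf{y}}$, so both gradient flows produce the same trajectory $\theta_t$. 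Theorem \ref{thrm: first}, combined with Theorems \ref{theorem: Exponential decay gradient, parameters stay close} and \ref{thrm: 3.4} which keep the network $O(1/\sqrt n)$-close to its linearization, then gives at convergence
\begin{equation*}
f(\mathbf{x'},\theta_\infty)=f(\mathbf{x'},\theta_0)+\hat{\mathbf{\Theta}}_{\mathbf{x',x}}\bigl(\hat{\mathbf{\Theta}}_{\mathbf{x,x}}+\beta I\bigr)^{-1}\bigl(\tilde{\mathbf{y}}-f(\mathbf{x},\theta_0)\bigr).
\end{equation*}

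Substituting $\tilde{\mathbf{y}}-f(\mathbf{x},\theta_0)=\mathbf{y}-m(\mathbf{x})$ and unwinding $\tilde f_{\theta_0}(\mathbf{x'},\theta_\infty)=f(\mathbf{x'},\theta_\infty)-f(\mathbf{x'},\theta_0)+m(\mathbf{x'})$, the random term $f(\mathbf{x'},\theta_0)$ cancels identically; this cancellation is exactly what removes the initialization dependence and is the conceptual payoff of the construction. Passing to the infinite-width limit and invoking $\hat{\mathbf{\Theta}}\to\mathbf{\Theta}$ \citep{jacot2018neural} to replace the empirical kernel by the analytical one (while the $O(1/\sqrt n)$ linearization error vanishes) yields the claimed mean $m(\mathbf{x'})+\mathbf{\Theta}_{\mathbf{x',x}}(\mathbf{\Theta}_{\mathbf{x,x}}+\beta I)^{-1}(\mathbf{y}-m(\mathbf{x}))$.

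The main obstacle I anticipate is technical rather than conceptual: the effective targets $\tilde{\mathbf{y}}$ depend on the random initialization $\theta_0$, so I must check that the high-probability bounds of Theorems \ref{theorem: Exponential decay gradient, parameters stay close} and \ref{thrm: 3.4} still hold with constants independent of $n$. This reduces to controlling $\|f(\mathbf{x},\theta_0)\|_2$—and hence $\|\tilde{\mathbf{y}}\|_2$ and the initial residual—uniformly with high probability over initialization, which follows from the standard concentration of the network output at initialization in the NTK regime. I would also stress that the cancellation of $f(\mathbf{x'},\theta_0)$ is exact at every finite width, so the only width-dependent error entering the final statement comes from the linearization and kernel-convergence estimates already established.
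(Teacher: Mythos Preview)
Your proposal is correct and follows essentially the same route as the paper: both observe that the shifted network has the same Jacobian as the original, reduce to training the unshifted network on relabelled targets $\tilde{\mathbf{y}}=\mathbf{y}-m(\mathbf{x})+f(\mathbf{x},\theta_0)$, apply the already-established convergence result, and then cancel $f(\mathbf{x'},\theta_0)$ by unwinding the definition of $\tilde f_{\theta_0}$. Your added remark about controlling $\|f(\mathbf{x},\theta_0)\|_2$ so that the high-probability constants remain uniform is a welcome piece of care that the paper's proof leaves implicit.
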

We prove this in Appendix \ref{appendix: Shifting the network at initialization}.
Note that $\tilde{f}_{\theta_0}(\mathbf{x'},\theta_{\infty})$ is non-random (i.e., independent of  the initialization $\theta_0$). This is different from training the non-shifted network, where $f_{\theta_0}(\mathbf{x'},\theta_{\infty})$ is random, but has a mean which corresponds to the posterior mean of an NTK-GP with prior mean $0$. Note, that the non-randomness of $\tilde{f}_{\theta_0}(\mathbf{x'},\theta_{\infty})$ is not immediately clear: while $\tilde{f}_{\theta_0}(x,\theta_0)$ does not depend on $\theta_0$, $\tilde{f}_{\theta_0}(x,\theta)$ differs for different $\theta_0$ when $\theta\neq \theta_0$. The reason for the non-randomness after training is that the NTK doesn't depend on the initialization in the infinite-width limit.

One drawback of shifting predictions is the need to store the initial parameters $\theta_0$, effectively doubling the memory requirements, which can be prohibitive for large networks. Additionally, performing a forward pass of the initial network to compute the shift adds a slight additional computational overhead. However, it is worth noting that no additional back-propagation is required through $f(\mathbf{x}, \theta_0)$ or $m(\mathbf{x})$, keeping the overall cost manageable and lower than even a two-network ensemble.

\section{Experiments}
In this section, we empirically validate our theoretical results by studying the convergence of wide neural networks (and parameters) trained with regularized gradient descent to their linearized counterparts. First, we analyze how the trained parameters deviate from the optimal linearized network parameters (i.e., the converged linear regression weights) and how this difference shrinks as the width increases. We then compare the network's predictions with those of kernel ridge regression on a test set, varying network depth and the regularization strength $\beta$. Lastly, we show the effectiveness of using a pre-trained neural network as an informative prior mean in terms of performance across different dataset sizes in a synthetic task.

\subsection{Experimental Setup}

\begin{figure*}[h!]
    \centering
    \includegraphics[width=0.32\textwidth]{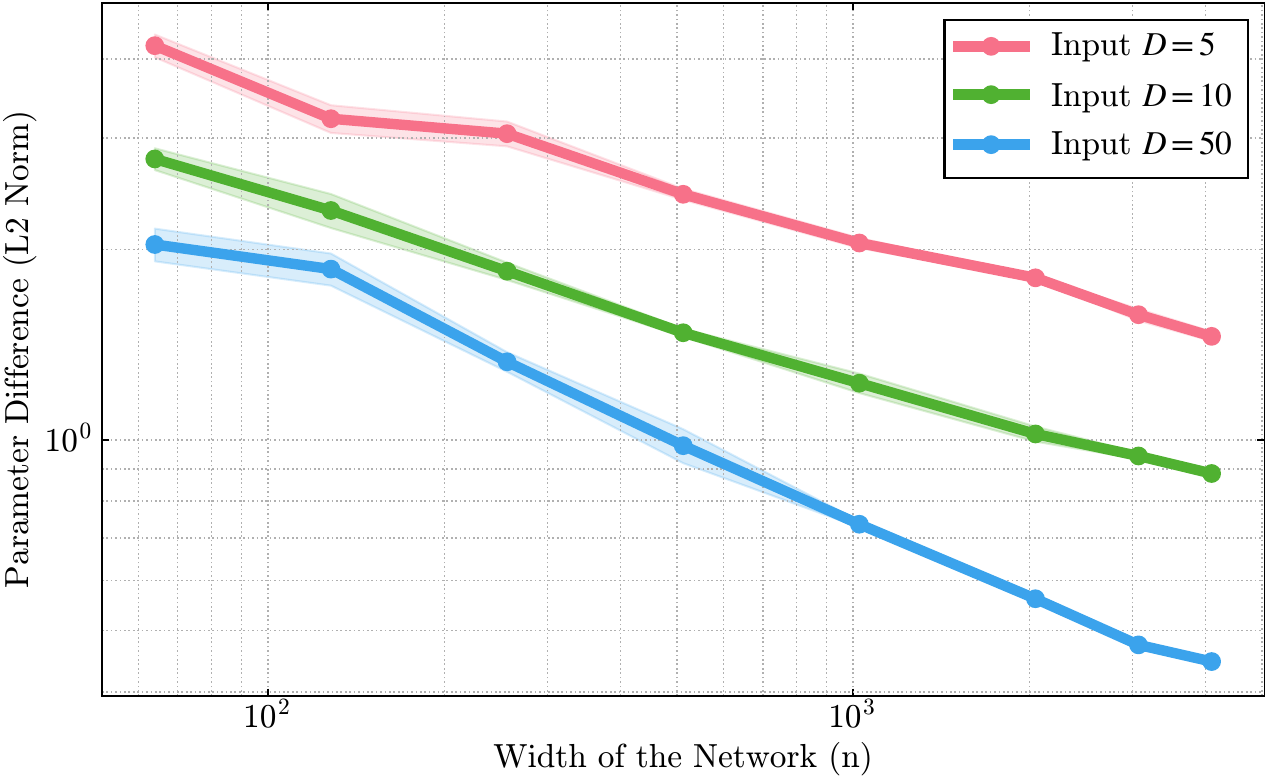}
    \includegraphics[width=0.32\textwidth]{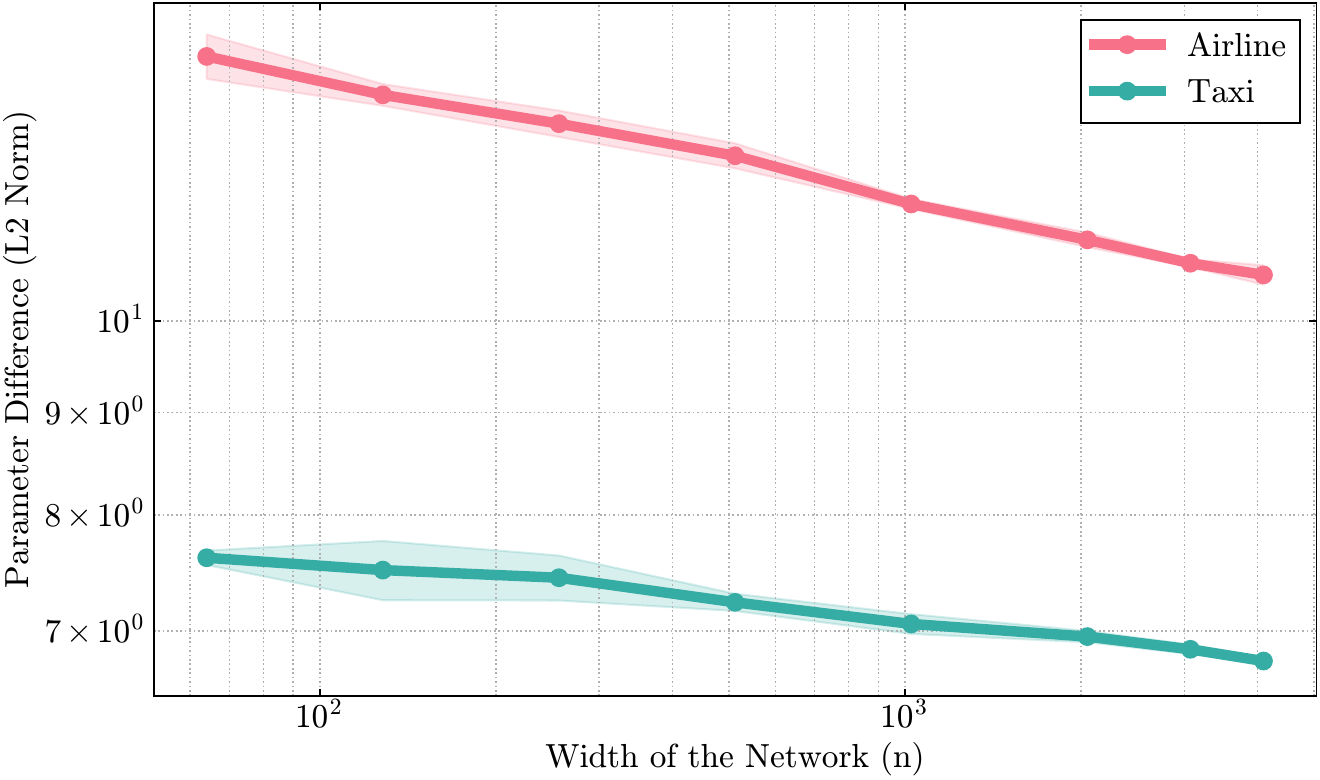}
    \includegraphics[width=0.32\textwidth]{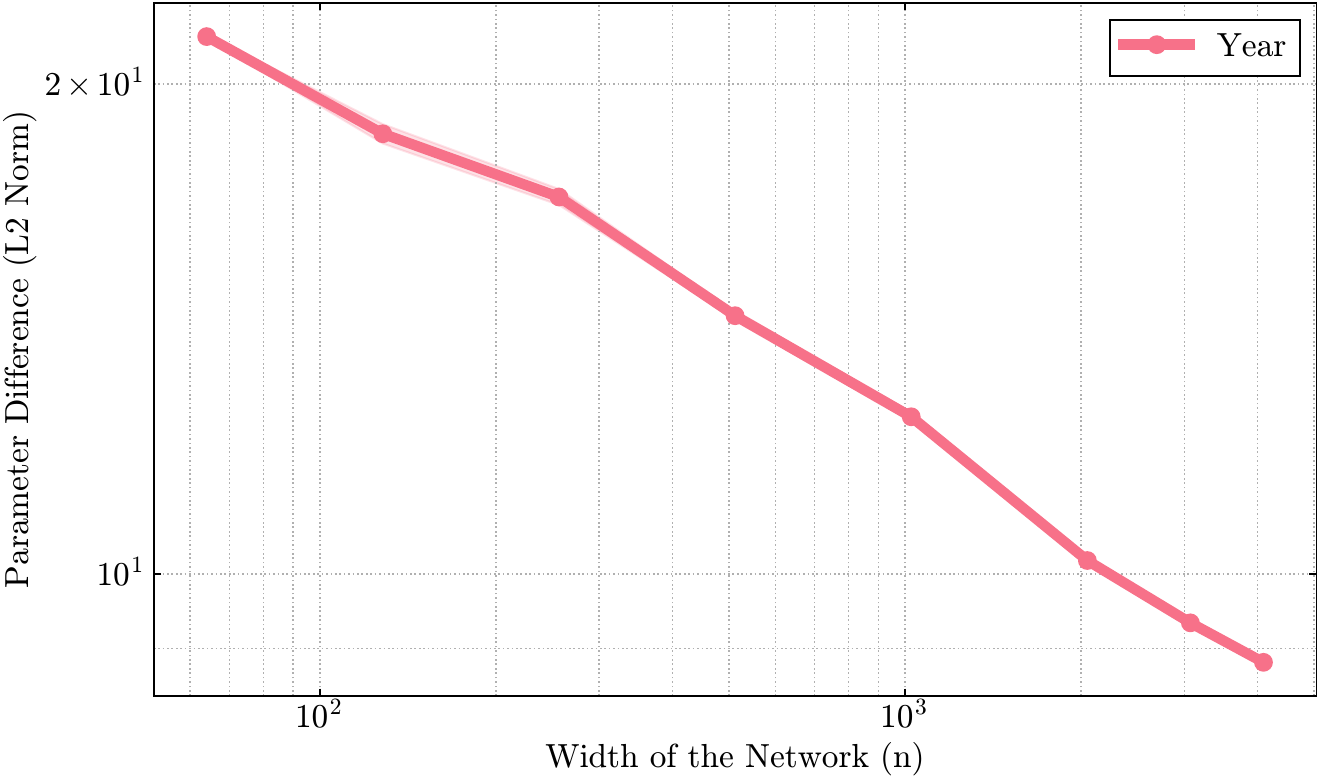}
    \caption{Frobenius norm differences between the trained neural network's parameters and the kernel ridge regression solution plotted against network width for different datasets. \textbf{Left:} Different input dimensions of the synthetic dataset. \textbf{Middle:} \textit{Airline} and \textit{Taxi} datasets. \textbf{Right:} UCI \textit{Year} dataset. Shaded regions represent the standard deviation divided by the square root of the number of seeds.}
    \label{fig:param_differences}
\end{figure*}

\begin{figure*}[h!]
    \centering
    \includegraphics[width=0.32\textwidth]{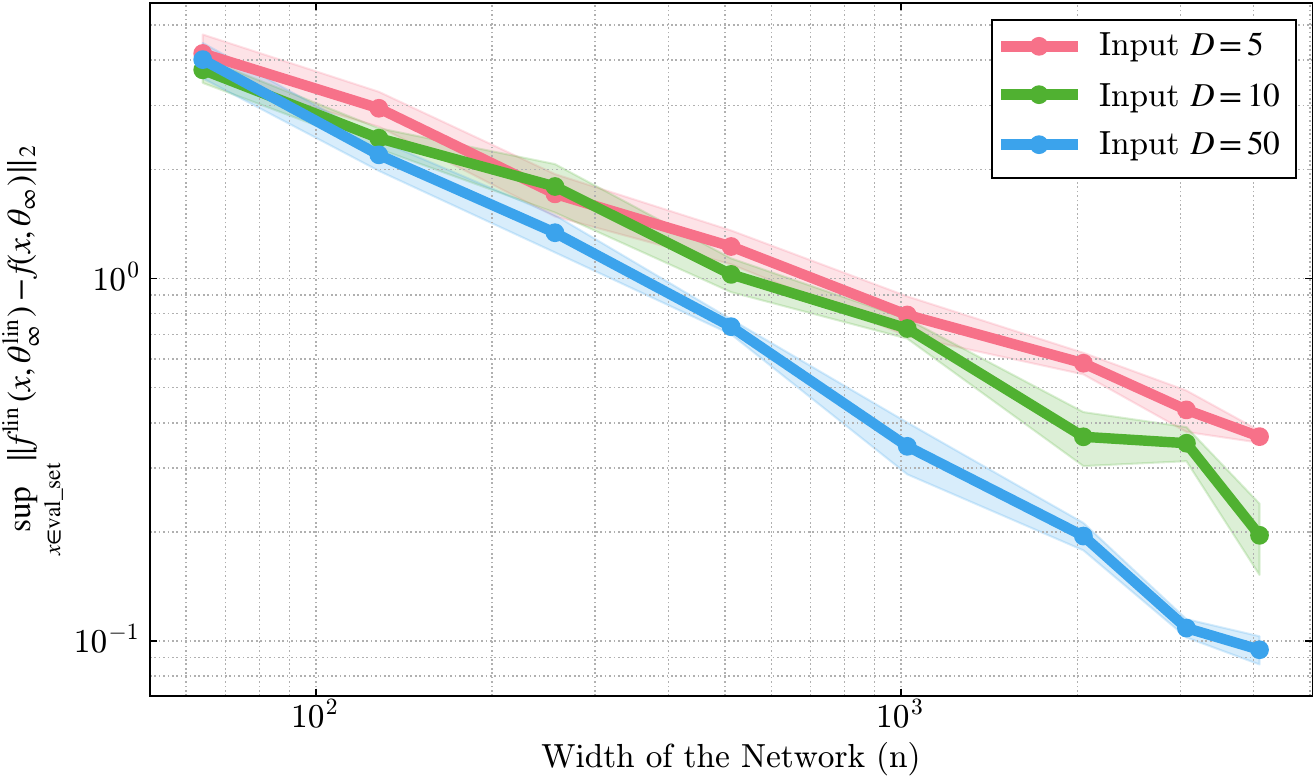}
    \includegraphics[width=0.32\textwidth]{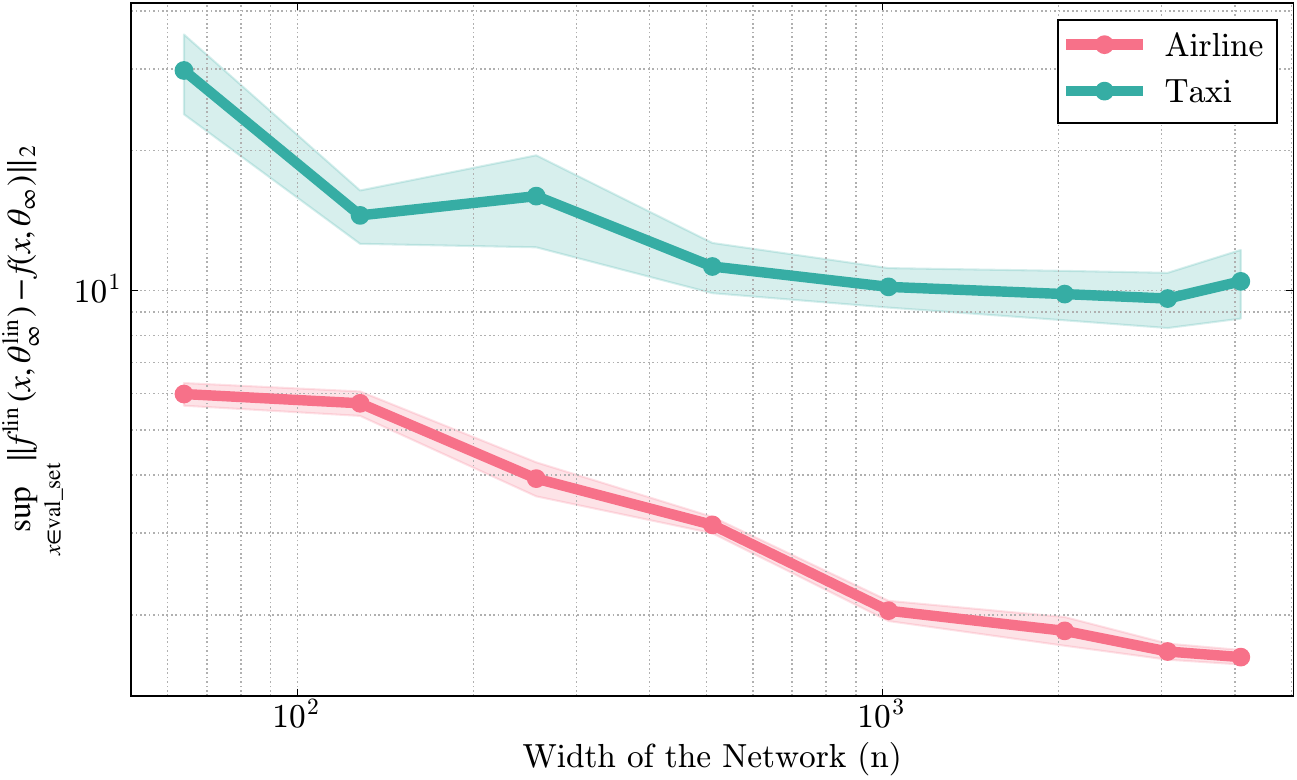}
    \includegraphics[width=0.32\textwidth]{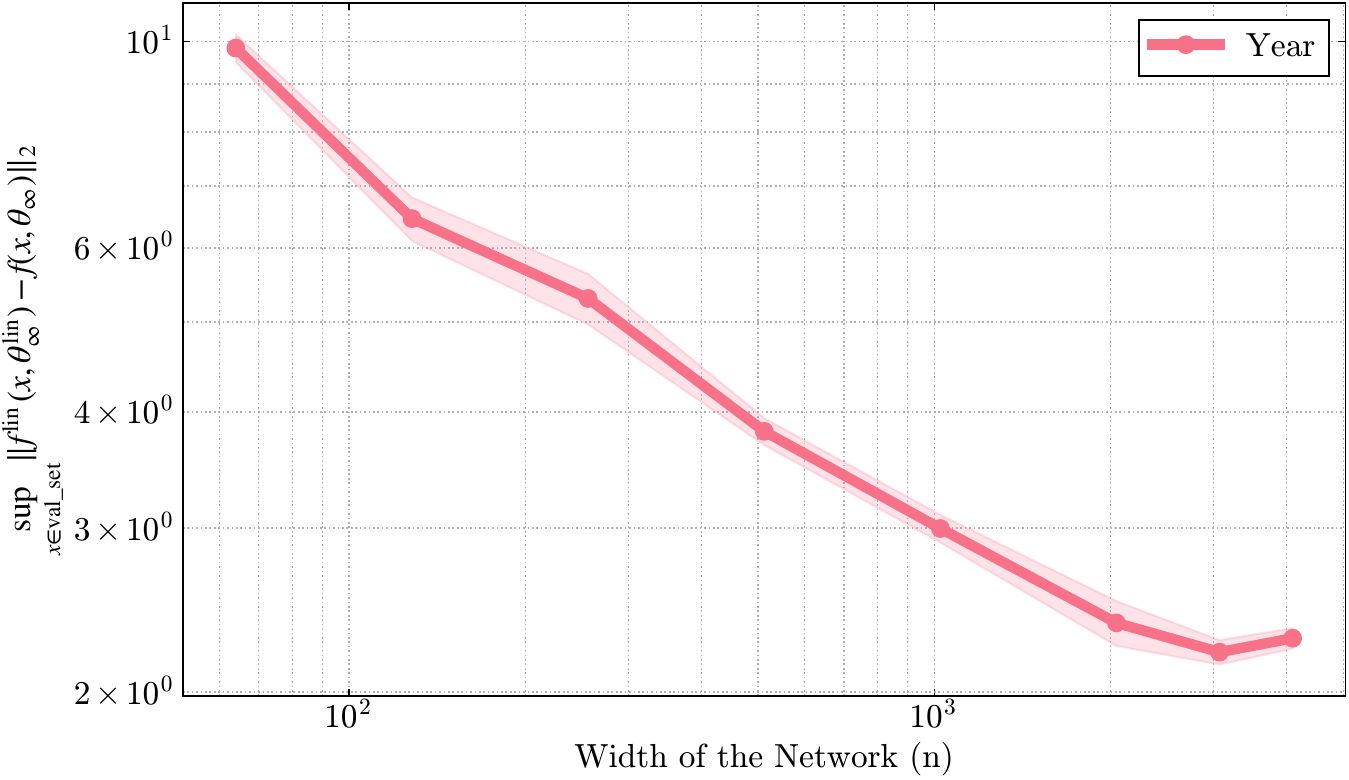}
    \caption{Supremum of the $\ell_2$ norm between the trained neural network's outputs \(f(x, \theta_\infty)\) and the linearized model's predictions \(f^{\text{lin}}(x, \theta_\infty^{\text{lin}})\) across the validation set, plotted against network width. \textbf{Left:} Different input dimensions of the synthetic dataset. \textbf{Middle:} \textit{Airline} and \textit{Taxi} datasets. \textbf{Right:} UCI \textit{Year} dataset.\vspace{-0.1in}}
    \label{fig:functions-diff}
\end{figure*}

To ensure alignment with the NTK framework, we train wide fully connected multi-layer perceptrons (MLPs) under the NTK parametrization using full-batch gradient descent (additional results under SGD with different batch sizes are included in Appendix \ref{app: additional-experiments}). The experiments are performed in both synthetic and standard regression datasets. The synthetic problem is a multi-dimensional synthetic regression task where the target function is defined as $\mathbf{y} = \sin(\mathbf{x}) + \cos(2\mathbf{x}) + \boldsymbol{\epsilon}$, with $\boldsymbol{\epsilon} \sim \mathcal{N}(0, I\sigma^2)$ representing the observation noise. Inputs $\mathbf{x}$ are uniformly sampled from the range $[-6, 6]$. We generate a dataset of a few hundred training points and use a two-layer neural network. 
For the standard regression tasks, we use the datasets \textit{Airline} \citep{hensman2013gaussian}, \textit{Taxi} \citep{peng2017asynchronous, salimbeni2017doubly}, and UCI \textit{Year} (\citet{hernandez2015probabilistic}), which are complex and high-dimensional. We follow \citet{bergna2025post} for architectural choices and hyperparameter settings. All experiments were performed on a single NVIDIA L40 GPU.

To compare the trained network parameters to their linearized counterparts, we compute the Frobenius norm between the network parameters and those obtained via the kernel ridge regression using the NTK (Section \ref{sec: 5.2}). Similarly, we evaluate the deviation between the trained neural network and the kernel ridge regression predictions on a test set by computing the squared error (Section \ref{sec: 5.3}). We repeat these experiments over 5 different seeds and plot the mean and standard error on a logarithmic scale. The plots below in the synthetic experiments are for 3-layer MLPs with $\beta=0.5$. Results for different depths and $\beta$ coefficients are shown in linear plots in Appendix \ref{app: additional-experiments}.

Finally, in Section \ref{sec: prior-mean}, we explore how incorporating a learned prior mean can improve data efficiency in transfer settings. We compare training from scratch to initializing with the posterior from a related task, and observe significantly improved performance, especially in low-data regimes.

\subsection{Empirical Convergence of the Parameters}\label{sec: 5.2}

We examine how the trained network parameters compare to those obtained via kernel ridge regression with the NTK. Specifically, we compute the $\ell_2$ norm difference between the final parameters of the trained network and the corresponding optimal linearized solution. Leveraging Theorem \ref{lemma-shifted-predictions}, we set the prior mean to zero for all experiments. Results in Figure \ref{fig:param_differences} confirm that this difference decreases as the network width increases, supporting the theoretical prediction offered in Theorem \ref{thrm: 3.4}. 

% previously the plot 1

At smaller widths, we observe a notable deviation between the trained network and the linear regression solution, realizing the effect of finite width. As the width increases, the difference gradually declines, indicating the convergence towards the linearized model post-training in all of the tested datasets. We note that computing and inverting the NTK matrix to compute the reference linear model presented a significant computational bottleneck, making it infeasible to scale to larger widths, depths, or dataset sizes. In contrast, training the neural network was substantially more efficient.

\subsection{Empirical Convergence of the Trained Network to a Linear Model}\label{sec: 5.3}

We now shift from parameter convergence to function-space convergence. Here we evaluate how closely the trained neural network approximates the predictions of the corresponding linearized model for unseen data. This validation dataset was sampled from the same data-generating process as the training dataset, comprising 20\% of its size. Specifically, we compute $\sup_{x \in \mathcal{V}} \lVert f(x, \theta_\infty) - f^{\lin}(x, \theta_\infty^{\lin}) \rVert_2,$
where $\mathcal{V}$ is the validation set. This represents the distance between the trained network's outputs and the kernel ridge regression solution evaluated in a validation set.

% previously plot 2

Every plot in Figure \ref{fig:functions-diff} shows, under a similar setup as in Section \ref{sec: 5.2}, that as the width increases, this discrepancy diminishes, providing further empirical confirmation of Theorem \ref{thrm: 3.4}. As in Section \ref{sec: 5.2}, more plots for different depths can be found in Appendix \ref{app: additional-experiments}.

\subsection{Pre-Trained Neural Network as Prior Mean}\label{sec: prior-mean}

In this experiment, we construct two $N$-dimensional regression tasks that share low-frequency structure but differ in their high-frequency components. In both cases, observations follow $\mathbf{y} = f(\mathbf{x}) + \boldsymbol{\epsilon}$, where $\boldsymbol{\epsilon} \sim \mathcal{N}(0, \sigma^2I)$. Task 1 uses the target function $f_1(\mathbf{x}) = \sin(\mathbf{x}) + 0.5\sin(5\mathbf{x}) + 0.2\sin(20\mathbf{x})$, while Task 2 is defined by $f_2(\mathbf{x}) = \sin(\mathbf{x}) + 0.3\cos(7\mathbf{x}) - 0.2\sin(15\mathbf{x})$. Coefficients were chosen arbitrarily and were not tuned.

\begin{figure}[h!]
    \centering
    \includegraphics[width=0.8\linewidth]{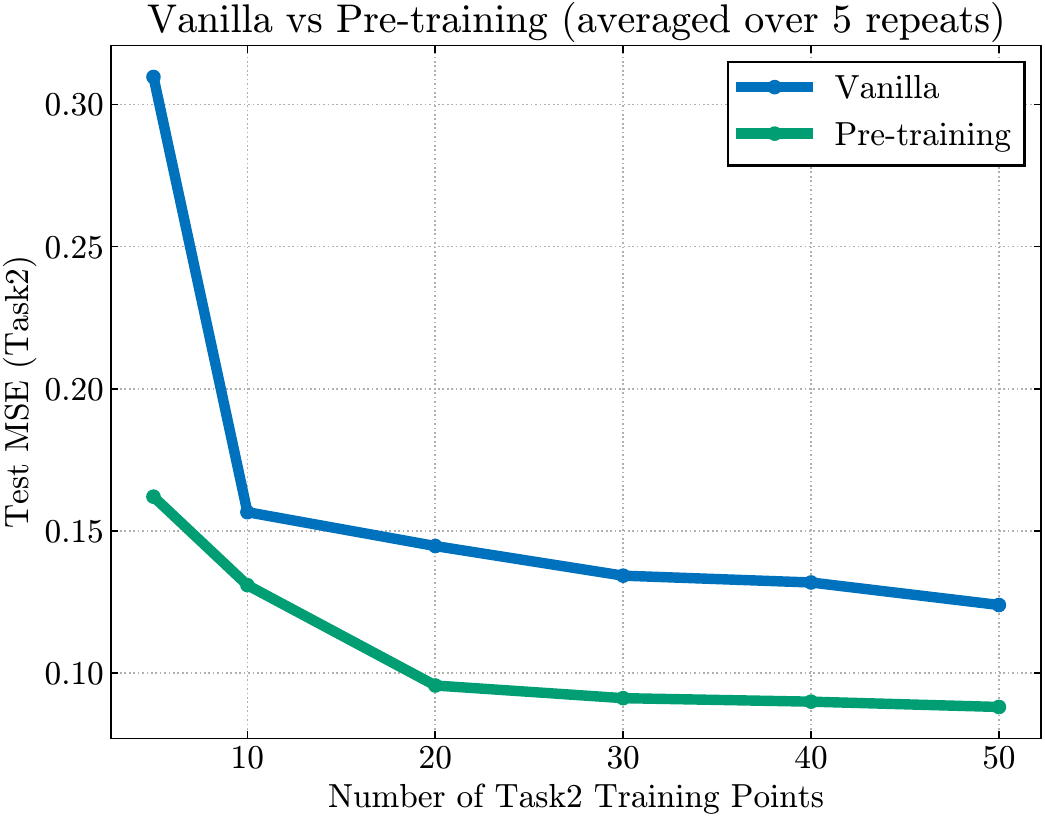}
    \caption{Test MSE on Task 2 as a function of the number of training points, comparing training from scratch (Vanilla) with using the posterior from Task 1 as a prior mean (Pre-training). Pre-training significantly improves performance in low-data regimes.\vspace{-0.1in}}%, demonstrating the benefit of incorporating task-specific prior knowledge.}
    \label{fig:prior-mean}
\end{figure}

We compare two learning strategies: (i) \textbf{Vanilla:} initialize a two-layer Softplus network at random and train on Task~2 alone, with zero prior mean. (ii) \textbf{Pre-training:} first train the same network on Task~1 to convergence, then use its learned posterior mean as the prior when fitting Task~2. Figure~\ref{fig:prior-mean} shows the test MSE on Task~2 as a function of $n_2$. The pre-training curve lies strictly below the vanilla curve---most notably in the low-data regime ($n_2 \le 20$)---demonstrating that the Task~1 posterior provides a powerful inductive bias that reduces sample complexity on Task~2.

\section{Related Work}
\paragraph{Linearization} \citet{lee2019wide} demonstrate that as network width approaches infinity, training dynamics simplify and can be approximated by a linearized model using a first-order Taylor expansion. \citet{lee2019wide} also study the links between the output of trained neural networks and GPs. Crucially, we extend this work by proving that this linearization still holds in the presence of observation noise.

\paragraph{Kernel Methods and Neural Networks}
\citet{jacot2018neural} established the equivalence between wide neural networks and kernel methods via kernel ridge regression, and popularized studying neural networks in function space rather than parameter space. We build on both ideas: Theorem \ref{lemma-shifted-predictions} adopts the function-space view of GPs. Subsequent work has further explored NTK–GP connections in noisy settings. For instance, \citet{rudner2023functionspaceregularizationneuralnetworks} proposed function-space regularization to encode desired properties into predictions, while \citet{chen2022neural} related NTK norms to RKHS regularization.

\paragraph{Global Minima and Overparameterization} In overparameterized settings, \citet{allenzhu2019convergence} show that SGD can reach global minima in polynomial time, while \citet{zou2020gradient} prove convergence for ReLU networks by keeping weights close to their initialization. Although we do not directly rely on these results, we also guarantee high-probability convergence to the global optimum.

\section{Conclusion and Future Work}

We studied the relationship between regularized training in wide neural networks and Gaussian Processes under the NTK framework. We are the first work formally showing that the proposed weight-space regularization in neural networks is equivalent to adding aleatoric noise to the NTK-GP posterior mean in the infinite-width limit. Additionally, we introduced a shifted network approach that enables arbitrary prior functions and ensures deterministic convergence to the NTK-GP posterior mean without requiring ensembles or kernel inversion. Empirical results validate our theoretical findings, demonstrating convergence to the linearized network. The methods proposed in this paper establish a foundation for practical Gaussian Process inference using the NTK and standard neural network training. Future work could explore the benefits of leveraging the NTK to efficiently train GPs with interpretable and widely used kernels, such as RBF or Matérn. This would require looking into algorithms to compute the posterior covariance \citep{calvo2024epistemic, zanger2025contextualsimilaritydistillationensemble}.

\section*{Acknowledgements}

SCO's research is supported by the Oxford-Man Institute through the EPSRC Centre for Doctoral Training in Mathematics of Random Systems: Analysis, Modelling and Simulation (EPSRC Grant EP/S023925/1). JP acknowledges financial support from the Oxford-Man Institute. SCO, JP, and AC acknowledge the support of the Oxford-Man Institute for providing computational resources. JMHL acknowledges support from EPSRC funding under grant EP/Y028805/1. JMHL also acknowledges support from a Turing AI Fellowship under grant EP/V023756/1.

% \newpage
{\small
\bibliographystyle{unsrtnat}
\bibliography{neurips_2025}
}

%%%%%%%%%%%%%%%%%%%%%%%%%%%%%%%%%%%%%%%%%%%%%%%%%%%%%%%%%%%%
% \newpage
\section*{Checklist}

% %%% BEGIN INSTRUCTIONS %%%
% The checklist follows the references. For each question, choose your answer from the three possible options: Yes, No, Not Applicable.  You are encouraged to include a justification to your answer, either by referencing the appropriate section of your paper or providing a brief inline description (1-2 sentences). 
% Please do not modify the questions.  Note that the Checklist section does not count towards the page limit. Not including the checklist in the first submission won't result in desk rejection, although in such case we will ask you to upload it during the author response period and include it in camera ready (if accepted).

% \textbf{In your paper, please delete this instructions block and only keep the Checklist section heading above along with the questions/answers below.}
% %%% END INSTRUCTIONS %%%

\begin{enumerate}

  \item For all models and algorithms presented, check if you include:
  \begin{enumerate}
    \item A clear description of the mathematical setting, assumptions, algorithm, and/or model. Yes.
    \item An analysis of the properties and complexity (time, space, sample size) of any algorithm. Yes.
    \item (Optional) Anonymized source code, with specification of all dependencies, including external libraries. We will provide this upon acceptance.
  \end{enumerate}

  \item For any theoretical claim, check if you include:
  \begin{enumerate}
    \item Statements of the full set of assumptions of all theoretical results. Yes.
    \item Complete proofs of all theoretical results. Yes.
    \item Clear explanations of any assumptions. Yes.
  \end{enumerate}

  \item For all figures and tables that present empirical results, check if you include:
  \begin{enumerate}
    \item The code, data, and instructions needed to reproduce the main experimental results (either in the supplemental material or as a URL). We will provide the code upon acceptance. The data and instructions are provided.
    \item All the training details (e.g., data splits, hyperparameters, how they were chosen). Yes.
    \item A clear definition of the specific measure or statistics and error bars (e.g., with respect to the random seed after running experiments multiple times). Yes.
    \item A description of the computing infrastructure used. (e.g., type of GPUs, internal cluster, or cloud provider). Yes. 
  \end{enumerate}

  \item If you are using existing assets (e.g., code, data, models) or curating/releasing new assets, check if you include:
  \begin{enumerate}
    \item Citations of the creator If your work uses existing assets. Yes.
    \item The license information of the assets, if applicable. Not Applicable.
    \item New assets either in the supplemental material or as a URL, if applicable. Not Applicable.
    \item Information about consent from data providers/curators. Not Applicable.
    \item Discussion of sensible content if applicable, e.g., personally identifiable information or offensive content. Not Applicable.
  \end{enumerate}

  \item If you used crowdsourcing or conducted research with human subjects, check if you include:
  \begin{enumerate}
    \item The full text of instructions given to participants and screenshots. Not Applicable.
    \item Descriptions of potential participant risks, with links to Institutional Review Board (IRB) approvals if applicable. Not Applicable.
    \item The estimated hourly wage paid to participants and the total amount spent on participant compensation. Not Applicable.
  \end{enumerate}

\end{enumerate}

\clearpage
\appendix
\onecolumn

\section{Local Boundedness and Lipschitzness of the Jacobian}\label{app: jacobian-app}
To analyze the training dynamics of wide neural networks, it is essential to ensure that the Jacobian of the network remains (locally) bounded and Lipschitz-continuous during training, with the constants scaling ``well" with layer width $n$. This is shown by the following Lemma, which was proved in \citet{lee2019wide} for standard parametrization, and more formally in \citet{liu2020linearity} for NTK parameterization.

\begin{restatable}{lemma}{lipschitznesslemma}
\label{lemma: Jacobian Lipschitz}
For any $\delta_0>0$, there is $K'>0$ (independent of $C$), such that: For every $C>0$, there is $n$ large enough, such that with probability of at least $1-\delta_0$ (over random initialization): For any point $x$ with $\lVert x \rVert_2 \le 1$:
\begin{equation}
    \forall \theta \in B(\theta_0,C): \lVert J(x,\theta)\rVert_2 
    \le K',
\end{equation}
\begin{equation}
    \forall \theta,\tilde{\theta} \in B(\theta_0, C): \lVert J(x,\theta) - J(x,\tilde{\theta}) \rVert_2 
    \le \frac{(\log n)^c}{\sqrt{n}}K' \lVert \theta - \tilde{\theta} \rVert_2.
\end{equation}
In particular, with $K=\sqrt{N}K'$, for the Jacobian over the training points:
\begin{equation}
    \forall \theta \in B(\theta_0,C): \lVert J(\theta)\rVert_F \le K,
\end{equation}
\begin{equation}
    \forall \theta,\tilde{\theta} \in B(\theta_0, C): \lVert J(\theta) - J(\tilde{\theta}) \rVert_F \le \frac{(\log n)^c}{\sqrt{n}}K \lVert \theta - \tilde{\theta} \rVert_2.
\end{equation}
As a direct consequence, for the Hessian $\nabla^2_{\theta} f(x, \theta) \in \mathbb{R}^{p \times p}$ of the network,
\begin{equation}
    \forall \theta \in B(\theta_0,C): \norm{ \nabla^2_{\theta} f(x, \theta) }_2 \le \frac{(\log n)^c}{\sqrt{n}} K'.
\end{equation}
\end{restatable}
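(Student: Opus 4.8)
The plan is to follow the structure of the standard-parametrization argument of \citet{lee2019wide}, re-deriving every magnitude estimate under the NTK scaling and carefully tracking how each norm grows with the width $n$. Writing $\delta^l := (\partial f/\partial h^l)^\top$ for the backward sensitivities, the basic bookkeeping I would establish at initialization $\theta_0$ is: the weight operator norms satisfy $\norm{W^l_0}_2 = O(\sqrt n)$; the activations satisfy $\norm{x^l(\theta_0)}_2 = O(\sqrt n)$ with entries of size $O(1)$; and the sensitivities satisfy $\norm{\delta^l(\theta_0)}_2 = O(1)$ with entries of size $O(1/\sqrt n)$. All of these hold simultaneously on a single event of probability at least $1-\delta_0$ by standard Gaussian concentration (operator-norm bounds for Gaussian matrices and concentration of the per-layer activation norms, the latter giving bounded pre-activations and hence bounded $\phi'(h^l)$). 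The structural fact that $\delta$ has $\ell_2$-norm $O(1)$ but entrywise size $O(1/\sqrt n)$ is what ultimately produces the $1/\sqrt n$ in the Lipschitz estimate.

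Next I would transfer these bounds from $\theta_0$ to every $\theta \in B(\theta_0,C)$ deterministically on that event. Because $C$ is a constant independent of $n$, any $\theta$ in the ball perturbs each block by $\norm{\Delta W^l}_2 \le \norm{\Delta\theta}_2 = O(1)$; feeding this through the forward recursion $h^{l+1} = \frac{1}{\sqrt n}W^{l+1}x^l + b^{l+1}$ and using the Lipschitzness of $\phi$, a layer-by-layer induction yields $\norm{x^l(\theta) - x^l(\theta_0)}_2 = O(\norm{\Delta\theta}_2)$, so all forward and backward magnitudes above hold uniformly over the ball up to constants. This is the clean route to uniformity, avoiding an $\varepsilon$-net over $B(\theta_0,C)$. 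With these uniform magnitudes, the boundedness claim is immediate: the Jacobian blocks are $\partial f/\partial W^{l+1} = \frac{1}{\sqrt n}\delta^{l+1}(x^l)^\top$ and $\partial f/\partial b^{l+1} = \delta^{l+1}$, so $\norm{\partial f/\partial W^{l+1}}_F = \frac{1}{\sqrt n}\norm{\delta^{l+1}}_2\norm{x^l}_2 = \frac{1}{\sqrt n}\,O(1)\,O(\sqrt n) = O(1)$ and the bias blocks are $O(1)$ as well; summing over the finitely many layers gives $\norm{J(x,\theta)}_2 \le K'$. Stacking the $N$ per-point rows converts the single-point bounds into the training-point versions with $K = \sqrt N K'$, and the Hessian bound is then purely formal: since $f$ is scalar, $J(x,\theta) = \nabla_\theta f(x,\theta)$, so the Lipschitz constant of $\theta \mapsto J(x,\theta)$ is exactly an operator-norm bound on $\nabla^2_\theta f(x,\theta)$, giving $\frac{1}{\sqrt n}K'$.

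The hard part is the Lipschitz bound $\norm{J(x,\theta) - J(x,\tilde\theta)}_2 \le \frac{1}{\sqrt n}K'\norm{\theta-\tilde\theta}_2$, where the prefactor must be $1/\sqrt n$ rather than $O(1)$. Differentiating the weight block gives two terms, $\frac{1}{\sqrt n}\Delta\delta^{l+1}(x^l)^\top$ and $\frac{1}{\sqrt n}\delta^{l+1}(\Delta x^l)^\top$. The second is harmless: since $\norm{\delta^{l+1}}_2 = O(1)$ and the forward induction gives $\norm{\Delta x^l}_2 = O(\norm{\Delta\theta}_2)$, it contributes $O(\norm{\Delta\theta}_2/\sqrt n)$. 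The first term is the dangerous one, because $\norm{x^l}_2 = O(\sqrt n)$ cancels the prefactor, leaving $O(1)\cdot\norm{\Delta\delta^{l+1}}_2$; the whole estimate therefore hinges on proving that the backward sensitivity is itself Lipschitz with constant $O(1/\sqrt n)$, i.e. $\norm{\Delta\delta^l}_2 = O(\norm{\Delta\theta}_2/\sqrt n)$. I would obtain this by a backward induction on $\delta^l = \diag(\phi'(h^l))\frac{1}{\sqrt n}(W^{l+1})^\top\delta^{l+1}$: the perturbation splits into a term from $\Delta\phi'(h^l)$, one from $\Delta W^{l+1}$, and one from $\Delta\delta^{l+1}$. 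The first two come out at order $\norm{\Delta\theta}_2/\sqrt n$ precisely because the vector they multiply has entrywise size $O(1/\sqrt n)$ (for the $\Delta\phi'$ term one keeps the $\ell_\infty$ size of that vector and the $\ell_2$ size of $\Delta h^l$ separate, via the Lipschitzness of $\phi'$), while the third propagates the inductive hypothesis with an $O(1)$ factor. Making this backward induction close---separating $\ell_\infty$ from $\ell_2$ norms so the width factors cancel in exactly the right places---is the technical crux; once it is in place, the boundedness and stacking steps and the reduction to the Lee et al.\ template are routine.
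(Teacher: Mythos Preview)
Your approach is correct in outline and would close, but it is genuinely different from what the paper does. You propose to redo the entire forward--backward magnitude bookkeeping of \citet{lee2019wide} directly under the NTK scaling: establish $\lVert x^l\rVert_2=O(\sqrt n)$, $\lVert\delta^l\rVert_2=O(1)$ with entrywise size $O(1/\sqrt n)$, and then run a backward induction to get $\lVert\Delta\delta^l\rVert_2=O(\lVert\Delta\theta\rVert_2/\sqrt n)$, which is indeed the structural source of the $1/\sqrt n$ Lipschitz constant. The paper instead treats Lemma~1 of \citet{lee2019wide} (standard parametrization) as a black box and reduces to it by a change of variables: writing $f^{\std}(\theta)=f^{\ntk}(\sqrt n\,\theta)$ so that $J^{\ntk}(\theta)=\frac{1}{\sqrt n}J^{\std}(\frac{1}{\sqrt n}\theta)$, one maps $\theta,\tilde\theta\in B(\theta_0,C)$ to $\frac{1}{\sqrt n}\theta,\frac{1}{\sqrt n}\tilde\theta\in B(\frac{1}{\sqrt n}\theta_0,\frac{C}{\sqrt n})$ and applies the standard-parametrization bounds $\lVert J^{\std}\rVert_2\le\sqrt n\,K'$ and Lipschitz constant $\sqrt n\,K'$; the $\sqrt n$ factors then cancel to give exactly the claimed $K'$ and $\frac{1}{\sqrt n}K'$. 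This reduction is essentially five lines once the rescaling is written down (with the simplification that the first layer and biases are not trained, so all trainable blocks carry the same $1/\sqrt n$ factor). Your route is more self-contained and makes the mechanism visible---in particular, the $\ell_\infty$-versus-$\ell_2$ separation for the $\Delta\phi'(h^l)$ term is exactly the right idea---but you are effectively reproving the standard-parametrization lemma rather than invoking it. If you want the shortest path to the stated result, the rescaling reduction is preferable; if you want a standalone argument that does not cite \citet{lee2019wide} for the heavy lifting, yours is the right structure.
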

% See Appendix \ref{appendix: local lipschitz jacobian} for a discussion of this Lemma. 
The Frobenius norm is used to aggregate over different training points, i.e. $\lVert J(\theta)\rVert_F^2 = \sum_{i=1}^N \lVert J(\mathbf{x}_i, \theta) \rVert_2^2$ for $J(\theta) = J(\mathbf{x},\theta)\in\mathbb{R}^{N\times p}$.
The logarithmic term $(\log n)^c$ is sometimes omitted in the literature. Its power $c>0$ is a constant that only depends on the network architecture.

\section{Initializing the last layer weights as $0$ does not lead to a zero prior mean NTK-GP}\label{app:no-prior-mean}
In this section, we explore a tempting but ultimately futile way of constructing a network with zero prior mean. One might be tempted to just initialize the last layer of the network to $0$, i.e., $\sigma_{w, L+1} = 0$, $\sigma_{b, L+1} = 0$. Then, the gradient with respect to any weights/biases not in the last layer will be $0$. In this case, the empirical NTK is given by
\begin{equation}
    \hat{\mathbf{\Theta}}_{\mathbf{x}',\mathbf{x}} 
    = \sum_{l=1}^{L+1} J(\mathbf{x}',\theta^l)J(\mathbf{x},\theta^l)^{\top} 
    = J(\mathbf{x}',\theta^{L+1})J(\mathbf{x},\theta^{L+1})^{\top},
\end{equation}

where $\theta^l$ indicates the weights at layer $l$. We have,
\begin{equation}
    J(\mathbf{x},W^{L+1})
    = \frac{1}{\sqrt{n_L}}x^L(\mathbf{x}, \theta), \quad
    J(\mathbf{x},b^{L+1})
    = 1.
\end{equation}
Thus,
\begin{equation}
    \hat{\mathbf{\Theta}}_{\mathbf{x}',\mathbf{x}} = \frac{1}{n} x^L(\mathbf{x}',\theta) x^L(\mathbf{x},\theta)^{\top}  + 1.
\end{equation}
This actually converges to the Neural Network Gaussian Process (NNGP) kernel \citep{lee2018dnnsgps}, which does not align with the desired result.

However, our observation highlights that training the network in this configuration corresponds to performing inference in a Gaussian Process (GP) with a zero prior mean and the NNGP kernel as the covariance function. Specifically, in the case of the linearized network (or a network with a sufficiently large layer width), the partial derivatives with respect to the hidden layer parameters are zero. Thus, effectively \emph{only the last layer is trained}. As shown in \citet{lee2019wide}, this scenario is equivalent to performing inference with the NNGP, where the covariance of the trained network matches the posterior covariance of the NNGP.
\newpage
\section{Additional Experiments}\label{app: additional-experiments}

\begin{figure}[h!]  
    \centering
    % First Figure (First 4 Images)
    \begin{subfigure}[b]{0.4\textwidth}
        \centering
        \includegraphics[width=\linewidth]{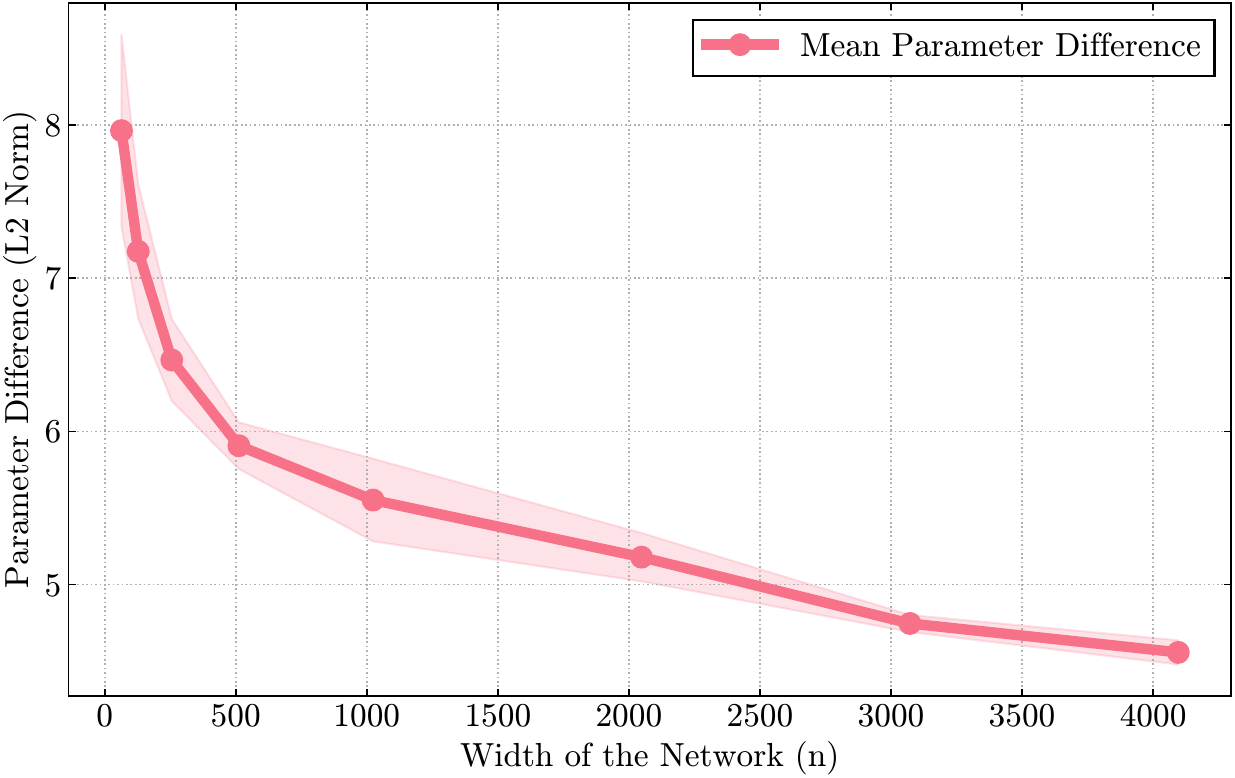}
    \end{subfigure}
    \hspace{0.2in}
    \begin{subfigure}[b]{0.4\textwidth}
        \centering
        \includegraphics[width=\linewidth]{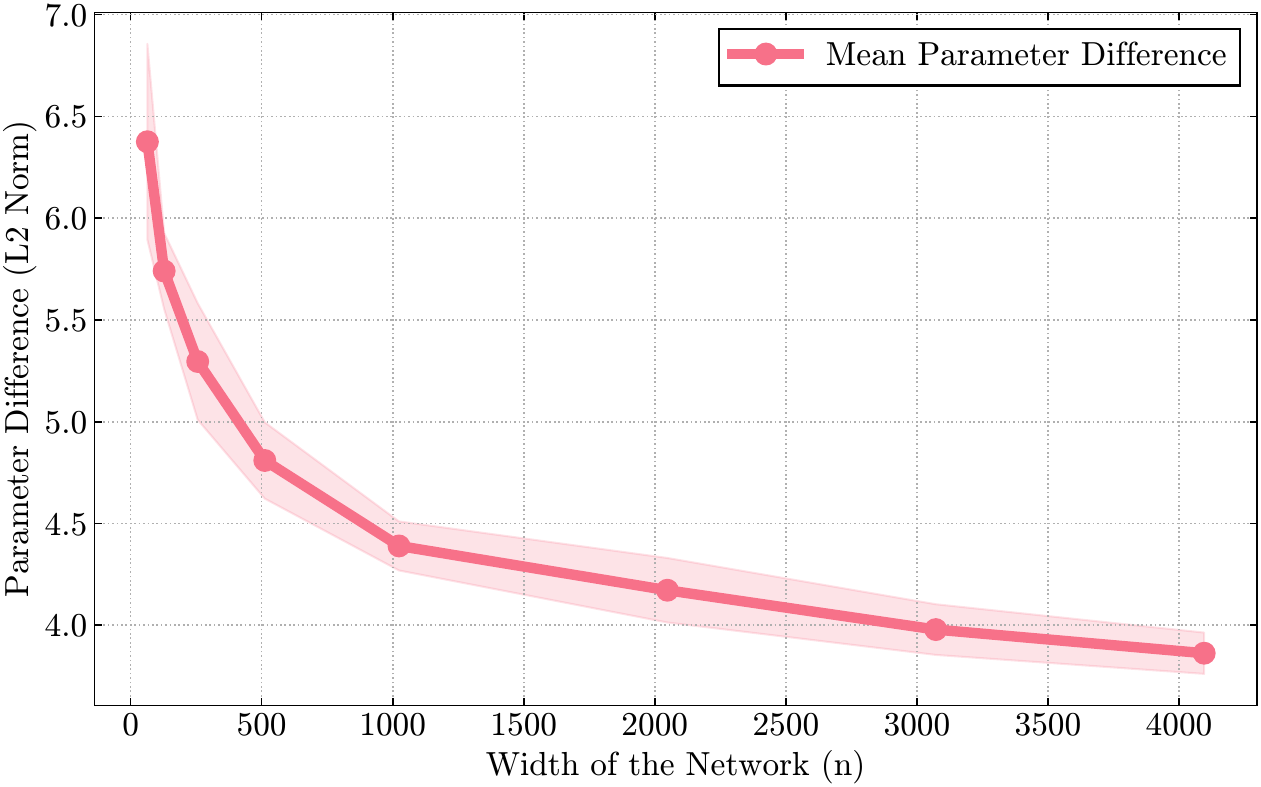}
    \end{subfigure}

    \vspace{0.2in} % Adds vertical space

    \begin{subfigure}[b]{0.4\textwidth}
        \centering
        \includegraphics[width=\linewidth]{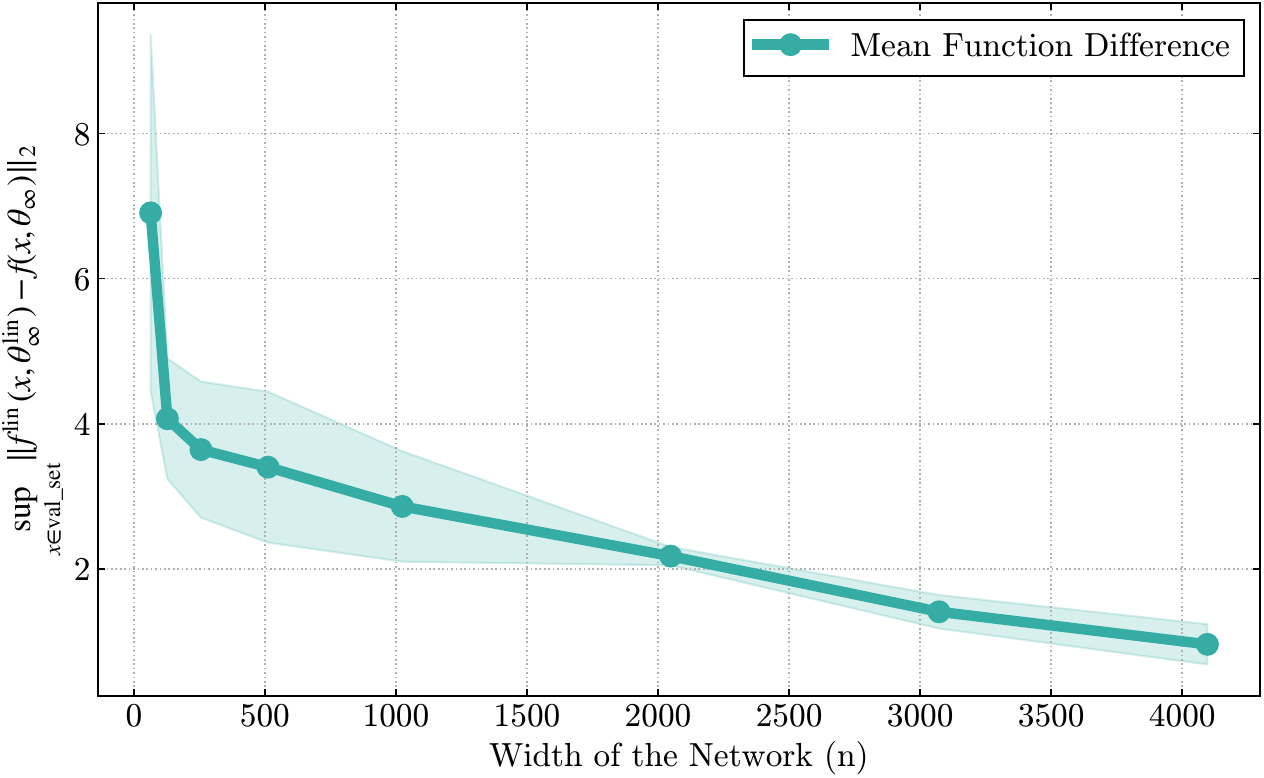}
    \end{subfigure}
    \hspace{0.2in}
    \begin{subfigure}[b]{0.4\textwidth}
        \centering
        \includegraphics[width=\linewidth]{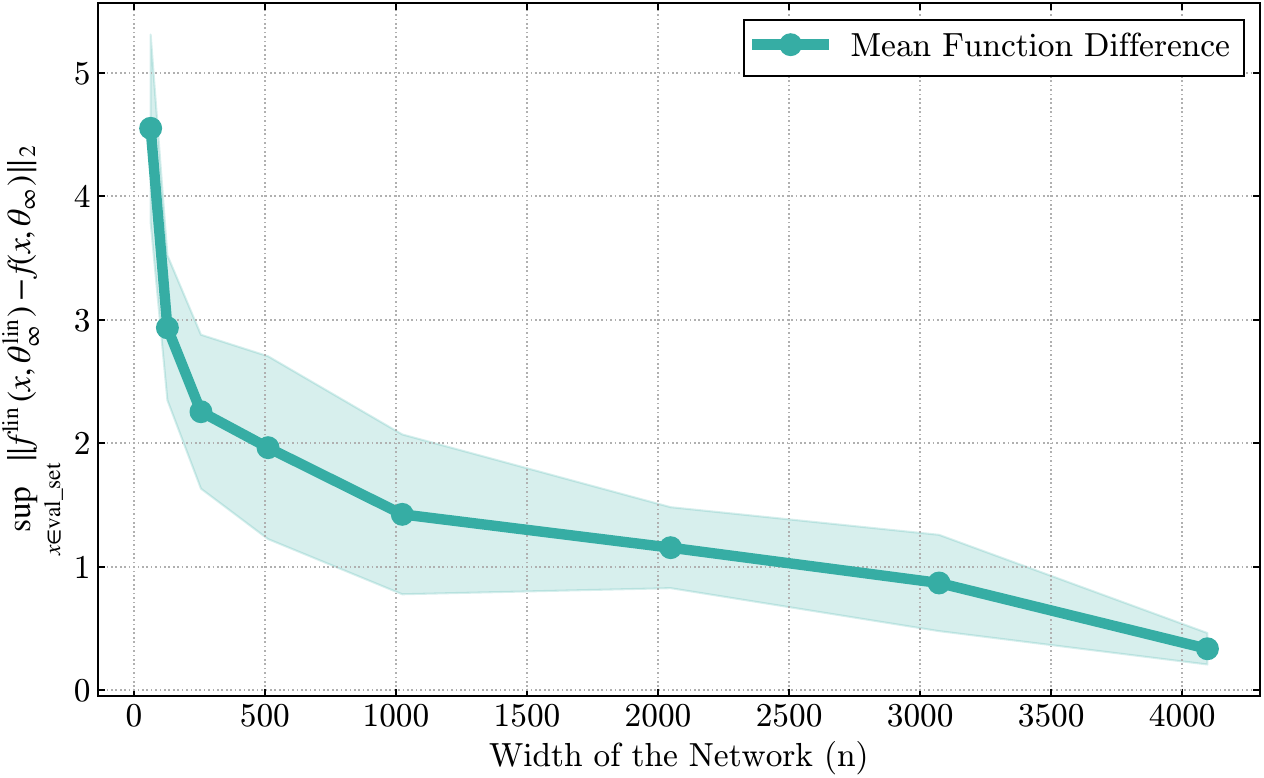}
    \end{subfigure}

    \caption{Parameter and function differences for additional network depths. (Top row) Parameter difference plots from Section \ref{sec: 5.2}. (Bottom row) corresponds to the function difference plots from Section \ref{sec: 5.3}. (Left) Results for one fully connected hidden layer. (Right) Results for an MLP with three fully connected hidden layers. In all cases, increasing the network width reduces both parameter and function differences, confirming the theoretical predictions. $\beta = 0.1$ was used.}
    \label{fig:first_four}
\end{figure}

\begin{figure}[h!]  
    \centering
    \centering
    \includegraphics[width=0.7\linewidth]{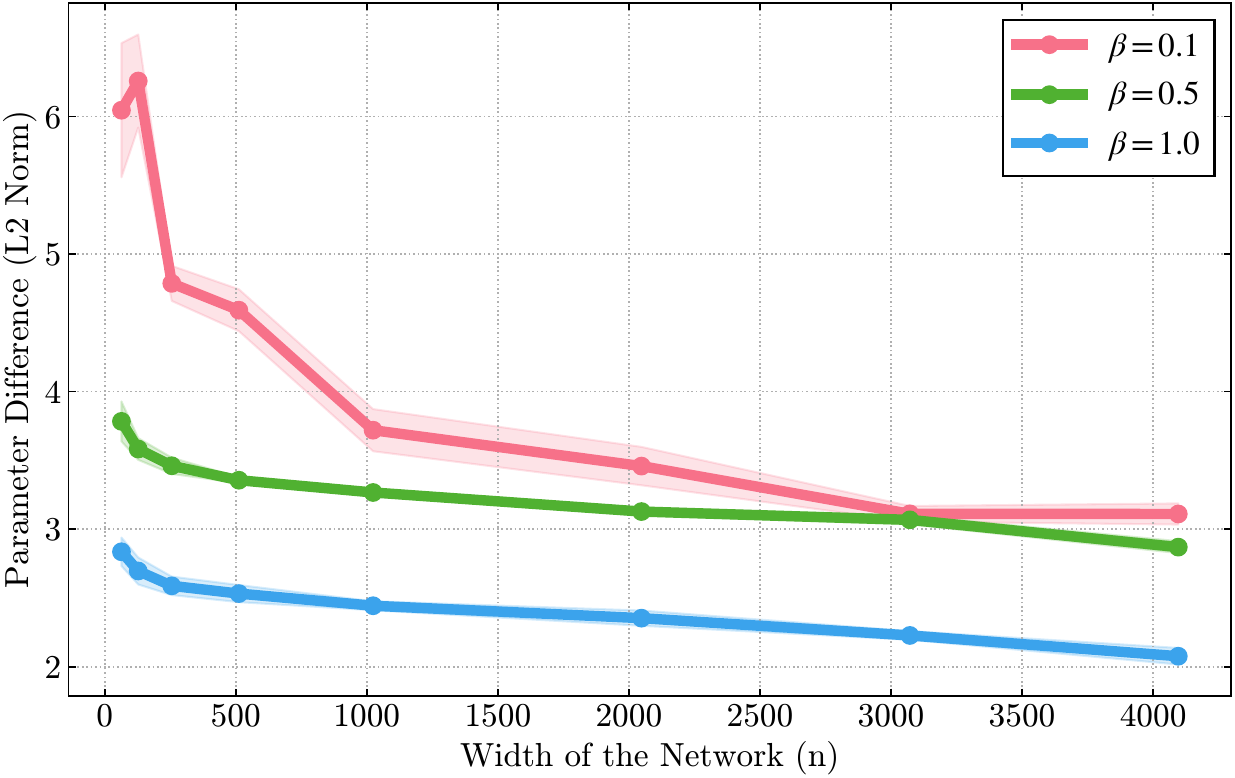}
    \caption{$\ell_2$ norm differences between the trained neural network's parameters and the kernel ridge regression solution plotted against network width for different $\beta$. Shaded regions represent the standard deviation divided by the square root of the number of seeds.}
    \label{fig: params}
\end{figure}

\begin{figure}[h!]  
    \centering
    \includegraphics[width=0.7\linewidth]{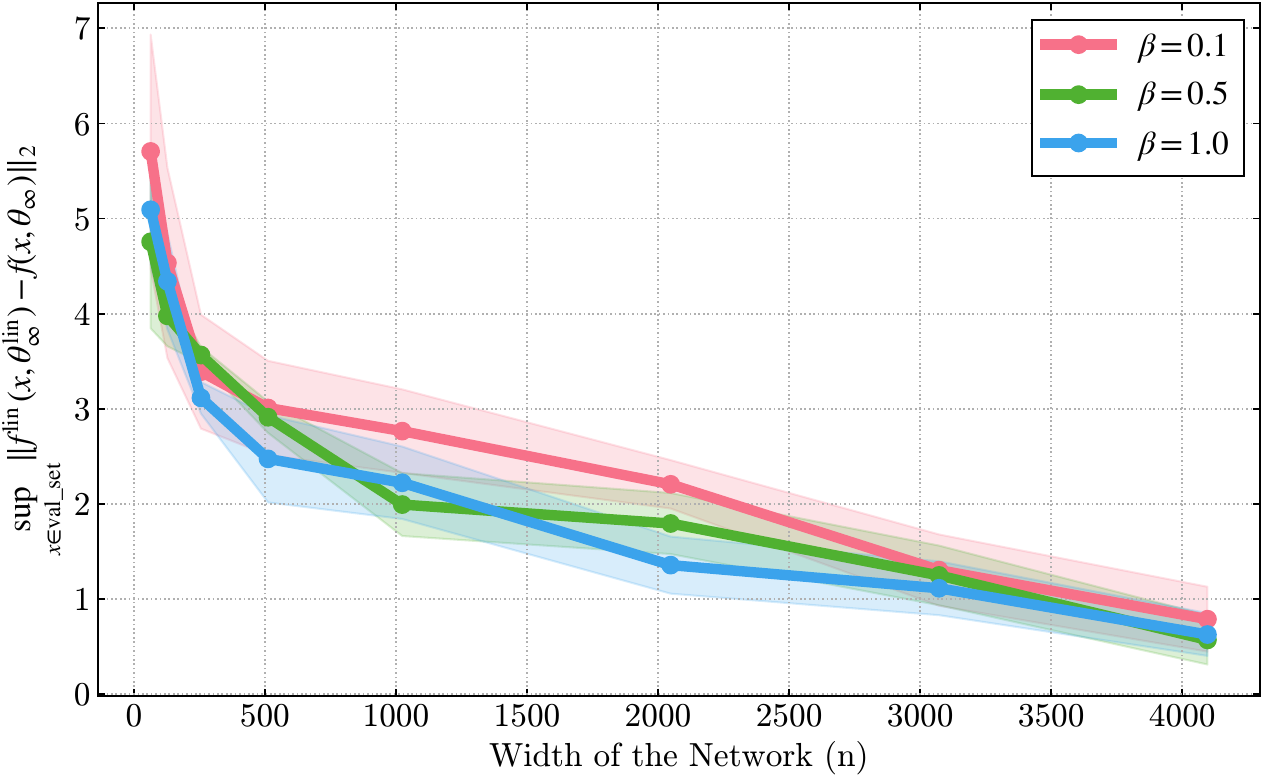}
    \caption{Supremum of the $\ell_2$ norm differences between the trained neural network's outputs \(f(x, \theta_\infty)\) and the linearized model's predictions \(f^{\text{lin}}(x, \theta_\infty^{\text{lin}})\) across the validation set, plotted against network width.\vspace{-0.2in}}
    \label{fig: functions-diff-app}
\end{figure}

\begin{figure}[h!]
    \centering
    \includegraphics[width=0.49\textwidth]{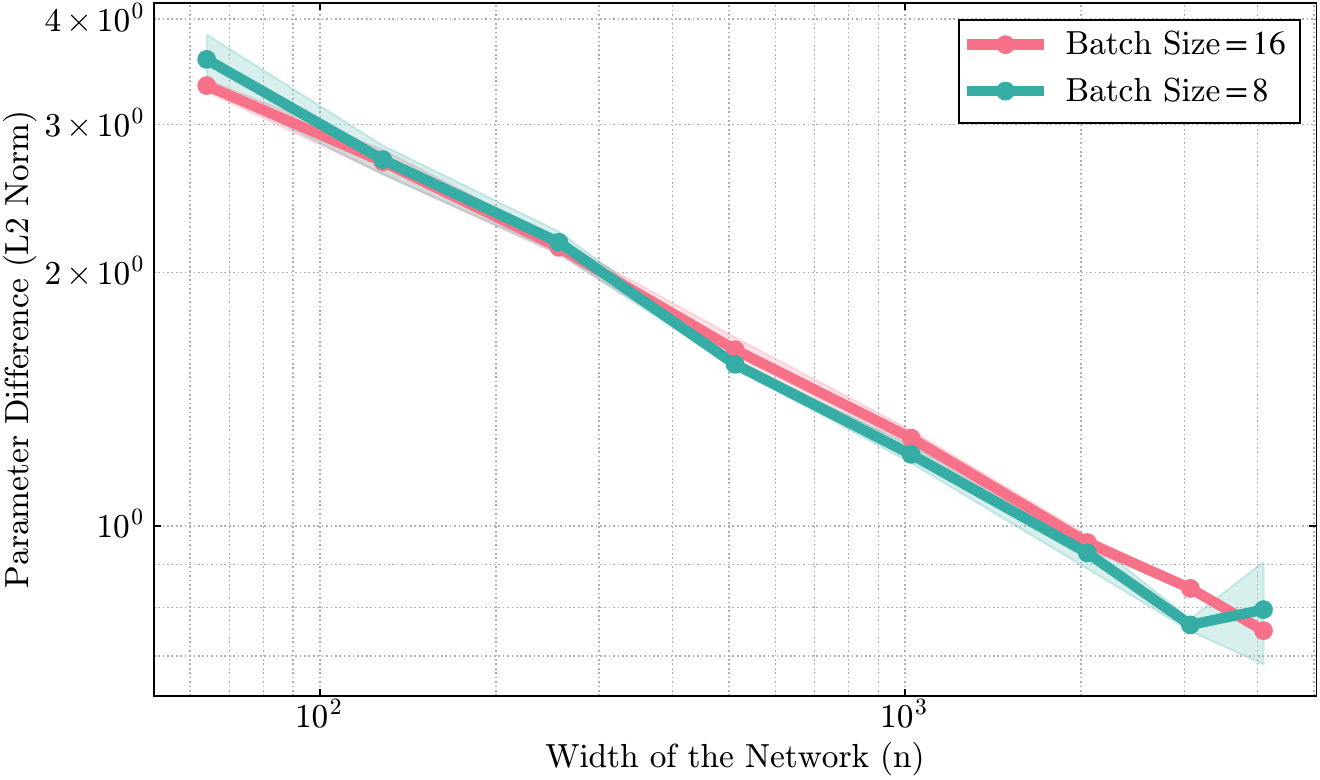}
    \includegraphics[width=0.49\textwidth]{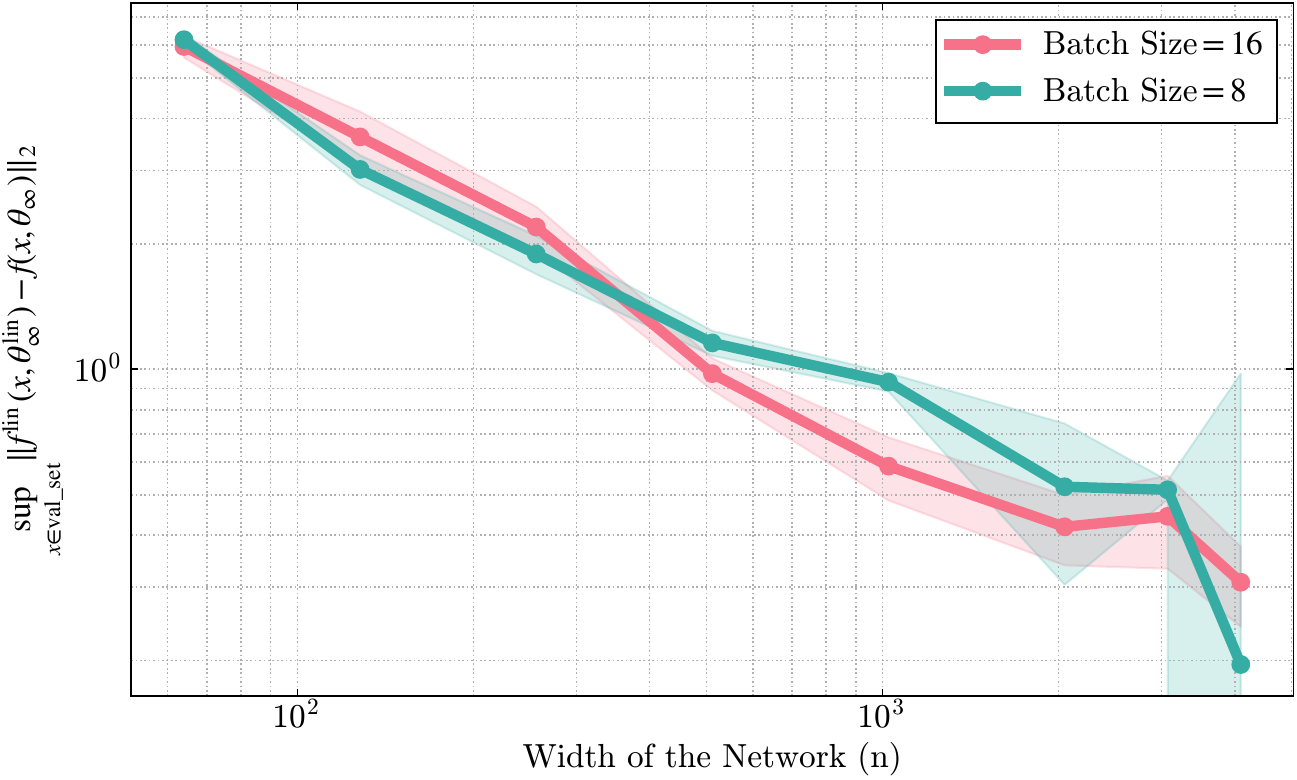}
    \caption{Parameter differences (left) and function differences (right) on the synthetic dataset using mini-batch stochastic gradient descent. Each curve corresponds to a different batch size.}
    \label{fig:SGD-plots}
\end{figure}

\begin{table}[h!]
\centering
\small
\begin{tabular}{l c}
\toprule
\textbf{Method} & \textbf{Test RMSE} $\downarrow$ \\
\midrule
Exact Matern GP & OOM \\
Inducing Points Matern GP (200) & 0.075 \\
NTK-GP via SGD (ours) & 0.099 \\
Ensemble of 5 Wide MLPs & 0.104 \\
Wide MLP (no regularization) & 0.788 \\
Shallow MLP & 0.201 \\
\bottomrule
\end{tabular}
\caption{
Comparison on the Kin8nm regression dataset. We compare our method (NTK-GP posterior mean via SGD with zero prior mean) against exact and approximate Matern GPs, ensembles of wide MLPs \citep{he2020bayesian}, an unregularized wide MLP \citep{lee2019wide}, and a shallow MLP. Exact GP inference resulted in out-of-memory (OOM) issues. Our method outperforms the ensemble while requiring only a single training run and provides a scalable approximation to the GP posterior mean.
}
\label{tab:kin8nm_results}
\end{table}

\clearpage
\newpage
\section{Regularized gradient flow and gradient descent for the linearized network}
\label{appendix: regularized gf/gd for linearized network}
Consider the linearized network $f^{\lin}_{\theta_0}(x, \theta) = f(x,\theta_0) + J(x,\theta_0)(\theta-\theta_0)$. For notational convenience, we drop the dependence on $\theta_0$ throughout the Appendix. In the following, we will consider training the parameters using the regularized training loss 
\begin{equation}
    \mathcal{L}^{\beta, \lin}(\theta) := \frac{1}{2}\sum_{i=1}^N (f^{\lin}(\mathbf{x}_i, \theta) - \mathbf{y}_i)^2 + \frac{1}{2}\beta \lVert \theta - \theta_0 \rVert_2^2.
\end{equation}
\subsection{Regularized gradient flow for the linearized network}
 Then, the evolution of the parameters through gradient flow with learning rate $\eta_0$ is given by
\begin{align}
    \frac{d\theta^{\lin}_t}{dt} 
    &= -\eta_0 \left( J(\theta_0)^{\top} g^{\lin}(\theta^{\lin}_t) + \beta (\theta^{\lin}_t - \theta_0) \right) \\
    &= -\eta_0 \left(J(\theta_0)^{\top} \left( f(\theta_0) + J(\theta_0)(\theta^{\lin}_t - \theta_0) - \mathbf{y} \right) + \beta (\theta^{\lin}_t - \theta_0) \right) \\
    &= -\eta_0 \left(J(\theta_0)^{\top}J(\theta_0) + \beta I_p\right)(\theta^{\lin}_t-\theta_0) - \eta_0J(\theta_0)^{\top}(f(\theta_0)-\mathbf{y}).
\end{align}
This is a multidimensional linear ODE in $\theta^{\lin}_t-\theta_0$. Its unique solution is given by
\begin{align}
    \theta^{\lin}_t 
    &= \theta_0 +  \left( e^{-\eta_0\left(J(\theta_0)^{\top}J(\theta_0) + \beta I_p\right)t} - I_p \right) \left(-\eta_0\left(J(\theta_0)^{\top}J(\theta_0) + \beta I_p \right)\right)^{-1} \left( - \eta_0J(\theta_0)^{\top}(f(\theta_0) - \mathbf{y} ) \right) \\
    &= \theta_0 + \left( I_p - e^{-\eta_0\left(J(\theta_0)^{\top}J(\theta_0) + \beta I_p\right)t} \right) \left(J(\theta_0)^{\top}J(\theta_0) + \beta I_p \right)^{-1} J(\theta_0)^{\top} (\mathbf{y} - f(\theta_0)) \\
    &= \theta_0 + \left( I_p - e^{-\eta_0\left(J(\theta_0)^{\top}J(\theta_0) + \beta I_p\right)t} \right) J(\theta_0)^{\top}\left(J(\theta_0)J(\theta_0)^{\top} + \beta I_N \right)^{-1}  (\mathbf{y} - f(\theta_0)) \\
    &= \theta_0 + \left( J(\theta_0)^{\top} - e^{-\eta_0\left(J(\theta_0)^{\top}J(\theta_0) + \beta I_p\right)t} J(\theta_0)^{\top} \right) \left(J(\theta_0)J(\theta_0)^{\top} + \beta I_N \right)^{-1}  (\mathbf{y} - f(\theta_0)) \\
    &= \theta_0 + J(\theta_0)^{\top} \left(I_N - e^{-\eta_0\left(J(\theta_0)J(\theta_0)^{\top} + \beta I_N \right)t} \right) \left(J(\theta_0)J(\theta_0)^{\top} + \beta I_N \right)^{-1}  (\mathbf{y} - f(\theta_0)).
\end{align}
In the third and fourth equality, we used that for $k\in \mathbb{Z}$,
\begin{equation}
    \left(J(\theta_0)^{\top}J(\theta_0) + \beta I_p \right)^{k} J(\theta_0)^{\top} = J(\theta_0)^{\top}\left(J(\theta_0)J(\theta_0)^{\top} + \beta I_N \right)^{k}.
\end{equation}
Plugging $\theta^{\lin}_t$ into the formula for the linearized network, we get for any point $\mathbf{x}'$,
\begin{align}
    &f^{\lin}(\mathbf{x'},\theta^{\lin}_t) \\
    =& f(\mathbf{x'},\theta_0) + J(\mathbf{x'},\theta_0)J(\theta_0)^{\top} \left( I_N- e^{-\eta_0\left(J(\theta_0)J(\theta_0)^{\top} + \beta I_N\right)t} \right) \left(J(\theta_0)J(\theta_0)^{\top} + \beta I_N \right)^{-1}  (\mathbf{y} - f(\theta_0)) \\
    =&f(\mathbf{x'},\theta_0) + \hat{\mathbf{\Theta}}_{\mathbf{x',x}}\left(I_N - e^{-\eta_0(\hat{\mathbf{\Theta}}_{\mathbf{x,x}} + \beta I_N)t}  \right) \left(\hat{\mathbf{\Theta}}_{\mathbf{x,x}} + \beta I_N \right)^{-1} (\mathbf{y} - f(\theta_0)) .
\end{align}
For training time $t\to\infty$, this gives
\begin{equation}
    f^{\lin}(\mathbf{x'},\theta^{\lin}_\infty)
    = f(\mathbf{x'},\theta_0) + \hat{\mathbf{\Theta}}_{\mathbf{x',x}}\left(\hat{\mathbf{\Theta}}_{\mathbf{x,x}} + \beta I_N \right)^{-1}  (\mathbf{y} - f(\theta_0)).
\end{equation}

\subsection{Regularized gradient descent for the linearized network}
Similarly to gradient flow, the evolution of the parameters through gradient descent (when training the regularized loss given by the linearized network) with learning rate $\eta_0$ is given by
\begin{equation}
    \theta^{\lin}_{t} = \theta^{\lin}_{t-1} -\eta_0 \left(J(\theta_0)^{\top}J(\theta_0) + \beta I_p\right)(\theta^{\lin}_{t-1}-\theta_0) - \eta_0J(\theta_0)^{\top}(f(\theta_0)-\mathbf{y}).
\end{equation}
One may write this as
\begin{equation}
    \theta^{\lin}_{t} - \theta_0 
    = \left(I_p - \eta_0 \left(J(\theta_0)^{\top}J(\theta_0) + \beta I_p\right)\right)(\theta^{\lin}_{t-1}-\theta_0) - \eta_0 J(\theta_0)^{\top} (f(\theta_0) - \mathbf{y}).
\end{equation}
Applying the formula for $\theta_{t}^{\lin}-\theta_0$ iteratively, leads to the following geometric sum:
\begin{align}
    \theta^{\lin}_{t} - \theta_0 
    &= -\eta_0 \sum_{u=0}^{t-1} \left(I_p - \eta_0 \left(J(\theta_0)^{\top}J(\theta_0) + \beta I_p\right)\right)^u J(\theta_0)^{\top}(f(\theta_0) - \mathbf{y}) \\
    &= \eta_0 J(\theta_0)^{\top} \sum_{u=0}^{t-1} \left(I_N - \eta_0 \left(J(\theta_0)J(\theta_0)^{\top} + \beta I_N\right)\right)^u (\mathbf{y} - f(\theta_0)) %\\
\end{align}
\begin{equation}
    = J(\theta_0)^{\top} \left(I_N - \left(I_N - \eta_0\left(J(\theta_0)J(\theta_0)^{\top} + \beta I_N\right)\right)^{t} \right) \left(J(\theta_0)J(\theta_0)^{\top} + \beta I_N\right)^{-1} (\mathbf{y} - f(\theta_0)).
\end{equation}
This converges for $t\to\infty$ if and only if $0 < \eta_0 < \frac{2}{\lambda_{\text{max}}\left( J(\theta_0)J(\theta_0)^{\top}\right) + \beta}$. In that case, it converges (as expected) to the same $\theta_{\infty}^{\lin}$ as the regularized gradient flow. Plugging $\theta_t^{\lin}$ into the formula for the linearized network, we get for any point $\mathbf{x'}$, 
\begin{align}
    &f^{\lin}(\mathbf{x'},\theta^{\lin}_t) \\
    =& f(\mathbf{x'},\theta_0)  + J(\mathbf{x'},\theta_0)J(\theta_0)^{\top} \left(I_N - \left(I_N - \eta_0\left(J(\theta_0)J(\theta_0)^{\top} + \beta I_N\right)\right)^{t} \right) \left(J(\theta_0)J(\theta_0)^{\top} + \beta I_N\right)^{-1} (\mathbf{y} - f(\theta_0)) \\
    =& f(\mathbf{x'},\theta_0) + \hat{\mathbf{\Theta}}_{\mathbf{x',x}}  \left(I_N - \left( I_N - \eta_0\left(\hat{\mathbf{\Theta}}_{\mathbf{x,x}} + \beta I_N\right)\right)^{t} \right) \left(\hat{\mathbf{\Theta}}_{\mathbf{x,x}} + \beta I_N \right)^{-1} (\mathbf{y} - f(\theta_0)).
\end{align}

\section{Revisiting standard and NTK parametrizations, and convergence at initialization}
\label{appendix: standard and NTK parametrization}
In the following, we revisit the standard and the NTK parametrization. First, we repeat the result about the convergence of the NTK for the NTK parametrization at initialization. Then, we formally state how the NTK and standard parametrization are related, which makes it possible to proof the results for standard parametrization by using the results for NTK parametrization. Finally, we argue that using the same learning rate for every parameter under standard parametrization leads to redundancies in the NTK for the first layer and the biases.

\subsection{Convergence of NTK under NTK parametrization at initialization}
Here, we restate the following Theorem from \citet{yang2020tensorprograms2} about the convergence of the NTK at initialization. This was first shown in \citet{jacot2018neural} when taking the limit of layer widths sequentially. 

\begin{theorem}
\label{theorem. NTK initialization convergence}
Consider a standard feedforward neural network in NTK parametrization. Then, the empirical NTK $\hat{\mathbf{\Theta}}_{\mathbf{x',x}}$ converges to a deterministic matrix $\mathbf{\Theta}_{\mathbf{x',x}}$, which we call the analytical NTK:
\begin{equation}
    \hat{\mathbf{\Theta}}_{x',x} 
    = J(x', \theta_0)J(x, \theta_0)^{\top}
    = \sum_{l=1}^{L+1} \left( J(x', W^l)J(x, W^l)^{\top} + J(x', b^l)J(x, b^l)^{\top} \right)
    \xrightarrow{p} \mathbf{\Theta}_{x', x},
\end{equation}
for layer width $n\to \infty$.
\end{theorem}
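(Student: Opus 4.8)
The plan is to prove the statement by exploiting the layer-wise decomposition already written in the display: it suffices to identify the deterministic limit of each summand and then add them. First I would make the per-layer blocks explicit. Writing $\delta^l(x) := \partial f(x,\theta_0)/\partial h^l$ for the backpropagated signal and $x^{l-1}(x)$ for the forward activations, the NTK scaling $h^l_i = \tfrac{1}{\sqrt{n_{l-1}}}\sum_j W^l_{ij}x^{l-1}_j + b^l_i$ gives $\partial f/\partial W^l_{ij} = \delta^l_i\, x^{l-1}_j/\sqrt{n_{l-1}}$, so the weight contribution factorizes as
\begin{equation}
J(x',W^l)J(x,W^l)^\top = \Big(\tfrac{1}{n_{l-1}}\textstyle\sum_j x^{l-1}_j(x')x^{l-1}_j(x)\Big)\Big(\textstyle\sum_i \delta^l_i(x')\delta^l_i(x)\Big),
\end{equation}
while the bias block reduces to the second factor alone. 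The key structural point is that both factors are index-averages over a width-$n$ coordinate, so the strategy is to show each factor concentrates around a deterministic limit and then multiply and sum.

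For the forward factor I would invoke the NNGP correspondence: under the NTK parametrization the pre-activations $h^l(x)$ converge, as $n\to\infty$, to centered Gaussian processes whose covariance $\Sigma^l$ obeys the standard recursion $\Sigma^{l+1}(x,x') = \sigma_{w,l+1}^2\,\mathbb{E}_{(u,v)}[\phi(u)\phi(v)] + \sigma_{b,l+1}^2$, with $(u,v)$ bivariate Gaussian of covariance $\Sigma^l$ on $\{x,x'\}$. By the law of large numbers the empirical second moment $\tfrac{1}{n_{l-1}}\sum_j x^{l-1}_j(x')x^{l-1}_j(x)$ converges in probability to $\mathbb{E}[\phi(h^{l-1}(x'))\phi(h^{l-1}(x))]$. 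The Lipschitz assumptions on $\phi$ and $\phi'$ ensure all these expectations are finite and the maps involved are continuous, so the recursion is well defined and convergence propagates through the layers. For the backward factor I would use the recursion $\delta^l_i(x) = \tfrac{\phi'(h^l_i(x))}{\sqrt{n_l}}\big[(W^{l+1})^\top\delta^{l+1}(x)\big]_i$ with terminal condition $\delta^{L+1}\equiv 1$, and show that $\sum_i \delta^l_i(x')\delta^l_i(x)$ converges to a gradient kernel. Combining the two factors per layer and unrolling yields the standard NTK recursion $\Theta^{l}(x,x') = \Sigma^l(x,x') + \dot\Sigma^l(x,x')\,\Theta^{l-1}(x,x')$, where $\dot\Sigma^l(x,x') = \sigma_{w,l}^2\,\mathbb{E}[\phi'(h^{l-1}(x'))\phi'(h^{l-1}(x))]$, whose closed form is exactly $\mathbf{\Theta}_{x',x}$.

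The main obstacle is that the forward activations $h^l$ and the backward signals $\delta^l$ are \emph{not} independent: the backpropagation in the recursion above reuses the transposed weight matrices $(W^{l+1})^\top$ that already drive the forward pass, so the summands of $\sum_i \delta^l_i(x')\delta^l_i(x)$ are correlated with the forward quantities through shared randomness, and a naïve appeal to the central limit theorem or law of large numbers is invalid. The clean way to handle this is the Tensor Programs framework of \citet{yang2020tensorprograms2}: one writes the full forward--backward computation as a single tensor program and applies its Master Theorem, which takes all widths to infinity simultaneously and rigorously justifies the ``gradient independence'' heuristic that decouples the two factors in the limit. Alternatively, following \citet{jacot2018neural} one can send the layer widths to infinity sequentially, which sidesteps the joint-independence issue at the cost of yielding only an iterated-limit statement. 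I would adopt the Tensor Programs route, since it delivers the simultaneous-limit convergence in probability asserted by the theorem.
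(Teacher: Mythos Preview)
The paper does not prove this theorem at all: it is stated in the appendix as a \emph{restatement} of a known result, with explicit attribution to \citet{yang2020tensorprograms2} for the simultaneous-width limit and to \citet{jacot2018neural} for the sequential-limit version. There is therefore no ``paper's own proof'' to compare your proposal against.

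That said, your outline is a faithful sketch of the argument found in those references. You correctly identify the per-layer factorization into a forward activation kernel and a backward gradient kernel, the NNGP and NTK recursions, and---crucially---the dependence issue between forward and backward passes through the shared weight matrices. Your choice to resolve this via the Tensor Programs Master Theorem is exactly what the paper cites for the simultaneous limit, and your mention of the sequential-limit alternative matches the paper's citation of \citet{jacot2018neural}. So your proposal is not wrong, but for the purposes of this paper you should simply cite the result rather than reprove it; the paper treats it as an imported black-box lemma used downstream (e.g., in Lemma~\ref{lemma: empirical NTK positive eigenvalue}).
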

Define $\mathbf{\Theta} := \mathbf{\Theta}_{\mathbf{x}, \mathbf{x}} \in \mathbb{R}^{N\times N}$ as the analytical NTK on the training points.
We will assume $\lambda_{\min}(\mathbf{\Theta}) > 0$. A sufficient condition for this is that $\lVert \mathbf{x}_i \rVert_2 = 1 \; \forall i$, and that $\phi$ grows non-polynomially for large $x$, see \citet{jacot2018neural}. This directly implies that for $n$ large enough, the minimum eigenvalue of the analytical NTK is lower bounded by a positive number with high probability:
\begin{lemma}
\label{lemma: empirical NTK positive eigenvalue}
For any $\delta_0 > 0$, there is $n$ large enough, such that with probability of at least $1-\delta_0$, for the minimum eigenvalue of the empirical NTK,
\begin{equation}
    \lambda_{\min}\left(J(\theta_0)J(\theta_0)^{\top} \right) \ge \frac{1}{2}\lambda_{\min}(\mathbf{\Theta}),
    \text{ and } \lambda_{\max}\left(J(\theta_0)J(\theta_0)^{\top} \right) \le 2\lambda_{\max}(\mathbf{\Theta}).
\end{equation}
\end{lemma}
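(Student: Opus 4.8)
The plan is to derive both eigenvalue bounds from a single perturbation argument, combining the convergence of the empirical NTK established in Theorem~\ref{theorem. NTK initialization convergence} (applied with $\mathbf{x}' = \mathbf{x}$, so that $\hat{\mathbf{\Theta}}_{\mathbf{x},\mathbf{x}} = J(\theta_0)J(\theta_0)^{\top} \xrightarrow{p} \mathbf{\Theta}$) with the Lipschitz continuity of eigenvalues in the spectral norm. Since $\mathbf{\Theta} \in \mathbb{R}^{N\times N}$ is a fixed, finite-dimensional matrix, all matrix norms on this space are equivalent, so the stated convergence in probability holds in particular in the spectral norm. Consequently, for any prescribed $\epsilon > 0$ and any $\delta_0 > 0$, there is $n$ large enough so that, with probability at least $1-\delta_0$ over the random initialization, $\lVert J(\theta_0)J(\theta_0)^{\top} - \mathbf{\Theta} \rVert_2 \le \epsilon$.

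The key analytic ingredient is Weyl's perturbation inequality: for symmetric matrices $A, B$, each sorted eigenvalue obeys $|\lambda_k(A) - \lambda_k(B)| \le \lVert A - B \rVert_2$. Both $J(\theta_0)J(\theta_0)^{\top}$ (a Gram matrix) and $\mathbf{\Theta}$ (the analytical NTK) are symmetric positive semidefinite, so this applies to their extreme eigenvalues and gives, on the high-probability event above, $|\lambda_{\min}(J(\theta_0)J(\theta_0)^{\top}) - \lambda_{\min}(\mathbf{\Theta})| \le \epsilon$ and the analogous statement for $\lambda_{\max}$.

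It then remains only to fix $\epsilon$. Taking $\epsilon := \tfrac{1}{2}\lambda_{\min}(\mathbf{\Theta})$ --- which is strictly positive by the standing assumption $\lambda_{\min}(\mathbf{\Theta}) > 0$ --- yields $\lambda_{\min}(J(\theta_0)J(\theta_0)^{\top}) \ge \lambda_{\min}(\mathbf{\Theta}) - \epsilon = \tfrac{1}{2}\lambda_{\min}(\mathbf{\Theta})$, and the same $\epsilon$ gives $\lambda_{\max}(J(\theta_0)J(\theta_0)^{\top}) \le \lambda_{\max}(\mathbf{\Theta}) + \tfrac{1}{2}\lambda_{\min}(\mathbf{\Theta}) \le \tfrac{3}{2}\lambda_{\max}(\mathbf{\Theta}) \le 2\lambda_{\max}(\mathbf{\Theta})$, using $\lambda_{\min}(\mathbf{\Theta}) \le \lambda_{\max}(\mathbf{\Theta})$. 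Both inequalities hold simultaneously on the same event, which proves the claim.

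I do not anticipate any substantial obstacle here: the result is essentially a matrix-perturbation corollary of Theorem~\ref{theorem. NTK initialization convergence}. The only points requiring minor care are making the qualitative ``convergence in probability'' quantitative by choosing $\epsilon$ \emph{before} selecting $n$, and verifying that the single threshold $\epsilon = \tfrac{1}{2}\lambda_{\min}(\mathbf{\Theta})$ is simultaneously small enough to secure both the lower and the upper eigenvalue bound.
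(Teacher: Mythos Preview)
Your proposal is correct and matches the paper's approach: the paper states the lemma as a direct consequence of Theorem~\ref{theorem. NTK initialization convergence} without spelling out the details, and your Weyl-inequality perturbation argument is precisely the natural way to fill those in. The single choice $\epsilon = \tfrac{1}{2}\lambda_{\min}(\mathbf{\Theta})$ handling both bounds simultaneously is clean and correct.
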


\subsection{Equivalence of NTK parametrization to standard parametrization with layer-dependent learning rates}
\label{appendix: Standard param to NTK subappendix}
In this section, we will formally show how the NTK parametrization relates to the standard parametrization of neural networks. This makes it possible to prove results for standard parametrization by using the results for NTK parametrization, instead of having to prove them again.

Remember that the number of parameters is $p = \sum_{l=1}^{L+1} (n_{l-1} + 1) n_l $.
Define the diagonal matrix $H \in \mathbb{R}^{p\times p}$ through 
\begin{equation}
H := \diag(H_{w,1}, H_{b,1}, \ldots, H_{w,L+1}, H_{b, L+1}),
\end{equation}
where $H_{w,l} := \frac{1}{n_{l-1}} I_{n_{l-1}n_l}$, and $H_{b,l} := I_{n_l}$. The diagonal of $H^{\frac12}$ contains the scalars by which each parameter is multiplied when going from NTK parametrization to standard parametrization. For $\theta^{\std}_0$ initialized in standard parametrization, define
\begin{equation}
    \theta^{\ntk}_0 := H^{-\frac12} \theta^{\std}_0.
\end{equation}
Then, $\theta^{\ntk}_0$ is initialized as in NTK parametrization. Further, let $f^{\std}$ denote a neural network in standard parametrization. Then, 
\begin{equation}
    f^{\ntk}(x, \theta^{\ntk}) := f^{\std}(x, H^{\frac12}\theta^{\ntk}),
\end{equation}
defines a neural network in NTK parametrization. Differentiating gives
\begin{equation}
    J^{\ntk}(x,\theta^{\ntk}) = J^{\std}(x,H^{\frac12}\theta^{\ntk}) H^{\frac12}, \quad
    \hat{\mathbf{\Theta}}_{x',x}^{\ntk} = J^{\std}(x',H^{\frac12}\theta^{\ntk}) H J^{\std}(x,H^{\frac12}\theta^{\ntk}).
\end{equation}

Motivated by this, define the following regularized loss:
\begin{equation}
    \mathcal{L}^{\beta, \std}(\theta) := \frac{1}{2}\sum_{i=1}^N (f^{\std}(\mathbf{x}_i, \theta) - \mathbf{y}_i)^2 + \frac12 \beta (\theta-\theta_0)^{\top}H^{-1}(\theta-\theta_0).
\end{equation}
The resulting gradient is
\begin{equation}
    \nabla_{\theta} \mathcal{L}^{\beta, \std}(\theta) 
    = J^{\std}(\theta)^{\top} g^{\std}(\theta) + \beta H^{-1} (\theta-\theta_0) .
\end{equation}
Define $\theta^{\std}_t$ as parameters evolving by gradient flow of the regularized objective in standard parametrization\footnote{The existence of a unique solution of this ODE will follow from the relation to the gradient flow under NTK parametrization.}, with layer-dependent learning rate $\eta_0H$:
\begin{equation}
    \frac{d\theta^{\std}_t}{dt} 
    = -\eta_0H \nabla_{\theta}\mathcal{L}^{\beta,\std}(\theta^{\std}_t)
    = -\eta_0 H J^{\std}(\theta^{\std}_t)^{\top}g^{\std}(\theta^{\std}_t) - \eta_0 \beta(\theta^{\std}_t - \theta^{\std}_0).
\end{equation}
We define $\theta^{\ntk}_t := H^{-\frac12}\theta^{\std}_t$. Then, as $\frac{d\theta^{\ntk}_t}{dt} = H^{-\frac12}\frac{d\theta^{\std}_t}{dt} $,
\begin{align}
    \frac{d\theta^{\ntk}_t}{dt}
    &= -\eta_0 H^{\frac{1}{2}}J^{\std}(\theta^{\std}_t)^{\top}g^{\std}(\theta^{\std}_t) - \eta_0 H^{-\frac12}\beta (\theta^{\std}_t - \theta^{\std}_0) \\
    &= -\eta_0 \left(J^{\std}(H^{\frac12}\theta^{\ntk}_t) H^{\frac12} \right)^{\top} g^{\std}(H^{\frac12}\theta^{\ntk}_t) - \eta_0\beta (\theta^{\ntk}_t - \theta^{\ntk}_0) \\
    &= -\eta_0 J^{\ntk}(\theta^{\ntk}_t)^{\top} g^{\ntk}(\theta^{\ntk}_t) - \eta_0\beta (\theta^{\ntk}_t - \theta^{\ntk}_0).
\end{align}
Thus, $\theta^{\ntk}_t$ follows the regularized gradient flow of the objective under NTK parametrization with learning rate $\eta_0$. Now, we can apply our results for NTK parametrization from above, and transfer them to standard parametrization by using $\theta^{\std}_t = H^{\frac12}\theta^{\ntk}_t$, $f^{\std}(x,\theta^{\std}_t) = f^{\ntk}(x,H^{-\frac12}\theta^{\std}_t)$.

\subsection{Redundancies when using the same learning rate for standard parametrization}
In the previous section, we established the equivalence between training a neural network in NTK parametrization with learning rate $\eta_0$, and a neural network in standard parametrization with layer-dependent learning rate $\eta_0 H$. By definition, the learning rate for the first layer is $\frac{1}{n_0}\eta_0 = \frac{1}{d}\eta_0$, and the one for the biases is $\eta_0$. The learning rate for the other weight matrices is $\frac{1}{n_{l-1}}\eta_0 = \frac{1}{n}\eta_0$, for $l=2,\ldots,L+1$. Note that the convergence of the learning rates to $0$ for $n\to\infty$ is necessary to stabilize the gradient.

The learning rate that was used in the proof of \citet{lee2019wide} is $\frac{1}{n}\eta_0$ for any layer. In the following, we will argue that this effectively leads to the first layer, and the biases not being trained in the infinite-width limit. For simplicity, let $\beta=0$. Remember that \citet{lee2019wide} shows that using the learning rate $\frac{1}{n}\eta_0$ for each layer in standard parametrization, leads to the trained network for large width being driven by the standard parametrization NTK 
\begin{equation}
    \frac{1}{n}J^{\std}(x',\theta_0)J^{\std}(x,\theta_0)^{\top}
    = \frac{1}{n} \sum_{l=1}^{L+1} \left(J^{\std}(x', W^l_0)J^{\std}(x,W^l_0)^{\top} + J^{\std}(x', b^l_0)J^{\std}(x,b^l_0)^{\top}\right).
\end{equation}
By using the equivalences from the previous section, we may write for $l=2,\ldots,L+1$ (using $H_{w,l} = \frac{1}{n}I_{n_{l-1}n_l}$):
\begin{align}
    \frac{1}{n} J^{\std}(x',W_0^l)J^{\std}(x,W_0^l)^{\top}
    &= \left( J^{\std}(H_{w,l}^{\frac12}\sqrt{n}W_0^l) H_{w,l}^{\frac12}\right) \left( J^{\std}(H_{w,l}^{\frac12}\sqrt{n}W_0^l) H_{w,l}^{\frac12}\right)^{\top} \\
    &= J^{\ntk}(\sqrt{n}W_0^l) J^{\ntk}(\sqrt{n}W_0^l)^{\top}.
\end{align}
This is equal to the empirical NTK under the NTK parametrization for the weights $\sqrt{n}W_{0,i,j}^l \sim \mathcal{N}(0,\sigma_{w,l})$ of the $l$-th layer.
However, for the first layer, we get (using $H_{w,1} = \frac{1}{d}I_{dn}$)
\begin{align}
    \frac{1}{n} J^{\std}(x', W_0^1)J^{\std}(x, W_0^1)^{\top}
    &= \frac{d}{n} \left( J^{\std}(H_{w,1}^{\frac12}\sqrt{d}W_0^1) H_{w,1}^{\frac12}\right) \left( J^{\std}(H_{w,1}^{\frac12}\sqrt{d}W_0^1) H_{w,1}^{\frac12}\right)^{\top} \\
    &= \frac{d}{n} J^{\ntk}(\sqrt{d}W_0^1) J^{\ntk}(\sqrt{d}W_0^1)^{\top} \\
    &\to 0, \text{ for } n\to \infty,
\end{align}
as $J^{\ntk}(\sqrt{d}W_0^1) J^{\ntk}(\sqrt{d}W_0^1)^{\top}$ converges by Theorem \ref{theorem. NTK initialization convergence}.
Similarly for the biases, for $l=1,\ldots,L+1$:
\begin{equation}
    \frac{1}{n}J^{\std}(x',b_0^l)J^{\std}(x,b_0^l)^{\top}
    = \frac{1}{n} J^{\ntk}(x', b_0^l)J^{\ntk}(x,b_0^l)^{\top}
    \to 0, \text{ for } n\to \infty.
\end{equation}
Thus, the analytical standard parametrization NTK of \citet{lee2019wide} doesn't depend on the contribution of the gradient with respect to the first layer and the biases. In other words, using the learning rate $\frac{1}{n}\eta_0$ for the first layer and the biases leads to them not being trained for large widths.

Instead, one may scale the learning rates ``correctly", as motivated by the NTK parametrization in the previous section. For large widths $n$, the trained network is then governed by the following modified NTK for standard parametrization:
\begin{align}
    J^{\std}(\theta)H J^{\std}(\theta)
    =& J^{\std}(W^1)J^{\std}(W^1)^{\top} + J^{\std}(b^1)J^{\std}(b^1)^{\top} \\
    &+ \sum_{l=2}^{L+1} \left( \frac{1}{n}J^{\std}(W^l)J^{\std}(W^l)^{\top} + J^{\std}(b^l)J^{\std}(b^l)^{\top} \right).
\end{align}
% For simplicity, we will not train the first layer and the biases.

\section{Proof for regularized gradient flow}
\label{appendix: Proof regularized gradient flow}
\subsection{Exponential Decay of the Regularized Gradient and Closeness of Parameters to their Initial Value}
\label{appendix: proof regularized gradient flow part 1 (exponential decay)}
\begin{lemma}
\label{lemma: g(theta0) bounded}
Let $\beta\ge 0$. We have for any $t\ge 0$: $\lVert g(\theta_t) \rVert_2 \le \lVert g(\theta_0) \rVert_2$. Further, for any $\delta_0>0$, there is $R_0>0$, such that for $n$ large enough, with probability of at least $1-\delta_0$ over random initialization, $\lVert g(\theta_0) \rVert_2 \le R_0$.
\end{lemma}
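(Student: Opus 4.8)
The plan is to treat the two claims separately, as they rest on different ideas. For the first claim, $\lVert g(\theta_t)\rVert_2 \le \lVert g(\theta_0)\rVert_2$, I would \emph{not} try to control the data-fitting term $\tfrac{1}{2}\lVert g(\theta_t)\rVert_2^2$ directly: differentiating it in time produces a cross term $-\eta\beta\, g(\theta_t)^\top J(\theta_t)(\theta_t-\theta_0)$ of indefinite sign once $\beta>0$, so monotonicity of the training error alone is unavailable. Instead I would exploit monotonicity of the \emph{full} regularized objective along the flow. Differentiating $\mathcal{L}^{\beta}(\theta_t)$ and using $\dot\theta_t = -\eta\nabla_\theta\mathcal{L}^\beta(\theta_t)$ gives
\begin{equation}
\frac{d}{dt}\mathcal{L}^{\beta}(\theta_t) = \langle \nabla_\theta\mathcal{L}^{\beta}(\theta_t), \dot\theta_t\rangle = -\eta\lVert \nabla_\theta\mathcal{L}^{\beta}(\theta_t)\rVert_2^2 \le 0,
\end{equation}
so $\mathcal{L}^\beta$ is non-increasing. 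Since the penalty vanishes at $t=0$, $\mathcal{L}^{\beta}(\theta_0) = \tfrac{1}{2}\lVert g(\theta_0)\rVert_2^2$; and since $\beta\ge 0$ makes the penalty non-negative, $\mathcal{L}^{\beta}(\theta_t) \ge \tfrac{1}{2}\lVert g(\theta_t)\rVert_2^2$. Chaining these yields $\tfrac{1}{2}\lVert g(\theta_t)\rVert_2^2 \le \mathcal{L}^{\beta}(\theta_t) \le \mathcal{L}^{\beta}(\theta_0) = \tfrac{1}{2}\lVert g(\theta_0)\rVert_2^2$, which is exactly the first claim. The only prerequisite is existence of the gradient-flow trajectory, established elsewhere in the paper.

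For the second claim I would first bound $\lVert g(\theta_0)\rVert_2 = \lVert f(\mathbf{x},\theta_0) - \mathbf{y}\rVert_2 \le \lVert f(\mathbf{x},\theta_0)\rVert_2 + \lVert \mathbf{y}\rVert_2$, where $\lVert\mathbf{y}\rVert_2$ is a fixed constant, and then control the random vector $f(\mathbf{x},\theta_0)\in\mathbb{R}^N$ of outputs at the $N$ training points. Under the NTK parametrization the forward pass is stable as $n\to\infty$, so $f(\mathbf{x},\theta_0)$ converges in distribution to a centered $N$-dimensional Gaussian governed by the NNGP kernel (\citet{lee2018dnnsgps}). A convergent-in-distribution family is tight, so for any $\delta_0>0$ there is an $M$ with $\mathbb{P}(\lVert f(\mathbf{x},\theta_0)\rVert_2 > M) < \delta_0$ for all $n$ large enough; setting $R_0 := M + \lVert\mathbf{y}\rVert_2$ closes the argument. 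If a more self-contained route is preferred, I would instead bound the second moment: forward-pass stability gives that $\mathbb{E}\,\lVert f(\mathbf{x},\theta_0)\rVert_2^2 = \sum_{i=1}^N \mathbb{E}\, f(\mathbf{x}_i,\theta_0)^2$ stays uniformly bounded as $n$ grows (each term converging to the NNGP diagonal $\mathcal{K}_{ii}$), and then apply Markov's inequality to $\lVert f(\mathbf{x},\theta_0)\rVert_2^2$ to extract the high-probability bound with an explicit $R_0 = O(\delta_0^{-1/2})$.

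The first claim is essentially bookkeeping once one passes from $\lVert g\rVert_2$ to the full loss. The main obstacle is the second claim: making precise that the initialization outputs are bounded in probability \emph{uniformly} in $n$, i.e.\ that a single $R_0$ works for all sufficiently large $n$ rather than an $n$-dependent bound. The cleanest path leans on the established NNGP convergence, which supplies this uniformity for free via tightness of the convergent family; an elementary alternative would require exactly the uniform-in-$n$ second-moment estimate on the forward pass that underlies the NNGP limit in any case.
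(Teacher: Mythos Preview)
Your proposal is correct and mirrors the paper's proof essentially step for step: the first claim is obtained by differentiating the full regularized loss along the flow to get $\frac{d}{dt}\mathcal{L}^{\beta}(\theta_t)=-\eta_0\lVert\nabla_\theta\mathcal{L}^{\beta}(\theta_t)\rVert_2^2\le 0$ and then sandwiching $\tfrac12\lVert g(\theta_t)\rVert_2^2\le \mathcal{L}^{\beta}(\theta_t)\le \mathcal{L}^{\beta}(\theta_0)=\tfrac12\lVert g(\theta_0)\rVert_2^2$, and the second claim is reduced to the NNGP convergence of $f(\mathbf{x},\theta_0)$ to a centered Gaussian. Your additional remarks (the triangle-inequality split, the explicit tightness/Markov alternatives) only make more explicit what the paper states in one line, so there is no substantive difference in approach.
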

\begin{proof}
Using the chain rule and the definition of the gradient flow, we have
\begin{equation}
    \frac{d}{dt} \mathcal{L}^{\beta}(\theta_t)
    = \nabla_{\theta}\mathcal{L}^{\beta}(\theta_t) ^{\top} \frac{d\theta_t}{dt}
    =-\eta_0 \nabla_{\theta}\mathcal{L}^{\beta}(\theta_t) ^{\top} \nabla_{\theta}\mathcal{L}^{\beta}(\theta_t)
    =-\eta_0 \lVert \nabla_{\theta}\mathcal{L}^{\beta}(\theta_t) \rVert_2^2 \le 0.
\end{equation}
Thus, 
\begin{equation}
    \frac12\lVert g(\theta_t) \rVert_2^2 
    \le \frac12\lVert g(\theta_t) \rVert_2^2 + \frac12 \beta \lVert \theta_t - \theta_0 \rVert_2^2
    = \mathcal{L}^{\beta}(\theta_t) 
    \le \mathcal{L}^{\beta}(\theta_0) 
    = \frac{1}{2}\lVert g(\theta_0) \rVert_2^2.
\end{equation}
Hence, $\lVert g(\theta_t) \rVert_2 \le \lVert g(\theta_0) \rVert_2$. Further, note that $f(\theta_0)$ converges in distribution to a Gaussian with mean zero and covariance given by the NNGP kernel \citep{lee2018dnnsgps}. Thus, for $n$ large enough, one can bound $\lVert g(\theta_0) \rVert_2$ with high probability.
\end{proof}
The proof of this Lemma does not apply to the weight-decay regularizer $\frac{1}{2}\lVert \theta \rVert_2^2$ analyzed in \citet{lewkowycz2020training}: In general, $\frac{1}{2}\lVert \theta_0 \rVert_2^2 \neq 0$, and thus we can not use $\mathcal{L}^{\beta}(\theta_0) = \frac{1}{2}\lVert g(\theta_0) \rVert_2^2$. They show that using gradient flow with this regularizer leads to $f_{\theta_{\infty}}(\cdot)=0$ in the infinite-width limit, and we are thus not in the NTK regime.

We restate Theorem \ref{theorem: Exponential decay gradient, parameters stay close} for convenience.
\gftheorempartone*
\begin{proof}
Using Lemma \ref{lemma: g(theta0) bounded}, there is $R_0>0$, such that for $n$ large enough, with probability of at least $1-\frac{1}{3}\delta_0$ over random initialization, $\lVert g(\theta_0) \rVert_2 \le R_0$.
Further, using Lemma \ref{lemma: Jacobian Lipschitz}, let $K$ be the constant for local Lipschitzness/Boundedness of the Jacobian with probability $1-\frac{1}{3}\delta_0$ for $n$ large enough.
Finally, by Lemma \ref{lemma: empirical NTK positive eigenvalue}, for $n$ large enough, with probability of at least $1-\frac{1}{3}\delta_0$ over random initialization, the minimum eigenvalue of the empirical NTK is lower bounded: $\lambda_{\min}(J(\theta_0) J(\theta_0)^{\top}) \ge \frac12 \lambda_{\min}(\mathbf{\Theta})$.\footnote{We will only need this for $\beta = 0$.}
For $n$ large enough, these three events hold with probability of at least $1-\delta_0$ over random initialization. In the following, we consider such initializations $\theta_0$.

Define $c_{\beta} := \frac{1}{2}\beta$ for $\beta > 0$, and $c_{\beta} := \frac{1}{3}\lambda_{\min}(\mathbf{\Theta})$ for $\beta = 0$.\footnote{We note that we could choose any constant smaller than $\beta$ for $\beta > 0$, and similarly, and constant smaller than $\lambda_{\min}(\mathbf{\Theta})$ for $\beta = 0$.} Let $C := \frac{KR_0}{c_{\beta}}$. By Lemma \ref{lemma: Jacobian Lipschitz}, the gradient flow ODE has a unique solution as long as $\theta_t \in B(\theta_0,C)$. Consider $t_1 := \inf\{t\ge 0: \lVert \theta_t - \theta_0 \rVert_2 \ge C \}$. In the following, let $t\le t_1$. Remember that
\begin{equation}
    \frac{d\theta_t}{dt} = -\eta_0 \nabla_{\theta} \mathcal{L}^{\beta}(\theta_t) = -\eta_0 \left(J(\theta_t)^{\top}g(\theta_t) + \beta(\theta_t-\theta_0) \right).
\end{equation}
We want to show that the gradient $\nabla_{\theta} \mathcal{L}^{\beta}(\theta_t)$ of the regularized loss converges to $0$ quickly, and hence $\theta_t$ doesn't move much. For $\beta=0$, its norm is $\lVert J(\theta_t)^{\top}g(\theta_t)\rVert_2$ and hence \citet{lee2019wide} and related proofs showed that $\lVert g(\theta_t) \rVert_2$ converges to $0$ quickly. However, for $\beta>0$, this is not the case, as the training error won't converge to $0$. Instead, we directly look at the dynamics of the norm of the gradient:
\begin{align}
    \frac{d}{dt}\lVert \nabla_{\theta} \mathcal{L}^{\beta}(\theta_t) \rVert_2^2
    &= 2 \left( \nabla_{\theta} \mathcal{L}^{\beta}(\theta_t) \right)^{\top} \nabla_{\theta}^2 \mathcal{L}^{\beta}(\theta_t) \frac{d\theta_t}{dt} \\
    &= -2\eta_0 \left( \nabla_{\theta} \mathcal{L}^{\beta}(\theta_t) \right)^{\top} \nabla_{\theta}^2 \mathcal{L}^{\beta}(\theta_t) \left( \nabla_{\theta} \mathcal{L}^{\beta}(\theta_t) \right).
\end{align}

The Hessian $\nabla_{\theta}^2 \mathcal{L}^{\beta}(\theta_t) \in \mathbb{R}^{p\times p}$ of the regularized loss is given by
\begin{equation}
    \nabla_{\theta}^2 \mathcal{L}^{\beta}(\theta_t)
    = g(\theta_t)^{\top} \nabla_{\theta}^2f(\theta_t) + J(\theta)^{\top}J(\theta_t) + \beta I_p.
\end{equation}
Here, $\nabla_{\theta}^2f(\theta) \in \mathbb{R}^{N\times p\times p}$, and $g(\theta_t)^{\top} \nabla_{\theta}^2f(\theta_t) = \sum_{i=1}^N g(\mathbf{x}_i,\theta_t)\nabla_{\theta}^2f(\mathbf{x}_i,\theta_t)$. 
By using the triangle inequality and Cauchy-Schwarz,
\begin{align}
    \lVert g(\theta_t)^{\top} \nabla_{\theta}^2 f(\theta)  \rVert_2 
    \le \sum_{i=1}^N |g(\mathbf{x}_i, \theta_t)| \lVert \nabla_{\theta}^2f(\mathbf{x}_i,\theta_t) \rVert_2
    \le \lVert g(\theta_t) \rVert_2 \sqrt{\sum_{i=1}^N \lVert \nabla_{\theta}^2f(\mathbf{x}_i,\theta_t) \rVert_2^2}.
\end{align}
Using Lemma \ref{lemma: g(theta0) bounded}, we have $\lVert g(\theta_t) \rVert_2 \le \lVert g(\theta_0) \rVert_2 \le R_0$. Further, by Lemma \ref{lemma: Jacobian Lipschitz}, we have $\lVert \nabla_{\theta}^2f(\mathbf{x}_i,\theta_t) \rVert_2 \le \frac{(\log n)^c}{\sqrt{n}}K'$.
Thus (with $K=\sqrt{N}K'$),
\begin{equation}
    \lVert g(\theta_t)^{\top} \nabla_{\theta}^2 f(\theta)  \rVert_2 \le \frac{(\log n)^c}{\sqrt{n}} K R_0.
\end{equation}
As $g(\theta_t)^{\top} \nabla_{\theta}^2 f(\theta)$ is symmetric, it follows for its minimum eigenvalue, that
\begin{equation}
    \lambda_{\min}\left( g(\theta_t)^{\top} \nabla_{\theta}^2 f(\theta) \right)
    \ge - \lVert g(\theta_t)^{\top} \nabla_{\theta}^2 f(\theta)  \rVert_2 
    \ge - \frac{(\log n)^c}{\sqrt{n}} K R_0.
\end{equation}
Now consider $\beta > 0$. Then, we can follow that for $n$ large enough, the smallest eigenvalue of the Hessian of the regularized loss is positive:
\begin{equation}
    \lambda_{\min}\left(\nabla_{\theta} \mathcal{L}^{\beta}(\theta_t)\right)
    \ge - \frac{(\log n)^c}{\sqrt{n}} K R_0 + 0 + \beta
    \ge \frac{\beta}{2} =: c_{\beta}.
\end{equation}
Thus,
\begin{equation}
    \frac{d}{dt} \lVert \nabla_{\theta} \mathcal{L}^{\beta}(\theta_t) \rVert_2^2
    \le -2\eta_0 c_{\beta} \lVert \nabla_{\theta} \mathcal{L}^{\beta}(\theta_t) \rVert_2^2.
\end{equation}
In Remark \ref{remark: beta=0 proof} we will show, how to modify the proof such that this step is still valid for $\beta = 0$.

By Gronwalls inequality, it follows (for $\beta \ge 0$), that
\begin{equation}
    \lVert \nabla_{\theta} \mathcal{L}^{\beta}(\theta_t) \rVert_2^2
    \le e^{-2\eta_0c_{\beta}t} \lVert \nabla_{\theta} \mathcal{L}^{\beta}(\theta_0) \rVert_2^2.
\end{equation}
Thus,
\begin{equation}
    \lVert \nabla_{\theta} \mathcal{L}^{\beta}(\theta_t) \rVert_2
    \le e^{-\eta_0c_{\beta}t} \lVert \nabla_{\theta} \mathcal{L}^{\beta}(\theta_0) \rVert_2
    \le e^{-\eta_0c_{\beta}t} \lVert J(\theta_0)^{\top}g(\theta_0) \rVert_2
    \le e^{-\eta_0c_{\beta}t} \lVert J(\theta_0) \rVert_2 \lVert g(\theta_0) \rVert_2
    \le KR_0 e^{-\eta_0c_{\beta}t}.
\end{equation}
Hence, for the distance of parameters from initialization
\begin{equation}
    \lVert \theta_t - \theta_0 \rVert_2
    = \norm{ \int_0^t \frac{d\theta_u}{du} du }_2
    \le \int_0^t \norm{ \frac{d\theta_u}{du} }_2 du
    \le \eta_0 K R_0 \int_0^t e^{-\eta_0 c_{\beta}u}du
    = \frac{KR_0}{c_{\beta}} (1- e^{-\eta_0 c_{\beta}t}).
\end{equation}
Thus, for $t \le t_1$, $\lVert \theta_t - \theta_0 \rVert_2 < C$. By continuity, $t_1 = \infty$.
Using local Lipschitzness, we can further bound the distance of the Jacobian at any $\lVert x \rVert_2 \le 1$:
\begin{equation}
    \lVert J(x,\theta_t) - J(x, \theta_0) \rVert_2
    \le \frac{(\log n)^c}{\sqrt{n}}K' \lVert \theta_t - \theta_0 \rVert_2
    \le \frac{(\log n)^c}{\sqrt{n}}K'C.
\end{equation}
This finishes the proof of Theorem \ref{theorem: Exponential decay gradient, parameters stay close}.
\end{proof}
\begin{remark}
\label{remark: beta=0 proof}
For $\beta = 0$, the Hessian is 
\begin{equation}
    \nabla_{\theta}^2 \mathcal{L}^{0}(\theta_t) 
    = g(\theta_t)^{\top} \nabla_{\theta}^2 f(\theta_t) + J(\theta_t)^{\top}J(\theta_t).
\end{equation}
For $\beta>0$, we just used that $J(\theta_t)^{\top}J(\theta_t) $ is positive semi-definite, as $\beta I$ dominates the negative eigenvalues of the first term of the Hessian. For $\beta = 0$, this is not enough. $J(\theta_t)^{\top}J(\theta_t) \in \mathbb{R}^{p\times p}$ shouldn't be confused with the NTK $J(\theta_t)J(\theta_t)^{\top} \in \mathbb{R}^{N\times N}$. However, they share the have the same nonzero eigenvalues. For $p>N$ (which is the case for $n$ large enough), $J(\theta_t)^{\top}J(\theta_t)$ will additionally have the eigenvalue $0$ with multiplicity of at least $p-N$. Thus, we can't naively lower bound the minimum eigenvalue of the Hessian with the minimum eigenvalue of $J(\theta_t)^{\top}J(\theta_t)$. 

Luckily, $\nabla_{\theta} \mathcal{L}^0(\theta_t) = J(\theta_t)^{\top}g(\theta_t)$ is in the row-span of $J(\theta_t)$. This is orthogonal to the nullspace of $J(\theta_t)$, i.e. the eigenspace corresponding to the eigenvalue $0$ of $J(\theta_t)$. Thus, $\nabla_{\theta} \mathcal{L}^0(\theta_t)$ only ``uses" the positive eigenvalues of $J(\theta_t)^{\top}J(\theta_t)$. The smallest positive eigenvalue of $J(\theta_t)^{\top}J(\theta_t)$ is equal to the smallest positive eigenvalue of the empirical NTK $J(\theta_t)J(\theta_t)^{\top}$, which is lower bounded by $\frac{1}{2}\lambda_{\min}(\mathbf{\Theta})$ on the high probability event we consider.

Hence, for $n$ large enough,
\begin{align}
    \frac{d}{dt} \lVert \nabla_{\theta} \mathcal{L}^0(\theta_t) \rVert_2^2
    &= -2\eta_0 \left( \nabla_{\theta} \mathcal{L}^0(\theta_t) \right)^{\top} \nabla_{\theta}^2 \mathcal{L}^0(\theta_t) \left( \nabla_{\theta} \mathcal{L}^0(\theta_t) \right) \\
    &\le -2\eta_0 \left(\lambda_{\min}\left( g(\theta_t)^{\top} \nabla_{\theta}^2 f(\theta) \right)  + \frac{1}{2}\lambda_{\min}(\mathbf{\Theta}) \right)\lVert \nabla_{\theta} \mathcal{L}^0(\theta_t) \rVert_2^2 \\
    &\le -2\eta_0 \left( -\frac{(\log n)^c}{\sqrt{n}}KR_0 + \frac{1}{2}\lambda_{\min}(\mathbf{\Theta}) \right)\lVert \nabla_{\theta} \mathcal{L}^0(\theta_t) \rVert_2^2 \\
    &\le -2\eta_0 \frac{1}{3}\lambda_{\min}(\mathbf{\Theta}) \lVert \nabla_{\theta} \mathcal{L}^0(\theta_t) \rVert_2^2.
\end{align}
Defining $c_0 := \frac{1}{3}\lambda_{\min}(\mathbf{\Theta})$ makes it possible to continue with the proof above for $\beta\ge 0$. This is an alternative proof to \citet{lee2019wide}. It shows that in the unregularized case, it is important that the gradient flow lies in the row space of the Jacobian.
\end{remark}

\subsection{Closeness to the Linearized Network along the Regularized Gradient Flow}
\label{appendix: Proof regularized gradient flow part 2 (closeness linearized)}
The goal of this section is to prove that the neural network along the regularized gradient flow stays close to the linearized network along the linear regularized gradient flow. Let us restate the following Theorem.
\gftheoremparttwo*
\begin{proof}
The proof of \citet{lee2019wide} used that training error converges to $0$. Thus, we have to use a different approach, which also provides a more straightforward and intuitive proof for $\beta=0$. Remember that
\begin{equation}
    f^{\lin}(x,\theta) = f(x,\theta_0) + J(x,\theta_0)(\theta-\theta_0), \text{ and } 
    \frac{d\theta^{\text{lin}}}{dt} = -\eta_0 \left(J(\theta_0)^{\top} g^{\text{lin}}(\theta^{\text{lin}}_t) + \beta (\theta^{\text{lin}}_t - \theta_0)\right).
\end{equation}
To prove the second part of the Theorem, we will use
\begin{equation}
\label{eq: triangle f flin proof}
    \lVert f(x,\theta_t) - f^{\lin}(x,\theta^{\lin}_t)  \rVert_2
    \le \lVert f(x,\theta_t) - f^{\lin}(x,\theta_t) \rVert_2 + \lVert f^{\lin}(x,\theta_t) - f^{\lin}(x,\theta^{\lin}_t)\rVert_2.
\end{equation}
We will start by bounding the first term. Next, we will bound $\lVert \theta_t - \theta^{\lin}_t \rVert_2$, and use this to bound the second term.

\textbf{First step:} To bound $\lVert f(x,\theta_t) - f^{\lin}(x,\theta_t) \rVert_2$, we compute
\begin{align}
    \norm{ \frac{d}{dt} \left(f(x,\theta_t) - f^{\lin}(x,\theta_t) \right) }_2
    &= \norm{\left(J(x,\theta_t) - J(x,\theta_0) \right)\frac{d\theta_t}{dt}}_2 \\
    &\le \lVert J(x,\theta_t) - J(x,\theta_0) \rVert_2 \norm{\frac{d\theta_t}{dt}}_2 \\
    &\le \frac{(\log n)^c}{\sqrt{n}} K' C \eta_0 KR_0 e^{-\eta_0 c_{\beta}t},
\end{align}
where we used Theorem \ref{theorem: Exponential decay gradient, parameters stay close} in the last step.
Now, we can bound
\begin{equation}
    \lVert f(x,\theta_t) - f^{\lin}(x,\theta_t) \rVert_2
    \le \frac{(\log n)^c}{\sqrt{n}}K'C\eta_0KR_0 \int_0^t e^{-\eta_0 c_{\beta} u} du
    \le \frac{(\log n)^c}{\sqrt{n}}K'C \frac{KR_0}{c_{\beta}} 
    = \frac{(\log n)^c}{\sqrt{n}} K' C^2.
\end{equation}
In particular, for the difference at the training points, $\lVert f(\theta_t) - f^{\lin}(\theta_t) \rVert_2 \le \frac{(\log n)^c}{\sqrt{n}} K C^2$.

\textbf{Second step:} Now, we will bound the difference between $\theta_t - \theta^{\lin}_t$. We can write
\begin{align}
    \frac{d\theta_t}{dt}
    &= -\eta_0 \left( J(\theta_t)^{\top}g(\theta_t) + \beta(\theta_t-\theta_0) \right) \\
    &= -\eta_0 \left( \left( J(\theta_t) - J(\theta_0) \right)^{\top}g(\theta_t) + J(\theta_0)^{\top} \left( g(\theta_t) - g^{\lin}(\theta_t) \right) + J(\theta_0)^{\top}g^{\lin}(\theta_t) + \beta(\theta_t-\theta_0) \right) \\
    &= -\eta_0 \Delta_t - \eta_0 \left( J(\theta_0)^{\top}g^{\lin}(\theta_t) + \beta(\theta_t-\theta_0) \right),
\end{align}
where we define $\Delta_t := \left( J(\theta_t) - J(\theta_0) \right)^{\top}g(\theta_t) + J(\theta_0)^{\top} \left( g(\theta_t) - g^{\lin}(\theta_t) \right)$. We will now bound $\lVert \Delta_t \rVert_2$. For the first term, using Theorem \ref{theorem: Exponential decay gradient, parameters stay close}:
\begin{equation}
    \lVert \left( J(\theta_t) - J(\theta_0) \right)^{\top}g(\theta_t) \rVert_2
    \le \lVert J(\theta_t) -J(\theta_0)\rVert_2 \lVert g(\theta_t) \rVert_2
    \le \frac{(\log n)^c}{\sqrt{n}}KCR_0.
\end{equation}
For the second term, using Theorem \ref{theorem: Exponential decay gradient, parameters stay close} and the bound we derived in the first step, 
\begin{equation}
    \lVert J(\theta_0)^{\top}\left( g(\theta_t) - g^{\lin}(\theta_t) \right) \rVert_2
    = \lVert J(\theta_0)^{\top}\left( f(\theta_t) - f^{\lin}(\theta_t) \right) \rVert_2
    \le \lVert J(\theta_0)\rVert_2 \lVert f(\theta_t) - f^{\lin}(\theta_t) \rVert_2
    \le \frac{(\log n)^c}{\sqrt{n}}K^2C^2.
\end{equation}
Thus, defining $K^{\Delta} := KCR_0 + K^2C^2$, we can bound $\lVert \Delta_t \rVert_2 \le \frac{(\log n)^c}{\sqrt{n}} K^{\Delta}$. Now, we can compute

\begin{align}
    \frac{d}{dt} (\theta_t - \theta^{\lin}_t)
    &= -\eta_0 \Delta_t - \eta_0 \left(J(\theta_0)^{\top} g^{\lin}(\theta_t) + \beta(\theta_t - \theta_0) \right) + \eta_0 \left(J(\theta_0)^{\top} g^{\lin}(\theta^{\lin}_t) + \beta(\theta^{\lin}_t - \theta_0) \right) \\
    &= -\eta_0\Delta_t - \eta_0 \left(J(\theta_0)^{\top} \left(g^{\lin}(\theta_t) - g^{\lin}(\theta^{\lin}_t)\right) + \beta(\theta_t - \theta^{\lin}_t) \right) \\
    &= -\eta_0\Delta_t - \eta_0 \left(J(\theta_0)^{\top} J(\theta_0)(\theta_t - \theta^{\lin}_t) + \beta(\theta_t - \theta^{\lin}_t) \right) \\
    &= -\eta_0\Delta_t - \eta_0 \left(J(\theta_0)J(\theta_0)^{\top} + \beta I \right)(\theta_t - \theta^{\lin}_t).
\end{align}
By treating this as an inhomogeneous linear ODE in $\theta_t - \theta^{\lin}_t$, we get
\begin{equation}
    \theta_t - \theta^{\lin}_t = \int_0^t e^{-\eta_0 \left(J(\theta_0)J(\theta_0)^{\top} + \beta I \right) (t-u)} (-\eta_0 \Delta_u) du.
\end{equation}
Hence (using $\lVert e^{-A} \rVert_2 \le e^{-\lambda_{\min}(A)}$),
\begin{align}
    \lVert \theta_t - \theta^{\lin}_t \rVert_2
    &\le \int_0^t \lVert e^{-\eta_0 \left(J(\theta_0)J(\theta_0)^{\top} + \beta I \right) (t-u)} \rVert_2 \eta_0 \lVert \Delta_u \rVert_2 du \\
    &\le \int_0^{t} e^{-\eta_0 (c_0 + \beta)(t-u)} \eta_0 \frac{(\log n)^c}{\sqrt{n}} K^{\Delta} du \\
    &\le \frac{(\log n)^c}{\sqrt{n}} \frac{K^{\Delta}}{c_0 + \beta}.
\end{align}
Thus, $\sup_t \lVert \theta_t - \theta^{\lin}_t \rVert_2 \le \frac{K^{\Delta}}{c_0 + \beta} \frac{(\log n)^c}{\sqrt{n}}$.

\textbf{Third step:} Using the bound on $\lVert \theta_t - \theta^{\lin}_t \rVert_2$, we can easily bound $\lVert f^{\lin}(x,\theta_t) - f^{\lin}(x,\theta^{\lin}_t)\rVert_2$:
\begin{equation}
    \lVert f^{\lin}(x,\theta_t) - f^{\lin}(x,\theta^{\lin}_t)\rVert_2
    = \lVert J(x,\theta_0) (\theta_t - \theta^{\lin}_t) \rVert_2
    \le \lVert J(x,\theta_0) \rVert_2 \lVert \theta_t - \theta^{\lin}_t \rVert_2
    \le K' \frac{K^{\Delta}}{c_0 + \beta} \frac{(\log n)^c}{\sqrt{n}}.
\end{equation}
By using equation (\ref{eq: triangle f flin proof}), we can now finish the proof:
\begin{equation}
    \lVert f(x,\theta_t) - f^{\lin}(x,\theta^{\lin}_t) \rVert_2
    \le \left(K'C^2 + K' \frac{K^{\Delta}}{c_0 + \beta} \right) \frac{(\log n)^c}{\sqrt{n}}.
\end{equation}
\end{proof}

\section{Proof for regularized gradient descent}
\label{appendix: gradient descent}
\subsection{Geometric Decay of the regularized gradient and closeness of parameters to their initial value}
\begin{theorem}
\label{theorem: gradient descent geometric decay}
Let $\beta \ge 0$. Let $\delta_0 > 0$ arbitrarily small. There are $K',K,R_0, c_{\beta}, \eta_{\max} > 0$, such that for $n$ large enough, the following holds with probability of at least $1-\delta_0$ over random initialization, when applying regularized gradient descent with learning rate $\eta = \eta_0 \le \eta_{\max}$:
\begin{equation}
    \lVert \theta_{t+1} - \theta_{t}\rVert_2  
    = \eta_0 \lVert \nabla_{\theta} \mathcal{L}^{\beta}(\theta_{t}) \rVert_2 
    \le \eta_0KR_0 \left( 1 - \eta_0c_{\beta} \right)^{t},
\end{equation}
\begin{equation}
    \lVert \theta_t - \theta_0 \rVert_2
    <  \frac{KR_0}{c_{\beta}}
    =: C,
\end{equation}
\begin{equation}
    \forall \lVert x \rVert_2 \le 1: \lVert J(x,\theta_t) - J(x,\theta_0) \rVert_2 \le \frac{(\log n)^c}{\sqrt{n}} K' C,
\end{equation}
\begin{equation}
    \lVert J(\theta_t) - J(\theta_0) \rVert_2 \le \frac{(\log n)^c}{\sqrt{n}} K C.
\end{equation}
\end{theorem}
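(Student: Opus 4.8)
The plan is to mirror the gradient-flow argument of Theorem \ref{theorem: Exponential decay gradient, parameters stay close}, replacing the continuous ODE analysis by a one-step contraction of the gradient norm. I work on the same high-probability event (the intersection of the three events from Lemma \ref{lemma: g(theta0) bounded}, Lemma \ref{lemma: Jacobian Lipschitz} and Lemma \ref{lemma: empirical NTK positive eigenvalue}) and keep the same constants $c_\beta$ and $C = KR_0/c_\beta$. The central object is the averaged Hessian $\bar H_t := \int_0^1 \nabla_\theta^2 \mathcal{L}^\beta(\theta_t + s(\theta_{t+1}-\theta_t))\,ds$, for which the fundamental theorem of calculus together with $\theta_{t+1}-\theta_t = -\eta_0\nabla_\theta\mathcal{L}^\beta(\theta_t)$ gives the exact identity $\nabla_\theta\mathcal{L}^\beta(\theta_{t+1}) = (I - \eta_0\bar H_t)\,\nabla_\theta\mathcal{L}^\beta(\theta_t)$. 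Everything then reduces to proving the per-step estimate $\lVert (I-\eta_0\bar H_t)\nabla_\theta\mathcal{L}^\beta(\theta_t)\rVert_2 \le (1-\eta_0 c_\beta)\lVert\nabla_\theta\mathcal{L}^\beta(\theta_t)\rVert_2$ whenever the iterates stay in $B(\theta_0,C)$.

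For $\beta>0$ this is immediate. As in the flow proof, inside $B(\theta_0,C)$ the Hessian $\nabla_\theta^2\mathcal{L}^\beta = g^\top\nabla_\theta^2 f + J^\top J + \beta I$ has eigenvalues in $[c_\beta,\lambda_{\max}]$, with $c_\beta=\beta/2$ (using $\lVert g^\top\nabla_\theta^2 f\rVert_2 \le \tfrac{1}{\sqrt n}KR_0$ and $J^\top J\succeq 0$) and $\lambda_{\max} \le \tfrac{1}{\sqrt n}KR_0 + K^2 + \beta \le K^2+\beta+1$ for $n$ large. Averaging preserves these bounds, so $\bar H_t$ is symmetric with spectrum in $[c_\beta,\lambda_{\max}]$. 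Setting $\eta_{\max} := (K^2+\beta+1)^{-1}$ ensures that for every $\eta_0\le\eta_{\max}$ the matrix $I-\eta_0\bar H_t$ is positive semidefinite with top eigenvalue at most $1-\eta_0 c_\beta$, hence $\lVert I-\eta_0\bar H_t\rVert_2 \le 1-\eta_0 c_\beta$.

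The geometric decay and the remaining bounds then follow by an induction replacing the continuity argument of the flow proof. Assuming $\theta_0,\dots,\theta_t\in B(\theta_0,C)$ and $\lVert\nabla_\theta\mathcal{L}^\beta(\theta_s)\rVert_2 \le KR_0(1-\eta_0 c_\beta)^s$ for $s\le t$, I first bound the distance by a telescoping sum, $\lVert\theta_{t+1}-\theta_0\rVert_2 \le \eta_0\sum_{s=0}^{t}\lVert\nabla_\theta\mathcal{L}^\beta(\theta_s)\rVert_2 \le \eta_0 KR_0\sum_{s\ge 0}(1-\eta_0 c_\beta)^s = KR_0/c_\beta = C$, placing $\theta_{t+1}$ in the (convex) ball; the segment $[\theta_t,\theta_{t+1}]$ therefore lies in $B(\theta_0,C)$, the contraction applies, and $\lVert\nabla_\theta\mathcal{L}^\beta(\theta_{t+1})\rVert_2 \le (1-\eta_0 c_\beta)^{t+1}KR_0$, closing the induction (the base case uses $\lVert\nabla_\theta\mathcal{L}^\beta(\theta_0)\rVert_2 = \lVert J(\theta_0)^\top g(\theta_0)\rVert_2 \le KR_0$). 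Here the geometric series plays exactly the role that the integral of the decaying exponential played for gradient flow. The two Jacobian bounds are then read off from the local Lipschitzness of Lemma \ref{lemma: Jacobian Lipschitz}, verbatim as in the flow case, since $\theta_t\in B(\theta_0,C)$.

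The main obstacle is the unregularized case $\beta=0$, exactly as flagged in Remark \ref{remark: beta=0 proof}. There $g^\top\nabla_\theta^2 f + J^\top J$ carries the eigenvalue $0$ with multiplicity at least $p-N$, so $\bar H_t$ is not positive definite and the clean operator-norm bound fails. The remedy is to exploit that $u := \nabla_\theta\mathcal{L}^0(\theta_t) = J(\theta_t)^\top g(\theta_t)$ lies in the row span of $J(\theta_t)$, where only the positive eigenvalues of $J(\theta_t)^\top J(\theta_t)$ act. Expanding $\lVert(I-\eta_0\bar H_t)u\rVert_2^2 = \lVert u\rVert_2^2 - 2\eta_0\,u^\top\bar H_t u + \eta_0^2\lVert\bar H_t u\rVert_2^2$, I lower-bound $u^\top\bar H_t u \ge \int_0^1\lVert J(\theta_s)u\rVert_2^2\,ds - \tfrac{1}{\sqrt n}KR_0\lVert u\rVert_2^2$. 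The delicate point is that $u$ sits in the row span of $J(\theta_t)$ rather than of $J(\theta_s)$, so I control $\lVert J(\theta_s)u\rVert_2$ via the row-span bound $\lVert J(\theta_t)u\rVert_2 \ge (\tfrac12\lambda_{\min}(\mathbf{\Theta}))^{1/2}\lVert u\rVert_2$—obtained by transporting Lemma \ref{lemma: empirical NTK positive eigenvalue} from $\theta_0$ to $\theta_t$ through the $O(1/\sqrt n)$ Jacobian variation of Theorem \ref{theorem: Exponential decay gradient, parameters stay close} and a Weyl perturbation bound—together with a further $\tfrac{1}{\sqrt n}K'C\lVert u\rVert_2$ correction from moving along the segment. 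This yields $u^\top\bar H_t u \ge (\tfrac12\lambda_{\min}(\mathbf{\Theta}) - O(1/\sqrt n))\lVert u\rVert_2^2$, which for $n$ large strictly exceeds $c_0 = \tfrac13\lambda_{\min}(\mathbf{\Theta})$; the quadratic term $\eta_0^2\lVert\bar H_t u\rVert_2^2 \le \eta_0^2\lambda_{\max}^2\lVert u\rVert_2^2$ is then forced below the remaining slack between $\tfrac12\lambda_{\min}(\mathbf{\Theta})$ and $c_0$ by shrinking $\eta_{\max}$, giving $\lVert(I-\eta_0\bar H_t)u\rVert_2 \le (1-\eta_0 c_0)\lVert u\rVert_2$ and allowing the same induction to go through.
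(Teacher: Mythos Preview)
Your argument is essentially the paper's own proof, repackaged through the averaged Hessian $\bar H_t$. The paper writes $\nabla_\theta\mathcal{L}^\beta(\theta_t)-\nabla_\theta\mathcal{L}^\beta(\theta_{t-1}) = -\int_0^{\eta_0}\nabla_\theta^2\mathcal{L}^\beta(\theta_{t-1}-u\nabla_\theta\mathcal{L}^\beta(\theta_{t-1}))\,\nabla_\theta\mathcal{L}^\beta(\theta_{t-1})\,du$, then expands $\lVert\nabla_\theta\mathcal{L}^\beta(\theta_t)\rVert_2^2$ into the same three terms you write as $\lVert(I-\eta_0\bar H_t)u\rVert_2^2$; for $\beta>0$ it bounds the cross term via $\lambda_{\min}\ge\tfrac32 c_\beta$, the quadratic term via $\lVert\nabla_\theta^2\mathcal{L}^\beta\rVert_2$ and a learning-rate restriction, and finishes with Bernoulli to pass from $(1-2\eta_0 c_\beta)$ in the squared norm to $(1-\eta_0 c_\beta)$ in the norm. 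Your direct spectral bound $\lVert I-\eta_0\bar H_t\rVert_2\le 1-\eta_0 c_\beta$ is a cleaner way to say the same thing. The telescoping/geometric-series step and the Jacobian consequences are identical.

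Two small points. First, when you transport the NTK eigenvalue bound from $\theta_0$ to $\theta_t$ you cite Theorem~\ref{theorem: Exponential decay gradient, parameters stay close}, the \emph{flow} result; that would be circular here. What you actually need (and already have) is the induction hypothesis $\theta_t\in B(\theta_0,C)$ together with Lemma~\ref{lemma: Jacobian Lipschitz}, which gives $\lVert J(\theta_t)-J(\theta_0)\rVert_2\le\tfrac1{\sqrt n}KC$ directly. Second, your telescoping bound ends at $\le C$ via the full geometric series, whereas the statement demands the strict inequality $<C$; simply keep the finite sum $\sum_{s=0}^{t}(1-\eta_0 c_\beta)^s<1/(\eta_0 c_\beta)$. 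For $\beta=0$ the paper only says ``similarly to the gradient flow case''; your explicit handling of the fact that $u$ lies in the row span of $J(\theta_t)$ rather than $J(\theta_s)$, and the $O(1/\sqrt n)$ correction along the segment, is more careful than what the paper spells out.
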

\begin{proof}
We consider the same high probability event as in the proof for the regularized gradient flow. 
Define $c_{\beta} := \frac{1}{2}\beta$ for $\beta > 0$, and $c_{\beta} := \frac{1}{3}\lambda_{\min}(\mathbf{\Theta})$ for $\beta=0$. Let $C:=\frac{KR_0}{c_{\beta}}$.

We will prove the first two inequalities by induction. For $t=0$:
\begin{equation}
    \lVert \nabla_{\theta} \mathcal{L}^{\beta}(\theta_t) \rVert_2 
    = \lVert J(\theta_0)^{\top}g(\theta_0) \rVert_2 
    \le KR_0.
\end{equation}
Now, assume it holds true for $s\le t$.
We want to bound $\lVert \theta_{t+1} - \theta_t \rVert_2 = \eta_0 \lVert \nabla_{\theta} \mathcal{L}^{\beta}(\theta_t)\rVert_2$.
Remember that $\nabla_{\theta} \mathcal{L}^{\beta}(\theta_t) = J(\theta_t)^{\top}g(\theta_t) + \beta (\theta_t - \theta_0)$.
We write
\begin{align}
    \lVert \nabla_{\theta} \mathcal{L}^{\beta}(\theta_t) \rVert_2 ^2
    = &\lVert \nabla_{\theta} \mathcal{L}^{\beta}(\theta_{t-1}) + \nabla_{\theta} \mathcal{L}^{\beta}(\theta_{t}) - \nabla_{\theta} \mathcal{L}^{\beta}(\theta_{t-1}) \rVert_2^2 \\
    =& \lVert \nabla_{\theta} \mathcal{L}^{\beta}(\theta_{t-1}) \rVert_2^2 \\
    &+ 2 \nabla_{\theta}\mathcal{L}^{\beta}(\theta_{t-1})^{\top} \left( \nabla_{\theta} \mathcal{L}^{\beta}(\theta_{t}) - \nabla_{\theta} \mathcal{L}^{\beta}(\theta_{t-1}) \right) \label{eq: gd second term} \\
    &+ \lVert \nabla_{\theta} \mathcal{L}^{\beta}(\theta_{t}) - \nabla_{\theta} \mathcal{L}^{\beta}(\theta_{t-1}) \rVert_2^2. \label{eq: gd third term}
\end{align}
In the following, we will look at how to bound the second and the third term. We have:
\begin{align}
    \nabla_{\theta} \mathcal{L}^{\beta}(\theta_{t}) - \nabla_{\theta} \mathcal{L}^{\beta}(\theta_{t-1}) 
    &= \nabla_{\theta} \mathcal{L}^{\beta}(\theta_{t-1} - \eta_0 \nabla_{\theta}\mathcal{L}^{\beta}(\theta_{t-1})) - \nabla_{\theta} \mathcal{L}^{\beta}(\theta_{t-1}) \\
    &= -\int_0^{\eta_0}  \left(\nabla_{\theta}^2 \mathcal{L}^{\beta}\left( \theta_{t-1} - u\nabla_{\theta}\mathcal{L}^{\beta}(\theta_{t-1}) \right) \right) \cdot \nabla_{\theta}\mathcal{L}^{\beta}(\theta_{t-1}) du.
\end{align}
As in the proof for the gradient flow, the following part only holds for $\beta > 0$.
Note, that for any $u\in [0,\eta_0]$, $\theta_{t-1} - u\nabla_{\theta} \mathcal{L}^{\beta}(\theta_{t-1}) \in B(\theta_0, C)$, as we know by induction that $\theta_{t-1},\theta_t \in B(\theta_0,C)$. Thus, similar to the proof for the gradient flow, for $n$ large enough, $\forall u \in [0,\eta_0]$:
\begin{equation}
    \lambda_{\min}\left( \nabla_{\theta}^2\mathcal{L}^{\beta}\left( \theta_{t-1} - u\nabla_{\theta} \mathcal{L}^{\beta}(\theta_{t-1})\right) \right) \ge \frac{3}{2}c_{\beta}.
\end{equation}
Note that we are using $\frac{3}{2}c_{\beta} = \frac{3}{4}\beta$, which is slightly higher than $c_{\beta}$ which we used in the gradient flow case, to arrive at the equivalent result in the end.
For the second term (\ref{eq: gd second term}) we get,

\begin{align}
    &2 \nabla_{\theta}\mathcal{L}^{\beta}(\theta_{t-1})^{\top} \left( \nabla_{\theta} \mathcal{L}^{\beta}(\theta_{t}) - \nabla_{\theta} \mathcal{L}^{\beta}(\theta_{t-1}) \right) \\
    =& - 2\int_0^{\eta_0} \nabla_{\theta}\mathcal{L}^{\beta}(\theta_{t-1}) \left(\nabla_{\theta}^2 \mathcal{L}^{\beta}\left( \theta_{t-1} - u\nabla_{\theta}\mathcal{L}^{\beta}(\theta_{t-1}) \right) \right) \cdot \nabla_{\theta}\mathcal{L}^{\beta}(\theta_{t-1})du \\
    \le& -2\eta_0 \frac{3}{2}c_{\beta} \lVert \nabla_{\theta} \mathcal{L}^{\beta}(\theta_{t-1}) \rVert_2^2.
\end{align}
Further, we have for any $\theta \in B(\theta_0, C)$:
\begin{align}
    \lVert \nabla_{\theta}^2 \mathcal{L}^{\beta}\left( \theta\right) \rVert_2
    &\le \lVert g(\theta)^{\top} \nabla_{\theta}^2 f(\theta)\rVert_2 + \lVert J(\theta)^{\top}J(\theta) \rVert_2 + \beta I_p \\
    &\le \frac{(\log n)^c}{\sqrt{n}} KR_0 + \lambda_{\max}\left(J(\theta)^{\top}J(\theta)\right) + \beta \\
    &\le \frac{(\log n)^c}{\sqrt{n}} KR_0 + 2\lambda_{\max}(\mathbf{\Theta}) + \beta \\
    &\le 2(\lambda_{\max}(\mathbf{\Theta}) + \beta),
\end{align}
for $n$ large enough.
Using this with $\theta = \theta_{t-1} - u\nabla_{\theta} \mathcal{L}^{\beta}(\theta_{t-1})$, we get for the third term (\ref{eq: gd third term}),
\begin{align}
    \lVert \nabla_{\theta} \mathcal{L}^{\beta}(\theta_{t}) - \nabla_{\theta} \mathcal{L}^{\beta}(\theta_{t-1}) \rVert_2^2
    &= \norm{\int_0^{\eta_0}  \left(\nabla_{\theta}^2 \mathcal{L}^{\beta}\left( \theta_{t-1} - u\nabla_{\theta}\mathcal{L}^{\beta}(\theta_{t-1}) \right) \right) \cdot \nabla_{\theta}\mathcal{L}^{\beta}(\theta_{t-1}) du}_2^2 \\
    &\le \left( \int_0^{\eta_0} \lVert \nabla_{\theta}^2 \mathcal{L}^{\beta}\left( \theta_{t-1} - u\nabla_{\theta}\mathcal{L}^{\beta}(\theta_{t-1}) \right) \rVert_2 \lVert \nabla_{\theta}\mathcal{L}^{\beta}(\theta_{t-1}) \rVert_2 du \right)^2 \\
    &\le \eta_0^2 \left(2(\lambda_{\max}(\mathbf{\Theta}) + \beta) \right)^2 \lVert \nabla_{\theta}\mathcal{L}^{\beta}(\theta_{t-1}) \rVert_2^2 \\
    &\le \eta_0 c_{\beta} \lVert \nabla_{\theta}\mathcal{L}^{\beta}(\theta_{t-1}) \rVert_2^2.
\end{align}
In the last inequality, we chose the learning rate $\eta_0 \le \frac{c_{\beta}}{4(\lambda_{\max}(\mathbf{\Theta}) + \beta)^2}$ small enough. Similarly to the gradient flow case, one can derive such bounds for $\beta=0$.
Summing up the three terms,
\begin{equation}
    \lVert \nabla_{\theta}\mathcal{L}^{\beta}(\theta_t) \rVert_2^2 
    \le (1 - 2\eta_0\frac{3}{2}c_{\beta} + \eta_0c_{\beta})\lVert \nabla_{\theta}\mathcal{L}^{\beta}(\theta_{t-1}) \rVert_2^2
    = (1-2\eta_0 c_{\beta})\lVert \nabla_{\theta}\mathcal{L}^{\beta}(\theta_{t-1}) \rVert_2^2.
\end{equation}
Thus, by Bernoulli's inequality and the induction hypothesis,
\begin{equation}
    \lVert \nabla_{\theta}\mathcal{L}^{\beta}(\theta_t) \rVert_2
    \le \sqrt{1-2\eta_0c_{\beta}} \lVert \nabla_{\theta}\mathcal{L}^{\beta}(\theta_{t-1}) \rVert_2
    \le (1-\eta_0c_{\beta}) \lVert \nabla_{\theta}\mathcal{L}^{\beta}(\theta_{t-1}) \rVert_2
    \le KR_0(1-\eta_0c_{\beta})^t.
\end{equation}
Hence, $\lVert \theta_{t+1} - \theta_t \rVert_2 = \eta_0 \lVert \nabla_{\theta}\mathcal{L}^{\beta}(\theta_t) \rVert_2 \le \eta_0 KR_0(1-\eta_0c_{\beta})^t $. From this, we can follow that
\begin{equation}
    \lVert \theta_{t+1} - \theta_0 \rVert_2
    \le \sum_{u=0}^t \lVert \theta_{u+1} - \theta_u \rVert_2
    \le \eta_0 K R_0 \sum_{u=0}^t (1-\eta_0 c_{\beta})^u
    = \eta_0 K R_0 \frac{1- (1-\eta_0 c_{\beta})^{t+1}}{\eta_0 c_{\beta}}< \frac{KR_0}{c_{\beta}} = C.
\end{equation}
This proves the first two inequalities. The rest follows directly from the local Lipschitzness of the Jacobian, like in the proof for the gradient flow.
\end{proof}

\subsection{Closeness to the linearized network along the regularized gradient descent}
The following Theorem reads the same as in the gradient flow case, and the proof is very similar, which is why we only provide the main idea.
\begin{theorem}
Let $\beta \ge 0$. Let $\delta_0 > 0$ be arbitrarily small. Then, there are $C_1, C_2>0$, such that for $n$ large enough, with probability of at least $1-\delta_0$ over random initialization,
\begin{equation}
    \sup_{t\ge 0} \lVert \theta_t^{\lin} - \theta_t \rVert_2 \le C_1 \frac{(\log n)^c}{\sqrt{n}}, \quad
    \forall \lVert x\Vert_2 \le 1: \sup_{t\ge 0} \lVert f^{\lin}(x,\theta^{\lin}_t) - f(x,\theta_t) \rVert_2 \le C_2 \frac{(\log n)^c}{\sqrt{n}}.
\end{equation}
\end{theorem}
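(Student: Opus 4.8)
The plan is to mirror, in discrete time, the three-step argument used for the gradient flow in Appendix \ref{appendix: Proof regularized gradient flow part 2 (closeness linearized)}, replacing the matrix exponential coming from the linear ODE by powers of the one-step operator $(I-\eta_0 A)$ and the time integrals by geometric sums. For any $\lVert x\rVert_2\le 1$ I would start from the same triangle-inequality split
\[
\lVert f(x,\theta_t)-f^{\lin}(x,\theta^{\lin}_t)\rVert_2
\le \lVert f(x,\theta_t)-f^{\lin}(x,\theta_t)\rVert_2
+ \lVert f^{\lin}(x,\theta_t)-f^{\lin}(x,\theta^{\lin}_t)\rVert_2,
\]
and run the whole argument on the high-probability event of Theorem \ref{theorem: gradient descent geometric decay}, which already supplies $\theta_t,\theta^{\lin}_t\in B(\theta_0,C)$, the Jacobian bound $\lVert J(\theta_t)-J(\theta_0)\rVert_2\le \tfrac{1}{\sqrt n}KC$, and the geometric decay $\lVert \nabla_\theta\mathcal L^\beta(\theta_t)\rVert_2\le KR_0(1-\eta_0 c_\beta)^t$.

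The first step is unchanged from the flow case and is in fact static. Since the segment from $\theta_0$ to $\theta_t$ lies in $B(\theta_0,C)$, writing $f(x,\theta_t)-f^{\lin}(x,\theta_t)=\int_0^1\bigl(J(x,\theta_0+s(\theta_t-\theta_0))-J(x,\theta_0)\bigr)\,ds\,(\theta_t-\theta_0)$ and invoking the $O(1/\sqrt n)$ Lipschitzness of the Jacobian (Lemma \ref{lemma: Jacobian Lipschitz}) gives $\lVert f(x,\theta_t)-f^{\lin}(x,\theta_t)\rVert_2\le \tfrac{1}{\sqrt n}K'C^2$, with no dependence on whether training is continuous or discrete. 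The crux is the second step: setting $e_t:=\theta_t-\theta^{\lin}_t$ and $A:=J(\theta_0)^\top J(\theta_0)+\beta I$, subtracting the two regularized gradient-descent recursions and using $g^{\lin}(\theta_t)-g^{\lin}(\theta^{\lin}_t)=J(\theta_0)e_t$ yields the affine recursion
\[
e_{t+1}=(I-\eta_0 A)\,e_t-\eta_0\Delta_t,\qquad e_0=0,
\]
where $\Delta_t:=(J(\theta_t)-J(\theta_0))^\top g(\theta_t)+J(\theta_0)^\top(g(\theta_t)-g^{\lin}(\theta_t))$ is bounded exactly as in the flow proof by $\lVert \Delta_t\rVert_2\le \tfrac{1}{\sqrt n}K^\Delta$ with $K^\Delta=KCR_0+K^2C^2$. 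Unrolling gives $e_t=-\eta_0\sum_{s=0}^{t-1}(I-\eta_0 A)^{t-1-s}\Delta_s$. For $\beta>0$ one has $\beta I\preceq A\preceq (2\lambda_{\max}(\mathbf\Theta)+\beta)I$ on the event, so choosing $\eta_0\le\eta_{\max}$ small enough that $0\preceq I-\eta_0 A\preceq (1-\eta_0\beta)I$ turns the sum geometric and yields $\sup_t\lVert e_t\rVert_2\le \tfrac{K^\Delta}{\beta}\tfrac{1}{\sqrt n}$. The third step then follows since $\lVert f^{\lin}(x,\theta_t)-f^{\lin}(x,\theta^{\lin}_t)\rVert_2=\lVert J(x,\theta_0)e_t\rVert_2\le K'\sup_t\lVert e_t\rVert_2$, and adding the two contributions gives the claim with $C_1=K^\Delta/\beta$ and $C_2=K'C^2+K'C_1$.

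The place where discretization genuinely bites, and where I expect the real work, is the case $\beta=0$. There $A=J(\theta_0)^\top J(\theta_0)$ has a large null space (as $p>N$) on which $I-\eta_0 A$ acts as the identity and does not contract, so the naive geometric bound fails for the null-space component of $e_t$. The fix is to stop relying on operator contraction and instead exploit summability of the forcing: for $\beta=0$ the gradient $\nabla_\theta\mathcal L^0(\theta_t)=J(\theta_t)^\top g(\theta_t)$ decays geometrically by Theorem \ref{theorem: gradient descent geometric decay}, and since $J(\theta_t)$ has full row rank with smallest positive singular value bounded below by $\sqrt{\tfrac12\lambda_{\min}(\mathbf\Theta)}$, this forces $\lVert g(\theta_t)\rVert_2$, and hence $\lVert \Delta_t\rVert_2\le \tfrac{1}{\sqrt n}\tilde{K}(1-\eta_0 c_0)^t$, to decay geometrically; the unconditional bound $\lVert I-\eta_0 A\rVert_2\le 1$ then suffices to make $\sum_s\lVert \Delta_s\rVert_2$ converge and deliver $\sup_t\lVert e_t\rVert_2=O(1/\sqrt n)$. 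The only additional bookkeeping is to verify that a single learning rate can satisfy both $\eta_0\le\eta_{\max}$ and $\eta_0(2\lambda_{\max}(\mathbf\Theta)+\beta)\le 1$ simultaneously, which it can; with that in hand the remaining computations are routine and identical in form to the gradient-flow proof.
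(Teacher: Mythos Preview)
Your three-step skeleton and the reduction to the recursion $e_{t+1}=(I-\eta_0 A)e_t-\eta_0\Delta_t$ match the paper's approach; for $\beta>0$ the argument is complete and correct. Your step~1 is actually cleaner than the paper's: the paper proves $\lVert f(x,\theta_t)-f^{\lin}(x,\theta_t)\rVert_2\le \tfrac{1}{\sqrt n}K'C^2$ by induction over gradient-descent steps (accumulating a geometric sum in $t$), whereas your one-shot Taylor-remainder bound on the segment $[\theta_0,\theta_t]$ gives the same constant without ever using the update rule.

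There is, however, a genuine gap in your treatment of $\beta=0$. You correctly deduce that $\lVert g(\theta_t)\rVert_2$ decays geometrically (from $\lVert J(\theta_t)^\top g(\theta_t)\rVert_2\to 0$ and the uniform lower bound on the positive singular values of $J(\theta_t)$), and this does make the \emph{first} summand $(J(\theta_t)-J(\theta_0))^\top g(\theta_t)$ of $\Delta_t$ summable. But the \emph{second} summand $J(\theta_0)^\top\bigl(g(\theta_t)-g^{\lin}(\theta_t)\bigr)=J(\theta_0)^\top\bigl(f(\theta_t)-f^{\lin}(\theta_t)\bigr)$ does not decay: by your own step~1 it is only $O(1/\sqrt n)$ uniformly in $t$, and indeed $f(\theta_\infty)-f^{\lin}(\theta_\infty)$ is generically nonzero. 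Hence $\sum_s\lVert\Delta_s\rVert_2$ diverges and ``$\lVert I-\eta_0 A\rVert_2\le 1$ suffices'' does not close the argument.

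The fix is in the same spirit as Remark~\ref{remark: beta=0 proof}: project onto the range of $J(\theta_0)^\top$ and its orthogonal complement. Since $A=J(\theta_0)^\top J(\theta_0)$ commutes with the range projection $P$, the two components of $e_t$ decouple. On $\operatorname{ran} J(\theta_0)^\top$ the operator $I-\eta_0 A$ genuinely contracts with rate $1-\tfrac{\eta_0}{2}\lambda_{\min}(\mathbf\Theta)$, so the uniform bound $\lVert\Delta_t\rVert_2\le K^\Delta/\sqrt n$ already yields $\lVert Pe_t\rVert_2=O(1/\sqrt n)$ via a geometric sum. On the null space, $I-\eta_0 A$ is the identity, but there the problematic second summand of $\Delta_t$ vanishes (it lies in $\operatorname{ran} J(\theta_0)^\top$), so $P^\perp\Delta_t=P^\perp(J(\theta_t)-J(\theta_0))^\top g(\theta_t)$, which \emph{is} geometrically summable, and $\lVert P^\perp e_t\rVert_2\le \eta_0\sum_s\lVert P^\perp\Delta_s\rVert_2=O(1/\sqrt n)$. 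With this correction the rest of your plan goes through unchanged.
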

\begin{proof}[Proof Sketch]
Remember that
\begin{equation}
    f^{\lin}(x,\theta) = f(x,\theta_0) + J(x,\theta_0)(\theta - \theta_0), \text{ and } \theta^{\lin}_{t+1} = \theta^{\lin}_{t} - \eta_0 \left( J(\theta_0)^{\top}g^{\lin}(\theta^{\lin}_t) + \beta (\theta^{\lin}_t - \theta_0)\right).
\end{equation}
The structure of the proof is the same as for the gradient flow.
We will only show how to bound the term $\lVert f(x,\theta_t) - f^{\lin}(x,\theta_t) \rVert_2$. The bounds for the other terms can be done similarly. In particular, we will show by induction that
\begin{equation}
    \lVert f(x,\theta_t) - f^{\lin}(x,\theta_t) \rVert_2
    \le \eta_0 \frac{(\log n)^c}{\sqrt{n}} K'C KR_0\sum_{u=0}^{t-1}\left( 1-\eta_0c_{\beta} \right)^{u}.
\end{equation}
For $t=0$, this is true. Now, assume this holds for $s\le t$, then
\begin{align}
    &\lVert f(x,\theta_{t+1}) - f^{\lin}(x,\theta_{t+1}) \rVert_2 \\
    \le& \lVert f(x,\theta_{t}) - f^{\lin}(x,\theta_{t}) \rVert_2 
    + \lVert f(x,\theta_{t+1}) - f(x,\theta_{t}) - \left(f^{\lin}(x,\theta_{t+1})  - f^{\lin}(x,\theta_{t}) \right)\rVert_2.
\end{align}
By the chain rule and the fundamental theorem of calculus, we can write
\begin{align}
    & \lVert f(x,\theta_{t+1}) - f(x,\theta_{t}) - \left(f^{\lin}(x,\theta_{t+1})  - f^{\lin}(x,\theta_{t}) \right)\rVert_2 \\
    =& \norm{\int_0^{\eta_0} J\left(x,\theta_{t} - u \nabla_{\theta}\mathcal{L}^{\beta}(\theta_{t})\right) \nabla_{\theta}\mathcal{L}^{\beta}(\theta_{t}) du - \int_0^{\eta_0} J(x,\theta_0) \nabla_{\theta} \mathcal{L}^{\beta}(\theta_{t})du}_2 \\
    \le & \int_0^{\eta_0} \lVert J\left(x,\theta_{t} - u \nabla_{\theta}\mathcal{L}^{\beta}(\theta_{t})\right) - J(x,\theta_0) \rVert_2 \lVert  \nabla_{\theta} \mathcal{L}^{\beta}(\theta_{t}) \rVert_2 du \\
    \le & \eta_0 \frac{(\log n)^c}{\sqrt{n}}K'C KR_0\left( 1-\eta_0c_{\beta} \right)^{t}.
\end{align}
In the last step we used Theorem \ref{theorem: gradient descent geometric decay}.
Thus,
\begin{equation}
     \lVert f(x,\theta_{t+1}) - f^{\lin}(x,\theta_{t+1}) \rVert_2 \le \eta_0 \frac{(\log n)^c}{\sqrt{n}} K'C KR_0\left(\sum_{u=0}^{t-1}\left( 1-\eta_0c_{\beta} \right)^{u} + \left( 1-\eta_0c_{\beta} \right)^{t}\right).
\end{equation}
This finishes the induction proof. We can now further follow by using the geometric series that
\begin{align}
    \lVert f(x,\theta_t) - f^{\lin}(x,\theta_t) \rVert_2
    &\le \eta_0 \frac{(\log n)^c}{\sqrt{n}} K'C KR_0\sum_{u=0}^{t-1}\left( 1-\eta_0c_{\beta} \right)^{u} \\
    &< \eta_0 \frac{(\log n)^c}{\sqrt{n}} K'C KR_0\frac{1}{\eta_0 c_{\beta}} \\
    &= \frac{(\log n)^c}{\sqrt{n}} K' C^2.
\end{align}
For the other inequalities one can proceed in the same manner, using the fundamental theorem of calculus and the geometric series.
\end{proof}
\section{Shifting the network at initialization}
\label{appendix: Shifting the network at initialization}
Here, we prove that shifting the network at initialization makes it possible to include any prior mean, and compute the posterior mean with a single training run.
\shiftpredictions*
\begin{proof}
The Jacobian of the shifted network is equal to the Jacobian of the original network:
\begin{equation}
    J_{\tilde{f}}(x,\theta) = J_f(x,\theta).
\end{equation}
Define the shifted labels $\tilde{\mathbf{y}} := \mathbf{y} + f(\mathbf{x},\theta_0) - m(\mathbf{x})$. Then, $\tilde{f}(\mathbf{x}, \theta) - \mathbf{y} = f(\mathbf{x}, \theta) - \tilde{\mathbf{y}}$.
Thus, training the network $\tilde{f}$ with regularized gradient flow/descent is equivalent to training $f$ using the shifted labels $\tilde{\mathbf{y}}$, in the sense that the parameter update rule is the same. The latter leads to parameters $\theta_{\infty}$, for which (in the infinite-width limit)
\begin{align}
    f(\mathbf{x}',\theta_{\infty}) 
    &= f(\mathbf{x}', \theta_0) + \mathbf{\Theta}_{\mathbf{x',x}} \left( \mathbf{\Theta}_{\mathbf{x,x}} + \beta I \right)^{-1} (\tilde{\mathbf{y}} - f(\mathbf{x},\theta_0)).
\end{align}
By adding $-f(\mathbf{x},\theta_0) + m(\mathbf{x})$ to both sides of the equation, and using $\tilde{\mathbf{y}} - f(\mathbf{x},\theta_0) = \mathbf{y} - m(\mathbf{x})$, we get
\begin{align}
    \tilde{f}(\mathbf{x}', \theta_{\infty}) 
    = m(\mathbf{x}') + \mathbf{\Theta}_{\mathbf{x',x}} \left( \mathbf{\Theta}_{\mathbf{x,x}} + \beta I \right)^{-1} (\mathbf{y} - m(\mathbf{x})).
\end{align}
\end{proof}

\section{The Output of the Linearized Network is Gaussian over Random Initializations}
\label{appendix: gaussian-distribution-convergence}
\begin{corollary}[Convergence under Regularized Gradient Flow/Descent]\label{corollary} Under regularized gradient flow/descent training, the output of a wide neural network converges in distribution to a Gaussian over random initialization as the width $n \to \infty$. Specifically, for test inputs $\mathbf{x'}$ and $t \to \infty$, the mean and covariance of the output distribution at convergence are
\begin{align}
    \boldsymbol{\mu}(\mathbf{x'}) 
    =& \mathbf{\Theta}_{\mathbf{x', x}} \left(\mathbf{\Theta}_{\mathbf{x, x}} + \beta I\right)^{-1} \mathbf{y}, \\
    \boldsymbol{\Sigma}(\mathbf{x'}) 
    =& \mathbf{K}_{\mathbf{x', x'}} 
    + \mathbf{\Theta}_{\mathbf{x', x}} \left(\mathbf{\Theta}_{\mathbf{x, x}} + \beta I\right)^{-1} \mathbf{K}_{\mathbf{x, x}} \left(\mathbf{\Theta}_{\mathbf{x, x}} + \beta I\right)^{-1} \mathbf{\Theta}_{\mathbf{x, x'}} \\
    &- \mathbf{\Theta}_{\mathbf{x', x}} \left(\mathbf{\Theta}_{\mathbf{x, x}} + \beta I\right)^{-1} \mathbf{K}_{\mathbf{x, x'}}
    - \mathbf{K_{x',x}} \left(\mathbf{\Theta}_{\mathbf{x, x}} + \beta I\right)^{-1} \mathbf{\Theta}_{\mathbf{x, x'}}.
\end{align}
Note that the resulting covariance combines contributions from the NTK and NNGP kernels and therefore does not directly correspond to the posterior covariance of any GP.
\end{corollary}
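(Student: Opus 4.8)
The plan is to recognize the converged linearized prediction as an affine image of the random initialization outputs and then read off the limiting Gaussian law. Concretely, Theorem~\ref{thrm: first} gives, at a test point,
\begin{equation}
    f^{\lin}_{\theta_0}(\mathbf{x'},\theta_\infty) = f(\mathbf{x'},\theta_0) + \hat{\mathbf{\Theta}}_{\mathbf{x',x}}\left(\hat{\mathbf{\Theta}}_{\mathbf{x,x}}+\beta I\right)^{-1}\left(\mathbf{y}-f(\mathbf{x},\theta_0)\right),
\end{equation}
which is affine in the pair $\big(f(\mathbf{x'},\theta_0),\,f(\mathbf{x},\theta_0)\big)$ with coefficients determined by the empirical NTK. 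The only randomness is carried by the initialization outputs and by $\hat{\mathbf{\Theta}}$, so the whole argument reduces to understanding the joint limiting behaviour of these two objects and then pushing an affine map through.

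First I would control the coefficients: by Theorem~\ref{theorem. NTK initialization convergence}, $\hat{\mathbf{\Theta}}_{\mathbf{x',x}}\xrightarrow{p}\mathbf{\Theta}_{\mathbf{x',x}}$ and $\hat{\mathbf{\Theta}}_{\mathbf{x,x}}\xrightarrow{p}\mathbf{\Theta}_{\mathbf{x,x}}$, so the continuous mapping theorem yields $\hat{\mathbf{\Theta}}_{\mathbf{x',x}}(\hat{\mathbf{\Theta}}_{\mathbf{x,x}}+\beta I)^{-1}\xrightarrow{p}A:=\mathbf{\Theta}_{\mathbf{x',x}}(\mathbf{\Theta}_{\mathbf{x,x}}+\beta I)^{-1}$, a deterministic limit. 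Next I would invoke the NNGP limit \citep{lee2018dnnsgps}: jointly over training and test inputs, $\big(f(\mathbf{x'},\theta_0),\,f(\mathbf{x},\theta_0)\big)\xrightarrow{d}\mathcal{N}(0,\mathbf{K})$ with the block covariances $\mathbf{K}_{\mathbf{x',x'}},\mathbf{K}_{\mathbf{x,x}},\mathbf{K}_{\mathbf{x',x}}$. Since a factor converging in probability to a constant multiplies a factor converging in distribution, Slutsky's theorem gives convergence of the linearized output to $A\mathbf{y}+\zeta'-A\zeta$, where $(\zeta',\zeta)\sim\mathcal{N}(0,\mathbf{K})$ is the NNGP limit; as an affine image of a Gaussian this limit is itself Gaussian.

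The mean and covariance then follow by direct computation. The mean is $A\mathbf{y}+\mathbb{E}[\zeta']-A\,\mathbb{E}[\zeta]=A\mathbf{y}=\mathbf{\Theta}_{\mathbf{x',x}}(\mathbf{\Theta}_{\mathbf{x,x}}+\beta I)^{-1}\mathbf{y}$, matching $\boldsymbol{\mu}(\mathbf{x'})$. For the covariance, writing $A^\top=(\mathbf{\Theta}_{\mathbf{x,x}}+\beta I)^{-1}\mathbf{\Theta}_{\mathbf{x,x'}}$ (using symmetry of $\mathbf{\Theta}_{\mathbf{x,x}}$), the expansion $\mathrm{Cov}(\zeta'-A\zeta)=\mathbf{K}_{\mathbf{x',x'}}+A\mathbf{K}_{\mathbf{x,x}}A^\top-\mathbf{K}_{\mathbf{x',x}}A^\top-A\mathbf{K}_{\mathbf{x,x'}}$ reproduces the four terms of $\boldsymbol{\Sigma}(\mathbf{x'})$ exactly. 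Finally I would transfer the result from the linearized to the actual network: Theorem~\ref{thrm: 3.4} gives $\lVert f(\mathbf{x'},\theta_\infty)-f^{\lin}_{\theta_0}(\mathbf{x'},\theta_\infty^{\lin})\rVert_2\le C_2/\sqrt{n}$ with high probability, so this difference tends to $0$ in probability and a further Slutsky argument shows the true network output inherits the same Gaussian limit.

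The main obstacle is the correct joint handling of the two coupled sources of randomness---the empirical NTK and the initialization outputs---together with the order in which the $t\to\infty$ and $n\to\infty$ limits are taken. The two enabling facts are that the empirical NTK converges to a \emph{deterministic} limit, so that Slutsky's theorem decouples it cleanly from the Gaussian initialization outputs rather than forcing us to track their joint fluctuations, and that the bound in Theorem~\ref{thrm: 3.4} is uniform in $t$, which lets one first send $t\to\infty$ to obtain $\theta_\infty,\theta_\infty^{\lin}$ and only afterwards send $n\to\infty$, so the two limits do not interfere.
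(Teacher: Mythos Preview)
Your proposal is correct and follows essentially the same approach as the paper: express the converged output as an affine transformation of the jointly Gaussian initialization outputs $(f(\mathbf{x}',\theta_0),f(\mathbf{x},\theta_0))$ with coefficients given by the NTK, then invoke the NNGP limit and the standard covariance formula for an affine image of a Gaussian. You are in fact more careful than the paper's own proof, which simply writes the analytical NTK in place of the empirical one without explicitly appealing to Slutsky, and does not spell out the transfer from the linearized to the actual network via Theorem~\ref{thrm: 3.4}.
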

\begin{proof}
As we showed, for large enough layer width,
\begin{equation}
    f(\mathbf{x}', \theta_{\infty}) = f(\mathbf{x}', \theta_0) + \mathbf{\Theta}_{\mathbf{x',x}}\left( \mathbf{\Theta}_{\mathbf{x,x}} + \beta I \right)^{-1} (\mathbf{y} - f(\mathbf{x}, \theta_0)).
\end{equation}
$f(\mathbf{x}', \theta_0)$ and $f(\mathbf{x}, \theta_0)$ jointly converge to a Gaussian with mean zero and covariance matrix given through the NNGP-kernel $\mathbf{K}$.
From this, it directly follows that $f(\mathbf{x'},\theta_{\infty})$ converges to a Gaussian with the given mean and covariance matrices\footnote{The covariance matrix of $X + AY$, where $X$ and $Y$ are jointly Gaussian, is given by $\Sigma_X + A\Sigma_Y A^{\top} + A\Sigma_{X,Y} + \Sigma_{Y,X}A^{\top}$.}.
\end{proof}
%%%%%%%%%%%%%%%%%%%%%%%%%%%%%%%%%%%%%%%%%%%%%%%%%%%%%%%%%%%%

\end{document}